\def\eqref#1{equation~\ref{#1}}
\def\1{\bm{1}}
\DeclareMathAlphabet{\mathsfit}{\encodingdefault}{\sfdefault}{m}{sl}
\SetMathAlphabet{\mathsfit}{bold}{\encodingdefault}{\sfdefault}{bx}{n}
\DeclareMathOperator*{\argmax}{arg\,max}
\DeclareMathOperator*{\argmin}{arg\,min}
\theoremstyle{plain}
\newtheorem{theorem}{Theorem}[section]
\newtheorem{prop}[theorem]{Proposition}
\newtheorem{lemma}[theorem]{Lemma}
\newtheorem{corollary}[theorem]{Corollary}
\theoremstyle{definition}
\newtheorem{defi}[theorem]{Definition}
\newtheorem{assumption}[theorem]{Assumption}
\theoremstyle{remark}
\newcommand{\Expect}{\mathop{\mathbb{E}{}}\nolimits}
\newcommand{\defeq}{\mathrel{\mathop:}=}
\definecolor{darkgreen}{rgb}{0,0.5,0}
\title{What is the Solution for State-Adversarial \\ Multi-Agent Reinforcement Learning?}
\author{\name Songyang Han \email songyang.han@uconn.edu \\
      \addr School of Computing\\
      University of Connecticut\\
      Sony AI
      \AND
      \name Sanbao Su \email sanbao.su@uconn.edu \\
      \addr School of Computing\\
      University of Connecticut
      \AND
     \name Sihong He \email sihong.he@uconn.edu \\
      \addr School of Computing\\
      University of Connecticut
      \AND
     \name Shuo Han \email hanshuo@uic.edu \\
      \addr Department of Electrical and Computer Engineering\\
      University of Illinois Chicago
      \AND
      \name Haizhao Yang \email hzyang@umd.edu \\
      \addr Department of Mathematics and Department of Computer Science\\ University of Maryland College Park
     \AND
      \name Shaofeng Zou \email szou3@buffalo.edu \\
      \addr Department of Electrical Engineering and Department of Computer Science and Engineering \\ University at Buffalo, The State University of New York
      \AND
      \name Fei Miao \email fei.miao@uconn.edu \\
      \addr School of Computing\\
      University of Connecticut  
    }
\begin{document}

\maketitle

\begin{abstract}
Various methods for Multi-Agent Reinforcement Learning (MARL) have been developed with the assumption that agents' policies are based on accurate state information. However, policies learned through Deep Reinforcement Learning (DRL) are susceptible to adversarial state perturbation attacks. In this work, we propose a State-Adversarial Markov Game (SAMG) and make the first attempt to investigate different solution concepts of MARL under state uncertainties. Our analysis shows that the commonly used solution concepts of optimal agent policy and robust Nash equilibrium do not always exist in SAMGs. To circumvent this difficulty, we consider a new solution concept called robust agent policy, where agents aim to maximize the worst-case expected state value. We prove the existence of robust agent policy for finite state and finite action SAMGs. Additionally, we propose a Robust Multi-Agent Adversarial Actor-Critic (RMA3C) algorithm to learn robust policies for MARL agents under state uncertainties. Our experiments demonstrate that our algorithm outperforms existing methods when faced with state perturbations and greatly improves the robustness of MARL policies. Our code is public on \url{https://songyanghan.github.io/what_is_solution/}.
\end{abstract}

\section{Introduction}
Multi-Agent Reinforcement Learning (MARL) has been successfully used to solve problems such as multi-robot coordination~\citep{huttenrauch2017guided}, 
resource management~\citep{NEURIPS2020_71c1806c}, etc. However, Deep Reinforcement Learning (DRL) policies are vulnerable to adversarial state perturbation attacks~\citep{behzadan2017vulnerability,pattanaik2017robust,huang2017adversarial,lin2017tactics,xiao2019characterizing}. Even small changes to the state can lead to drastically different actions~\citep{huang2017adversarial,lin2017tactics}. To address this, it is important to develop robust policies that can handle adversarial state perturbations. An example of this is shown in Fig.~\ref{fig:teaser} where agents need to cooperate and avoid collisions while occupying landmarks. In (a) with no adversarial state, the agents are able to target different landmarks, but in (b) with adversarial state perturbations, agents head in the wrong direction.



\begin{figure}[ht]
  \centering
 \vspace{-8pt}
  \includegraphics[width=0.9\columnwidth]{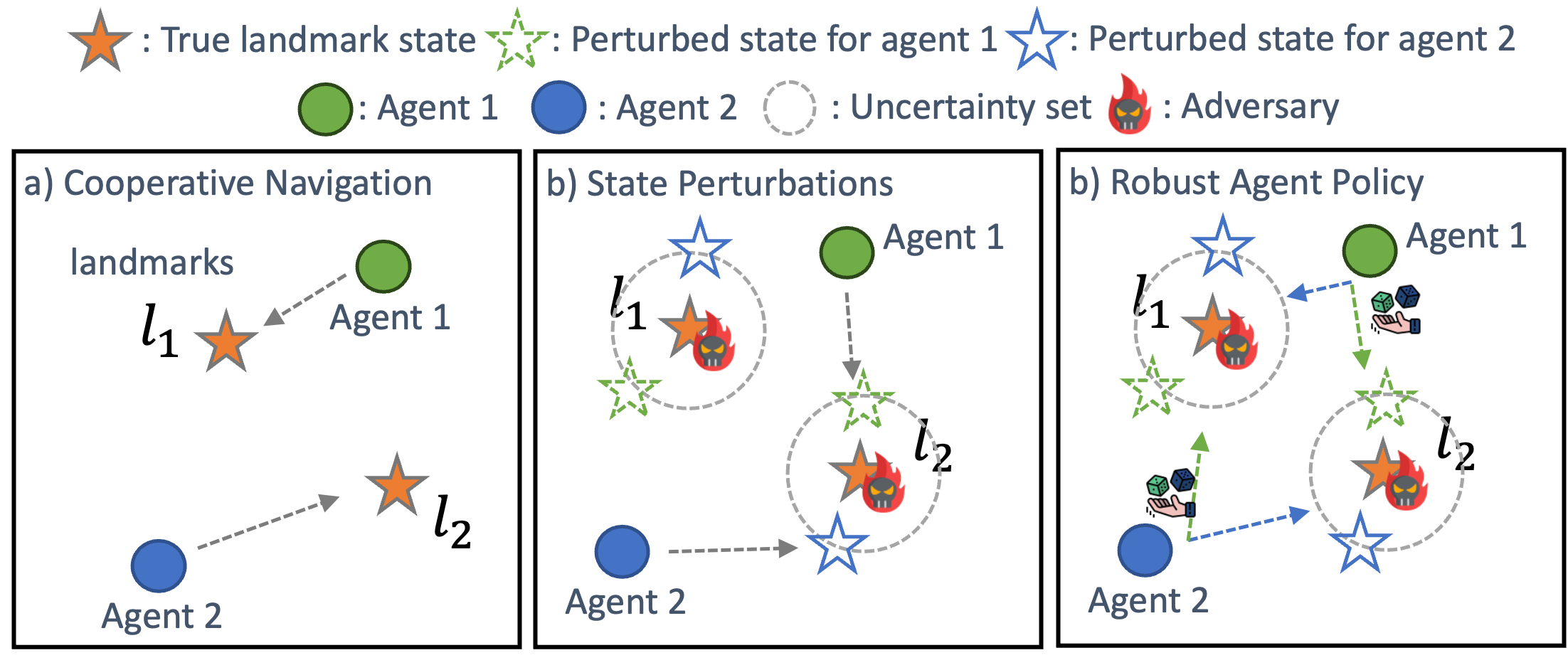}
  \vspace{-8pt}
  \caption{The agents' goal is to occupy and cover all landmarks, requiring cooperation to decide which landmark to cover. Figure a) illustrates the optimal target landmark for each agent without state perturbation. However, in figure b), an adversary perturbs the state observation of agents, causing agents to head in the wrong direction and leaving landmark 1 as uncovered. Our work demonstrates that traditional agent policies can be easily corrupted by adversarial state perturbations. To counter this, we propose a robust agent policy that maximizes average performance under worst-case state perturbations.}\label{fig:teaser}
  \vspace{-15pt}
\end{figure}


The adversarial state perturbation problem cannot be fully understood using existing research on the Partially Observable Markov Decision Process (POMDP) or Decentralized Partially Observable Markov Decision Process (Dec-POMDP)~\citep{oliehoek2016concise, lerer2020improving}, as the conditional observation probability cannot capture the worst-case uncertainty under adversarial attacks. Adversarial perturbations have a greater impact on an agent's policy than random noise~\citep{kos2017delving,pattanaik2018robust}. However, due to the complexity of interactions among agents and adversaries, it remains challenging to formally analyze the existence of optimal or equilibrium solutions under adversarial state perturbations in MARL. Therefore, it is essential to study the fundamental properties of MARL under adversarial state perturbations.

To the best of our knowledge, we make the first attempt to investigate different solution concepts of robust MARL under adversarial state perturbations. We formulate a State-Adversarial Markov Game (SAMG) to study the properties and solution concepts of MARL under adversarial state perturbations. We prove that a state-robust totally optimal agent policy or robust total Nash equilibrium does not always exist in such scenarios. Instead, we consider a new solution concept, the robust agent policy, and prove its existence for finite state and action spaces. We design an algorithm, called Robust Multi-Agent Adversarial
Actor-Critic (RMA3C), to train robust policies for all agents under adversarial state perturbations. The algorithm uses a Gradient Descent Ascent (GDA) optimizer to update each agent's and adversary's policy network. Results from our experiments show that the proposed RMA3C algorithm improves the robustness of the agents' policies compared to existing MARL methods. 

In summary, the main contributions of this work are: 
\vspace{-8pt}
\begin{itemize}
    \item We study the fundamental properties of MARL under adversarial state perturbations and prove that widely used solution concepts such as optimal agent policy or robust Nash equilibrium do not always exist. 
    \vspace{-5pt}
    \item We consider a new solution concept, robust agent policy, where each agent aims to maximize the worst-case expected state value. We prove the existence of a robust agent policy for SAMGs with finite state and action spaces. We propose a Robust Multi-Agent Adversarial Actor-Critic (RMA3C) algorithm to solve the challenge of training robust policies under adversarial state perturbations based on gradient descent ascent algorithm. 
    \vspace{-5pt}
    \item We empirically evaluate our proposed RMA3C algorithm. Our algorithm outperforms baselines with random or adversarial state perturbations and improves agent policies' robustness under state uncertainties.
\end{itemize}


\section{Related Work}
\label{sec:related_work}

\paragraph{Multi-Agent Reinforcement Learning (MARL)} 

The MARL has a long history in the AI field~\citep{littman1994markov,hu1998multiagent,busoniu2008comprehensive}. Recent works have been investigated to encourage the collaboration of the agents by assigning rewards appropriately, such as a value decomposition network~\citep{sunehag2018value,rashid_NeurIPS20, su2021value}, subtracting a counterfactual baseline~\citep{foerster2017counterfactual}, or an implicit method~\citep{NEURIPS2020_8977ecbb}. Multi-Agent Deep Deterministic Policy Gradient (MADDPG) proposes a centralized Q-function to alleviate the problem caused by the non-stationary environment \citep{lowe2017multi}. The scalability issue of MARL can be alleviated by adding attention to the critic~\citep{iqbal2019actor}, using neighbor information~\citep{NEURIPS2020_168efc36}, or using V-learning~\citep{jin2021v}. 
The ``team stochastic game''~\citep {muniraj2018enforcing,phan2020learning} splits the MARL agents into two teams to compete. 
However, during training, all methods assume that agents get the true state value. None of the recent MARL advances specifies how to deal with perturbed state values by malicious adversaries.

\paragraph{Robust Reinforcement Learning} 
Most existing robust MARL works focus on uncertainties in reward, transition dynamics, and training partners' policies, while our work focuses on uncertainties in the state. Robust reinforcement learning can be traced back to~\cite{morimoto2005robust} in the single-agent setting. With the advent of deep learning techniques, the robust MARL has been recently studied considering different types of uncertainties such as reward~\citep{chen2012tractable,zhangkq2020robust}, transition dynamics~\citep{zhangkq2020robust,sinha2020formulazero,hu2020robust,yu2021robust,wang2023robust}, training partner's type~\citep{shen2021robust}, training partners' policies~\citep{li2019robust,van2020robust,sun2021romax,sun2022certifiably}.
The work in~\citep{zhangkq2020robust} considers the robust equilibrium of multi-agents with reward uncertainties where agents can access true state information at each stage. The work in~\cite{shen2021robust} considers uncertain training partner's type (e.g. adversary, neutral, or teammate) to the protagonist in two-player scenarios. The M3DDPG algorithm extends the MADDPG to get a robust policy for the worst situation by assuming all the training partners are adversaries~\citep{li2019robust}. However, none of the above MARL works consider the state perturbations. 

For adversarial state perturbations, there are some works~\citep{mandlekar2017adversarially,pinto2017robust,pattanaik2018robust,zhang2020robust,zhang2021robust,liang2022efficient} considering a robust policy in single-agent reinforcement learning. Though the work~\citep{lin2020robustness} studies state perturbation, only one single agent's state observation can be perturbed in their MARL. The work~\citep{he2023robust} shows Nash equilibrium exists under a specific condition (bijective mapping for adversary policies). However, in this work, we show the Nash equilibrium is not a good solution concept as it can be corrupted by state perturbation adversaries. We also propose a new robust agent policy concept for state-adversarial MARL that is proven to exist.

\section{State-Adversarial Markov Game (SAMG)}
\label{sec:problem_formulation}

We formulate a State-Adversarial Markov Game (SAMG) $G = (\mathcal{N}, \mathcal{S}, \mathcal{A}, r, \mathcal{P}_s, p, \gamma, \Pr(s_0))$ with $n$ agents in the agent set $\mathcal{N} = \{1, ..., n\}$. Each agent $i$ is associated with an action $a^i \in \mathcal{A}^i$. The global joint action is $ a = (a^1, ..., a^n) \in \mathcal{A}$, $\mathcal{A} \defeq \mathcal{A}^1 \times \cdots \times \mathcal{A}^n$. The global joint state is $ s\in \mathcal{S}$. The probability distribution of the initial state is $\Pr(s_0)$. All agents share a stage-wise reward function $r: \mathcal{S} \times \mathcal{A} \rightarrow \mathbb{R}$.
We consider that each agent is associated with an adversary as shown in Fig.~\ref{fig:decpomdp_samg}.
Each adversary decides a perturbed state $\rho^i \in \mathcal{S}$ for the corresponding agent as the agent's perturbed knowledge or observation about the global state. We denote the joint perturbed state as $\rho \defeq [\rho^i]_{i \in \mathcal{N}}$. We consider the admissible perturbed state as a task-specific ``neighboring'' state of $s$, e.g. the bounded sensor measurement errors, to model the challenges of getting accurate states for multi-agent systems like connected and autonomous vehicles and multi-robots systems~\citep{liu2021robust,kothandaraman2021ss}.
To analyze a realistic problem, the power of the state perturbation should also be limited~\citep{everett2021certifiable, zhang2020robust}. We define an admissible perturbed state set $\mathcal{P}_s$ to restrict the perturbed state only to be within a predefined subset of states such that $\rho \in \mathcal{P}_s$:

\begin{defi}[\textbf{Admissible Perturbed State Set}]
\label{def:perturbation_set}
We consider the set of admissible perturbed state for agent $i$ at state $s$ as $\mathcal{P}^i_s \subseteq \mathcal{S}$. Denote the joint admissible perturbed state set at state $s$ as $\mathcal{P}_s \defeq \mathcal{P}^1_s \times \cdots \times \mathcal{P}^n_s$.
\end{defi}


Note that the true state is included in the admissible perturbed state set, i.e., $s \in \mathcal{P}^i_s$ for any $i \in \mathcal{N}$. For example, consider a 2-agent 3-state system with $\mathcal{S} = \{ s_1, s_2, s_3\}$. When the current true state is $s_1$ for both agents, adversary $1$ perturbs agent 1's state observation within $\mathcal{P}^1_{s_1} = \{s_1, s_2\}$; adversary $2$ perturbs agent 2's state observation within $\mathcal{P}^2_{s_1} = \{s_1, s_3\}$.

\begin{figure}[ht]
\centering
  \includegraphics[width=0.8\columnwidth]{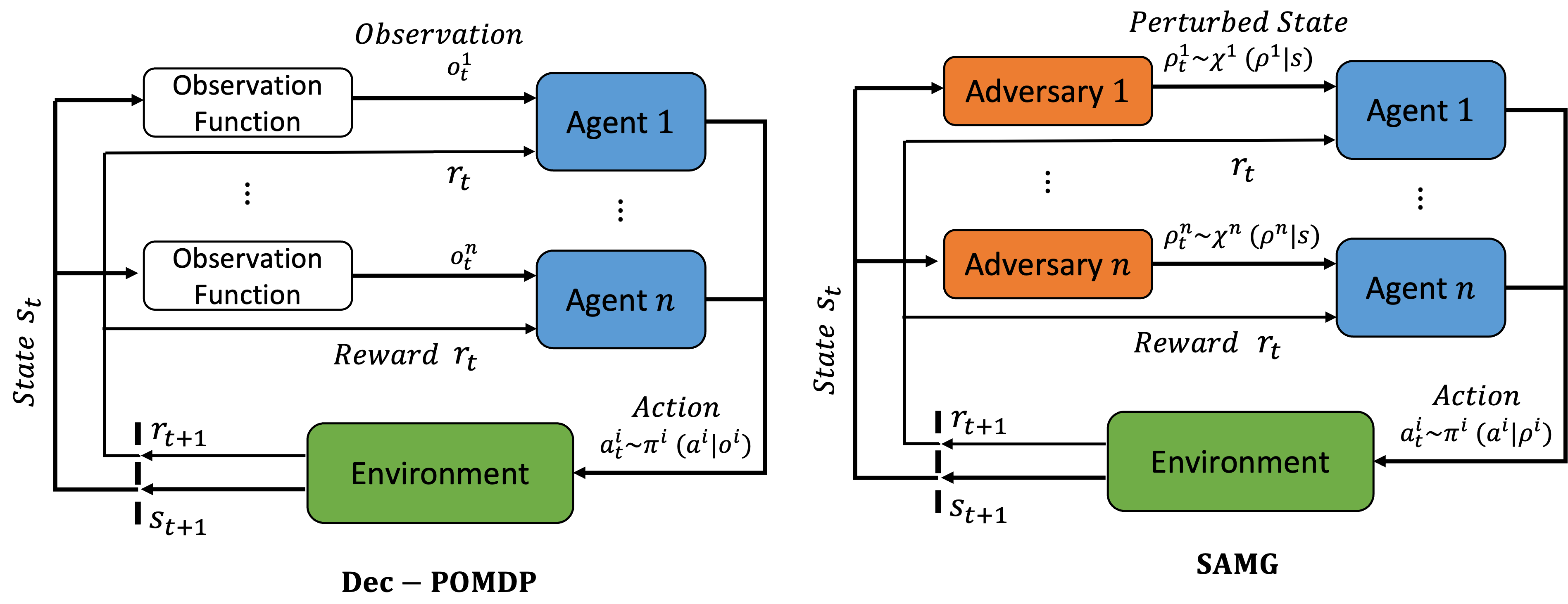}
  \caption{Comparison between Dec-POMDP and SAMG. In Dec-POMDP, the observation probability function is fixed, and it will not change according to the change of the agent policy. However, in SAMG the adversary policy is not a fixed policy, it may change according to the agents' policies and always select the worst-case state perturbation for agents. In SAMG, each agent is associated with an adversary to perturb its knowledge or observation of the true state. Agents want to find a policy $\pi$ to maximize their total expected return while adversaries want to find a policy $\chi$ to minimize agents' total expected return.}\label{fig:decpomdp_samg}
  \vspace{-10pt}
\end{figure}

The state perturbation reflects the state uncertainty from the perspective of each agent, but it does not change the true state of multi-agent systems. The state transition function is $p: \mathcal{S} \times \mathcal{A} \rightarrow \Delta(\mathcal{S})$, where $\Delta(\mathcal{S})$ is a probability simplex denoting the set of all possible probability measures on $\mathcal{S}$. The state still transits from the true state to the next state. Each agent is associated with a policy $\pi^i: \mathcal{S} \rightarrow \Delta(\mathcal{A}^i) $ to choose an action $a^i \in \mathcal{A}^i$ given the perturbed state $\rho^i$. Note that the input of $\pi^i$ is the perturbed state $\rho^i$. The perturbed state affects each agent's action. The set $\Delta(\mathcal{A}^i)$ includes all possible probability measures on $\mathcal{A}^i$. We use $\pi = (\pi^1, \pi^2, ..., \pi^n)$ to denote the joint agent policy.

The adversary policy, i.e. the state perturbation policy, associated with agent $i$ is $\chi^i(\cdot|s): \mathcal{S} \rightarrow \Delta(\mathcal{P}^i_s)$, where the input of $\chi^i$ is the true state $s \in \mathcal{S}$. The power of the adversary is limited by the admissible perturbed state set  $\mathcal{P}^i_s$. 
We denote the joint adversary policy as $\chi = (\chi^1, \chi^2, ..., \chi^n)$. The agents want to find a policy $\pi$ to maximize their total expected return while adversaries want to find a policy $\chi$ to minimize the agents' total expected return. The total expected return is $\Expect[\sum_{t=0}^\infty \gamma^t r_{t+1}(s_t, a_t)|s_0, a_t \sim \pi(\cdot|\rho_t), \rho_t \sim \chi(\cdot|s_t)]$ where $\gamma$ is a discount factor.

Our SAMG problem cannot be solved by the existing work for single-agent RL with adversarial state perturbations~\citep{mandlekar2017adversarially,pattanaik2018robust,zhang2020robust,zhang2021robust,liang2022efficient}. Each agent's action in SAMG is selected based on its own perturbed state observation and the state knowledge of each agent can be different after adversarial perturbations, so the SAMG problem cannot be solved by the above single-agent RL where the agent has only one state observation at each stage.

Our SAMG problem cannot be solved by the existing work in the Decentralized Partially Observable Markov Decision Process (Dec-POMDP)~\citep{bernstein2002complexity,oliehoek2016concise} as shown in Fig.~\ref{fig:decpomdp_samg}. In contrast, the policy in SAMG needs to be robust under a set of admissible perturbed states. The adversary aims to find the worst-case state perturbation policy $\chi$ to minimize the MARL agents' total expected return, but the Dec-POMDP cannot characterize the worst-case state perturbations. Moreover, agents usually cannot get the true state $s$ in Dec-POMDP, while in SAMG, the true state $s$ is known by the adversaries. Adversaries can take the true state information and use it to select state perturbations for the MARL agents. The following proposition~\ref{prop:pomdp} shows that under a fixed adversarial policy, the SAMG problem becomes a Dec-POMDP. However, in SAMG the adversary policy is not a fixed policy, it may change according to the agents' policies (see Theorem~\ref{prop:optimal_adv} for detail) and always select the worst-case state perturbation for agents. The proof of proposition~\ref{prop:pomdp} is in Appendix~\ref{sec:connection}. We also give a two-agent two-state SAMG that cannot be solved by Dec-POMDP in Appendix~\ref{sec:connection}.

\begin{prop}
\label{prop:pomdp}
When the adversary policy is a fixed policy, the SAMG problem becomes a Dec-POMDP~\citep{oliehoek2016concise}. 
\end{prop}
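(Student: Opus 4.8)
The plan is to show that, once we fix an adversary policy $\chi$, the induced dynamics of the SAMG are exactly those of a Dec-POMDP, by explicitly constructing the seven-tuple of a Dec-POMDP from the SAMG data plus $\chi$. Recall a Dec-POMDP is specified by a set of agents, a state space, a joint action space, a transition kernel, a joint observation space, an observation kernel conditioned on (next state, joint action) (or on state, depending on the convention), a reward function, a discount factor, and an initial state distribution. The idea is that the ``observation'' each agent receives in the Dec-POMDP is precisely its perturbed state $\rho^i$, and the fixed adversary policy $\chi$ plays the role of the observation kernel.

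Concretely, I would define the Dec-POMDP $M_\chi = (\mathcal{N}, \mathcal{S}, \mathcal{A}, p, \mathcal{S}, O_\chi, r, \gamma, \Pr(s_0))$, reusing the agent set, state space, joint action space, transition function $p$, reward $r$, discount $\gamma$, and initial distribution $\Pr(s_0)$ directly from $G$. The joint observation space is taken to be $\mathcal{S}$ itself (each agent observes a perturbed state $\rho^i \in \mathcal{S}$), and the observation kernel is defined by
\begin{equation}
O_\chi(\rho \mid s) \defeq \chi(\rho \mid s) = \prod_{i \in \mathcal{N}} \chi^i(\rho^i \mid s),
\end{equation}
where the product form reflects that each adversary $i$ perturbs independently given $s$. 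Since $\chi$ is fixed, $O_\chi$ is a fixed, well-defined stochastic kernel from $\mathcal{S}$ to $\Delta(\mathcal{S}^n)$, so it is a legitimate Dec-POMDP observation model; the constraint $\rho^i \in \mathcal{P}^i_s$ is automatically respected because $\chi^i(\cdot \mid s) \in \Delta(\mathcal{P}^i_s)$. I would then verify that the two processes generate the same distribution over trajectories: in the SAMG with policies $(\pi, \chi)$ we have $s_t \to \rho_t \sim \chi(\cdot \mid s_t) \to a_t \sim \pi(\cdot \mid \rho_t) \to s_{t+1} \sim p(\cdot \mid s_t, a_t)$, which matches step-for-step the Dec-POMDP evolution under $O_\chi$ and joint policy $\pi$, and the total expected return $\Expect[\sum_{t=0}^\infty \gamma^t r(s_t, a_t)]$ is identical in both models. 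Hence maximizing over $\pi$ in the SAMG with $\chi$ fixed is exactly the Dec-POMDP planning problem for $M_\chi$.

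One minor technical point to handle carefully is the alignment of conventions: some formalizations of Dec-POMDP condition the observation on the pair (next state, previous joint action) rather than on the current state. If the cited formulation~\citep{oliehoek2016concise} uses the former, I would either note that the state-conditioned observation model is the standard special case obtained by letting the observation kernel ignore the action argument, or absorb a one-step index shift; this is cosmetic and does not affect the reduction. The main (and really the only) obstacle is thus not mathematical depth but bookkeeping: making sure the tuple components line up with the chosen Dec-POMDP definition and that the independence/product structure of $\chi$ is compatible with the (typically factored or joint) observation model of a Dec-POMDP. Given that, the proof is essentially a definitional unfolding, and the full argument is deferred to Appendix~\ref{sec:connection} as the excerpt indicates.
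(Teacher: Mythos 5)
Your construction is correct and matches the paper's own proof essentially exactly: both define the Dec-POMDP by reusing $(\mathcal{N},\mathcal{S},\mathcal{A},r,p,\gamma,\Pr(s_0))$, taking the joint observation space to be $\mathcal{S}$, and setting the observation kernel $O(\rho\mid s)=\chi(\rho\mid s)$ (zero outside $\mathcal{P}_s$). Your additional remarks on trajectory-distribution equivalence and observation-kernel conventions are slightly more explicit than the paper's definitional unfolding but do not change the argument.
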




\begin{prop}
\label{prop:markov_game}
When the adversary policy is a fixed bijective mapping from $\mathcal{S}$ to $\mathcal{S}$, the SAMG problem becomes a Markov game. 
\end{prop}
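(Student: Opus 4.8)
The plan is to show that when the joint adversary policy $\chi$ is a fixed deterministic bijection $\phi : \mathcal{S} \to \mathcal{S}$ (that is, each $\chi^i(\cdot\mid s)$ puts all its mass on a single state $\phi^i(s)$, and $\phi = (\phi^1,\dots,\phi^n)$ or, in the single-perturbed-state reading, $\phi$ itself is a bijection of $\mathcal{S}$), the SAMG collapses to an ordinary Markov game on the same state space $\mathcal{S}$, the same agent set $\mathcal{N}$, the same action sets $\mathcal{A}^i$, the same transition kernel $p$, the same discount $\gamma$, and the same initial distribution $\Pr(s_0)$, but with a \emph{relabeled} reward. Concretely, I would define the induced Markov game $G' = (\mathcal{N}, \mathcal{S}, \mathcal{A}, r', p, \gamma, \Pr(s_0))$ by composing every state-dependent object the agents actually see with $\phi$. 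The key observation is that since $\chi$ is fixed, the agents no longer face an adversary at all: they face a fixed, state-dependent relabeling of their observations, and a fixed relabeling that is a bijection loses no information, so it can be absorbed into the agents' policies.

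The steps, in order, are as follows. First, I would make precise the correspondence between policies: given an agent policy $\pi^i : \mathcal{S} \to \Delta(\mathcal{A}^i)$ in the SAMG (whose input is the perturbed state), define $\tilde{\pi}^i \defeq \pi^i \circ \phi^i$, which is a legitimate policy $\mathcal{S} \to \Delta(\mathcal{A}^i)$ in $G'$ taking the true state as input; because $\phi^i$ is a bijection, the map $\pi^i \mapsto \tilde\pi^i$ is a bijection on the policy class, so optimizing over SAMG agent policies is the same as optimizing over $G'$ policies. Second, I would write out the trajectory distribution in the SAMG under fixed $\chi = \phi$: at true state $s_t$ the agent $i$ observes $\rho^i_t = \phi^i(s_t)$, plays $a^i_t \sim \pi^i(\cdot\mid \phi^i(s_t)) = \tilde\pi^i(\cdot\mid s_t)$, receives stage reward $r(s_t,a_t)$, and the true state advances by $s_{t+1}\sim p(\cdot\mid s_t,a_t)$. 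Comparing this with the trajectory distribution of the Markov game $G'$ under $\tilde\pi$ with reward $r' \equiv r$, the two are identical as stochastic processes over $(s_t,a_t)_{t\ge 0}$. Hence the total expected return $\Expect\!\left[\sum_{t=0}^\infty \gamma^t r(s_t,a_t)\right]$ is the same object in both, and all solution concepts (value functions, best responses, Nash equilibria) transfer verbatim along the bijection $\pi \leftrightarrow \tilde\pi$.

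Third, I would note that one does not even need to relabel the reward: because only the agents' \emph{inputs} are perturbed and the reward and transition are functions of the \emph{true} state, the correct statement is that $G'$ has the \emph{same} reward $r$ and transition $p$ as the original game and differs only in that the adversary layer has been removed. (If instead one prefers the viewpoint where the adversary's bijection also acts on the state fed to the reward — it does not in our model — then $r'(s,a) = r(\phi^{-1}(s),a)$ would appear; I will state the version consistent with the SAMG definition given above.) Finally, I would conclude that $G'$ is by inspection a standard Markov game, so the SAMG with a fixed bijective adversary policy is a Markov game, and I would point out that existence of Nash equilibria for $G'$ (e.g.\ Fink's theorem for finite discounted stochastic games) then applies — this is exactly the mechanism behind the result of \citet{he2023robust}, and it clarifies why the bijectivity hypothesis is doing all the work: a non-bijective fixed perturbation would merge states and genuinely destroy information, leaving a Dec-POMDP (Proposition~\ref{prop:pomdp}) rather than a Markov game.

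The part I expect to require the most care is the bookkeeping in the second step: being explicit that the measurability/stochasticity of $\chi$ being a degenerate (deterministic) distribution is what makes $\rho_t$ a deterministic function of $s_t$, and that the composition $\pi^i\circ\phi^i$ is well-defined and ranges over all of the agent policy space precisely because $\phi^i$ is onto as well as one-to-one. Everything else is a direct substitution, so I would keep it brief and emphasize the information-preservation intuition rather than grinding through the induction on $t$ for the trajectory law.
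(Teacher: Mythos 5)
Your proof is correct, but it formalizes the equivalence in the opposite (dual) direction from the paper. The paper pushes the \emph{game} forward along the bijection: it takes the perturbed state $s_{\mathrm{new}}=\chi(s)$ as the new state variable and defines a relabeled reward $r_{\mathrm{new}}(\chi(s),a)=r(s,a)$, transition $p_{\mathrm{new}}(\chi(s')\mid\chi(s),a)=p(s'\mid s,a)$, and initial distribution $\Pr(s_{\mathrm{new},0}=\chi(s_0))=\Pr(s_0)$, leaving the agent policies acting on their (perturbed) inputs unchanged. You instead pull the \emph{policies} back along the bijection, setting $\tilde\pi^i=\pi^i\circ\phi^i$, and keep the game objects $(\mathcal{S},\mathcal{A},r,p,\gamma,\Pr(s_0))$ untouched; the heart of your argument is that $\pi^i\mapsto\tilde\pi^i$ is a bijection of the policy class and that the induced law of $(s_t,a_t)_{t\ge 0}$ is identical, so returns and solution concepts transfer. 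Both constructions are valid and yield isomorphic Markov games; yours has the advantage of making the policy-space correspondence and the trajectory-law identification explicit (the paper leaves these implicit), while the paper's version makes visible the game the agents effectively perceive. One small caveat: in your third step you present the relabeled-reward form $r'(s,a)=r(\phi^{-1}(s),a)$ as inconsistent with the SAMG model and set it aside, but that form is exactly the paper's construction and is not a claim that the adversary perturbs the reward's input --- it is only the change of coordinates forced by choosing $\rho$ rather than $s$ as the state label. Your observation that bijectivity is what prevents information loss (and that a non-injective fixed perturbation would instead give a Dec-POMDP, matching Proposition~\ref{prop:pomdp}) is a useful addition not spelled out in the paper.
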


Additionally, our SAMG problem cannot be solved by existing methods for robust Markov games considering the uncertainties from reward~\citep{chen2012tractable,zhangkq2020robust}, transition dynamics~\citep{zhangkq2020robust,hu2020robust,sinha2020formulazero,yu2021robust,wang2023robust}, training partner's policies~\citep{li2019robust,van2020robust}. These methods are not applicable to our problem because the agents do not have access to the true state information after adversarial perturbations.




\section{Solution Concepts}
\label{sec:solution_concept}

In this section, we delve into the solution concepts of the SAMG. We formally define key concepts such as an optimal adversary policy, state-robust totally optimal agent policy, and robust total Nash equilibrium. However, we also demonstrate that under an optimal adversary policy, the existence of a state-robust totally optimal agent policy or robust total Nash equilibrium is not guaranteed as they can be easily corrupted by adversaries. Therefore, we introduce a new objective, the worst-case expected state value, and prove that there exists a robust agent policy to maximize it. A concept diagram of this section is shown in Fig.~\ref{fig:concept_diagram}.

\begin{figure}[t]
  \centering
 \vspace{-8pt}
  \includegraphics[width=\columnwidth]{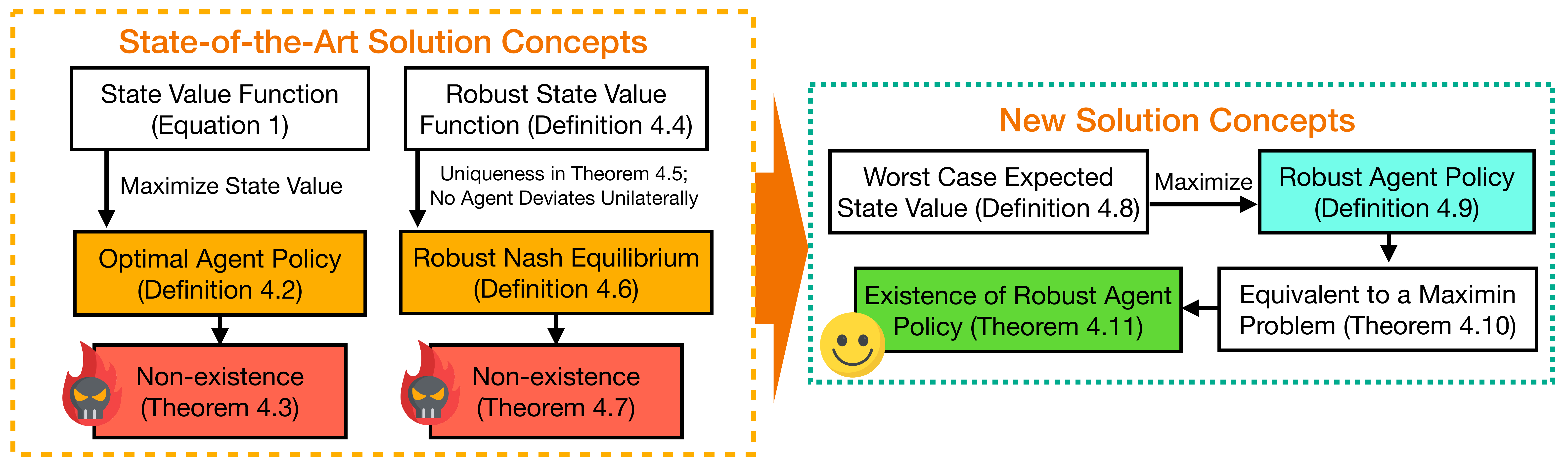}
  \caption{Solution concepts for the SAMGs. We first examine the widely used concepts (optimal agent policy and robust Nash Equilibrium) and demonstrate their non-existence under adversarial state perturbations. In response, we consider a new objective, the worst-case expected state value, and a new solution concept, the robust agent policy.}\label{fig:concept_diagram}
  \vspace{-15pt}
\end{figure}

We first introduce the widely used state value function concept for our proposed SAMG as follows:
\begin{equation}
    V_{\pi, \chi}(s) = \Expect_{a_t \sim \pi(\cdot|\rho_t), \rho_t \sim \chi(\cdot|s_t)} \left[ \sum_{t=0}^\infty \gamma^t r_{t+1}(s_t, a_t)| s_0 = s \right],
    \label{def_state_value}
\end{equation}
where $\gamma$ is the discount factor. 

\subsection{Optimal Adversary Policy}
For a fixed agent policy $\pi$, define the worst-case state value function $\bar{V}_\pi$ under $\pi$ by
\begin{equation}
\label{equ:optimalad}
    \bar{V}_\pi (s) \defeq \min_{\chi} V_{\pi, \chi}(s)
\end{equation}
for all $s \in \mathcal{S}$. An adversary policy $\chi^*$ is said to be \textit{optimal} against an agent policy $\pi$ if 
\begin{equation}
    V_{\pi, \chi^*}(s) = \bar{V}_\pi (s)
\end{equation}
for all $s \in \mathcal{S}$.
The following proposition shows the existence of an optimal adversary for an SAMG.

\begin{prop}[\textbf{Existence of Optimal Adversary Policy}.]
\label{prop:optimal_adv}
Given an SAMG, for any given agent policy, there exists an optimal adversary policy. 
\end{prop}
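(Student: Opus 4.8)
The plan is to fix an arbitrary agent policy $\pi$ and observe that, once $\pi$ is fixed, the adversaries collectively face a \emph{standard} infinite-horizon discounted Markov decision problem: the joint adversary policy $\chi(\cdot\mid s)$ is the only free variable, the "action set" at state $s$ is the compact set $\mathcal{P}_s = \mathcal{P}^1_s \times \cdots \times \mathcal{P}^n_s$ (a finite set under the finite-state assumption), and the one-step expected reward, obtained by composing $\chi$ with the fixed $\pi$ and the fixed transition kernel $p$, is
\begin{equation}
    \tilde r_\pi(s,\rho) \;=\; \Expect_{a \sim \pi(\cdot\mid\rho)}\!\big[\, r(s,a) \,\big],
\end{equation}
with the induced transition kernel $\tilde p_\pi(s'\mid s,\rho) = \Expect_{a\sim\pi(\cdot\mid\rho)}[\,p(s'\mid s,a)\,]$. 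The adversary wants to \emph{minimize} the discounted return, so this is just an ordinary MDP (after negating rewards), and $\bar V_\pi$ is its optimal value function.

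The key steps, in order, are: (i) define the Bellman optimality operator $T_\pi$ for the adversary's MDP by
\begin{equation}
    (T_\pi V)(s) \;=\; \min_{\rho \in \mathcal{P}_s} \Big\{ \tilde r_\pi(s,\rho) + \gamma \sum_{s' \in \mathcal{S}} \tilde p_\pi(s'\mid s,\rho)\, V(s') \Big\},
\end{equation}
noting the minimum is attained because $\mathcal{P}_s$ is finite; (ii) show $T_\pi$ is a $\gamma$-contraction on $(\R^{\mathcal S}, \|\cdot\|_\infty)$ by the usual argument, hence it has a unique fixed point $\bar V_\pi$ by Banach's fixed point theorem; (iii) identify this fixed point with $\min_\chi V_{\pi,\chi}$ — that is, show the fixed point of $T_\pi$ really is the infimum over all (history-dependent, or at least all stationary) adversary policies of $V_{\pi,\chi}$, which is the standard MDP fact that the optimal value is the fixed point of the Bellman optimality operator; and (iv) construct $\chi^*$ by letting $\chi^*(\cdot\mid s)$ put all its mass on a minimizer $\rho^*(s) \in \arg\min_{\rho\in\mathcal P_s}\{\cdots\}$ in the definition of $(T_\pi \bar V_\pi)(s)$. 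One then checks that $V_{\pi,\chi^*}$ satisfies the fixed-point equation $V_{\pi,\chi^*} = T_\pi V_{\pi,\chi^*}$ (because for this deterministic stationary $\chi^*$ the Bellman evaluation equation coincides with $T_\pi$ at the chosen actions), so by uniqueness $V_{\pi,\chi^*} = \bar V_\pi$, i.e. $\chi^*$ is optimal against $\pi$.

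The main obstacle — really the only non-bookkeeping point — is step (iii): carefully justifying that the fixed point of the componentwise-minimum Bellman operator equals the infimum of $V_{\pi,\chi}$ over the admissible class of adversary policies, and that this infimum is attained by a \emph{stationary deterministic} adversary. This is classical MDP theory (optimality of the Bellman fixed point, existence of an optimal stationary policy for finite discounted MDPs), but it needs the adversary's "action sets" $\mathcal{P}_s$ to be well-behaved — finiteness suffices, and is guaranteed here by the finite-state assumption, which also guarantees $s \in \mathcal{P}^i_s \neq \emptyset$ so the minimization is over a nonempty finite set. A minor point to handle is that $\chi^i$ is formally allowed to be randomized over $\Delta(\mathcal{P}^i_s)$; but since the objective is linear in each $\chi^i(\cdot\mid s)$, a deterministic choice attains the minimum, so restricting to deterministic $\chi^*$ loses nothing. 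Everything else (contraction estimate, measurability, interchanging expectation and sum under discounting) is routine.
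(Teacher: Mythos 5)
Your proposal is correct and follows essentially the same route as the paper: both reduce the problem to an ordinary finite discounted MDP for the joint adversary (with one-step reward and transitions obtained by averaging $r$ and $p$ over $a \sim \pi(\cdot\mid\rho)$, the paper negating the reward to phrase it as maximization) and then invoke standard MDP theory for the existence of an optimal stationary policy. The extra detail you give in steps (ii)--(iv) simply spells out the classical facts the paper cites from Puterman.
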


The key process of the proof in Appendix~\ref{sec:optimal_adversary_policy} is constructing an MDP for the adversary where the adversary gets the negative of the agent reward. Since for an MDP with finite state and finite action spaces, there always exists an optimal policy [Theorem 6.2.10 in \cite{puterman2014markov}], an optimal adversary policy of the corresponding SAMG always exists as well. 

\subsection{State-robust Totally Optimal Agent Policy}
An optimal adversary policy is very powerful and it can easily corrupt the MARL agents' policies through state perturbations. We first define a state-robust totally optimal agent policy as follows:

\begin{defi}[\textbf{State-robust Totally Optimal Agent Policy}]
An agent policy $\pi^*$ is a state-robust totally optimal agent policy if $\bar{V}_{\pi^*} (s) \geq \bar{V}_\pi (s)$ for any $\pi$ and all $s \in \mathcal{S}$.
\end{defi}

In the following theorem, we show that a state-robust totally optimal agent policy $\pi^*$ does not always exist for SAMGs under an optimal state perturbation adversary.

\begin{theorem}[\textbf{Non-existence of State-robust Totally Optimal Agent Policy}]
\label{theorem:notoptimal}
A state-robust totally optimal agent policy does not always exist for SAMGs.
\end{theorem}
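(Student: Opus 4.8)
The plan is to prove non-existence by producing an explicit finite SAMG in which \emph{every} candidate agent policy is strictly beaten at some state by another policy, once the worst-case adversary is applied. Since a state-robust totally optimal policy $\pi^*$ would have to satisfy $\bar{V}_{\pi^*}(s)\ge\bar{V}_\pi(s)$ for \emph{all} $\pi$ and \emph{all} $s$ simultaneously, it suffices to exhibit two policies whose $\bar V$-profiles are incomparable in a way that no single profile can weakly dominate both.

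First I would build the example around the mechanism illustrated in Fig.~\ref{fig:decpomdp_samg}: because each agent acts on its \emph{perturbed} observation, an adversary whose admissible set contains two distinct states can map both true states to the same observation, forcing the agent to react identically in two situations that require opposite actions. Concretely, take a single-agent SAMG (the special case $n=1$; one may instead adjoin a trivial second agent to stay literally in the multi-agent setting) with $\mathcal{S}=\{s_1,s_2\}$, $\mathcal{A}^1=\{a,b\}$, every state absorbing under every action, discount $\gamma\in(0,1)$, admissible sets $\mathcal{P}^1_{s_1}=\mathcal{P}^1_{s_2}=\{s_1,s_2\}$ (note $s\in\mathcal{P}^1_s$ holds), and conflicting rewards $r(s_1,a)=r(s_2,b)=1$, $r(s_1,b)=r(s_2,a)=0$. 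Writing $p_j=\pi^1(a\mid s_j)$ for $j\in\{1,2\}$, the absorbing dynamics make $V_{\pi,\chi}(s_j)$ depend only on $\chi^1(\cdot\mid s_j)$, and minimizing over $\Delta(\mathcal{P}^1_{s_j})$ reduces to the two pure perturbations, giving $\bar V_\pi(s_1)=\tfrac{1}{1-\gamma}\min(p_1,p_2)$ and $\bar V_\pi(s_2)=\tfrac{1}{1-\gamma}\bigl(1-\max(p_1,p_2)\bigr)$. If a state-robust totally optimal $\pi^*$ existed with parameters $p_1^*,p_2^*$, then comparing against the policy with $p_1=p_2=1$ forces $p_1^*=p_2^*=1$ (from the $s_1$ requirement), while comparing against $p_1=p_2=0$ forces $p_1^*=p_2^*=0$ (from the $s_2$ requirement) — a contradiction. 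Hence no state-robust totally optimal agent policy exists, which proves the theorem.

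The routine part is verifying that the construction is a legitimate SAMG and that the worst-case values are exactly as stated; the only step needing care is the reduction of the minimization over (possibly randomized) adversary policies to the pure perturbations and its decoupling across true states, which is precisely what the absorbing-state choice buys us. The genuine conceptual crux — and the thing I would nail down first — is choosing a reward structure whose per-state optimal actions are incompatible, so that the adversary's best perturbation at $s_1$ pushes $\pi$ in the opposite direction from its best perturbation at $s_2$ (consistent with Proposition~\ref{prop:optimal_adv}: the optimal adversary is policy-dependent and always picks the worst case), making simultaneous domination at both states impossible.
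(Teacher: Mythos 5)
Your proof is correct, and it is the same high-level strategy as the paper's — exhibit an explicit finite SAMG in which the adversary's ability to alias two states with incompatible optimal actions makes the per-state optimality requirements contradictory — but the construction and the argument around it are genuinely different. The paper uses a two-agent, two-state coordination game with action-dependent transitions (agents are rewarded for matching actions in $s_1$ and mismatching in $s_2$, and the transition depends on whether actions agree); there, $\bar V_\pi$ has no simple closed form, so the proof proceeds indirectly: it first shows the deterministic ``always mismatch'' policy $\pi_1$ with profile $(0,100)$ is strictly beaten at $s_1$ by the uniform stochastic policy with profile $(50,50)$, and then argues that any policy achieving both $\bar V(s_1)>0$ and $\bar V(s_2)=100$ would have to place positive probability on matching actions under the observation $s_1$, which the adversary exploits by perturbing $s_2$ into $s_1$. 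Your absorbing-state example instead decouples the two states entirely, so that $\bar V_\pi(s_1)=\frac{1}{1-\gamma}\min(p_1,p_2)$ and $\bar V_\pi(s_2)=\frac{1}{1-\gamma}(1-\max(p_1,p_2))$ are exact, the reduction of the adversary's minimization to pure perturbations is immediate by linearity, and the contradiction ($p_1^*=p_2^*=1$ versus $p_1^*=p_2^*=0$) falls out in one line. What each buys: the paper's example genuinely exercises the multi-agent coordination aspect and is reused throughout the appendix (for the Dec-POMDP comparison, the deterministic-versus-stochastic proposition, and the robust total Nash non-existence result), whereas yours isolates the essential mechanism with minimal machinery and makes the incomparability of the $\bar V$-profiles fully explicit. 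The only point to be careful about is that your core example is effectively single-agent; since the paper's definition of an SAMG does not require $n\ge 2$, and you note that a trivial second agent can be adjoined without changing the computation, this is not a gap.
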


The proof in Appendix~\ref{sec:optimal_agent_policy} is done by constructing a counterexample where there is no optimal policy for the agents. A state-robust totally optimal agent policy is expected to maximize the state value for all states. However, under the adversarial state perturbations, sometimes agents have to make trade-offs between different state values and no agent policy can maximize all the state values.

\subsection{Robust Total Nash Equilibrium}
Then we look at the widely-used Nash equilibrium concept in MARL for SAMGs. A Nash equilibrium is used to describe policies where no agent wants to deviate unilaterally. If an agent deviates from a Nash equilibrium, its total expected return won't increase. Denote the agent policies and adversary policies of all other agents and adversaries except agent $i$ and adversary $i$ as $\pi^{-i}$ and $\chi^{-i}$ respectively. Before giving the definition of a robust total Nash equilibrium, we first show that there exists a unique robust state value function for agent $i$ given any $\pi^{-i}$ and $\chi^{-i}$.

\begin{defi}[\textbf{Robust state value function}]
    A state value function $V^i_{*,\pi^{-i},*, \chi^{-i}}: \mathcal{S} \rightarrow \mathbb{R}$ for agent $i$ given $\pi^{-i}$ and $\chi^{-i}$ is called a robust state value function if for all $s \in \mathcal{S}$,
    \begin{equation*}
        V^i_{*,\pi^{-i},*, \chi^{-i}}(s) =  \max_{\pi^i } \min_{\chi^i} \sum_{\rho \in \mathcal{P}_s} \chi(\rho|s) \sum_{a\in \mathcal{A}} \pi(a|\rho)  \left( r(s,a) + \gamma\sum_{s' \in \mathcal{S}} p(s'|s, a) V^i_{*,\pi^{-i},*, \chi^{-i}}(s') \right).
    \end{equation*}
\end{defi}

\begin{theorem}[\textbf{Existence of Unique Robust State Value Function}]
\label{theorem:unique_statevalue}
For an SAMG with finite state and finite action spaces, for any $i \in \mathcal{N}$, given any $\pi^{-i}$ and $\chi^{-i}$ of other agents and adversaries except agent $i$ and adversary $i$, there exists a unique robust state value function $V^i_{*,\pi^{-i},*, \chi^{-i}}: \mathcal{S} \rightarrow \mathbb{R}$ for agent $i$ such that for all $s \in \mathcal{S}$,
\begin{equation*}
    V^i_{*,\pi^{-i},*, \chi^{-i}}(s) =  \max_{\pi^i } \min_{\chi^i} \sum_{\rho \in \mathcal{P}_s} \chi(\rho|s) \sum_{a\in \mathcal{A}} \pi(a|\rho)  \left( r(s,a) + \gamma\sum_{s' \in \mathcal{S}} p(s'|s, a) V^i_{*,\pi^{-i},*, \chi^{-i}}(s') \right).
\end{equation*}


\end{theorem}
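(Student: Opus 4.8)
The plan is to set up a Bellman-type operator on the space of bounded functions $V: \mathcal{S} \to \mathbb{R}$ and show it is a $\gamma$-contraction in the sup-norm, so that Banach's fixed-point theorem delivers both existence and uniqueness. Concretely, for fixed $\pi^{-i}$ and $\chi^{-i}$, define the operator $(T V)(s) \defeq \max_{\pi^i} \min_{\chi^i} \sum_{\rho \in \mathcal{P}_s} \chi(\rho|s) \sum_{a \in \mathcal{A}} \pi(a|\rho)\bigl( r(s,a) + \gamma \sum_{s' \in \mathcal{S}} p(s'|s,a) V(s')\bigr)$, where $\pi = (\pi^i, \pi^{-i})$ and $\chi = (\chi^i, \chi^{-i})$. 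First I would argue that $T$ is well-defined: the inner objective is bilinear (hence continuous) in $(\pi^i, \chi^i)$, and since $\mathcal{S}$ and each $\mathcal{A}^j$ are finite, the simplices $\Delta(\mathcal{A}^i)$ and $\Delta(\mathcal{P}^i_s)$ are compact, so the $\min$ and $\max$ are attained and $TV$ is a genuine real-valued function on the finite set $\mathcal{S}$.

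Next I would establish the contraction property. Take two functions $V_1, V_2$ and bound $\|TV_1 - TV_2\|_\infty$. Fix a state $s$. Using the standard fact that $|\max_x f(x) - \max_x g(x)| \le \max_x |f(x) - g(x)|$ and the analogous inequality for $\min$, I can peel off the outer $\max_{\pi^i}$ and inner $\min_{\chi^i}$ to reduce to bounding, for each fixed $(\pi^i, \chi^i)$, the quantity $\bigl| \sum_{\rho} \chi(\rho|s) \sum_a \pi(a|\rho)\, \gamma \sum_{s'} p(s'|s,a)\bigl(V_1(s') - V_2(s')\bigr)\bigr|$. Since $\chi(\cdot|s)$, $\pi(\cdot|\rho)$, and $p(\cdot|s,a)$ are all probability distributions, this is at most $\gamma \|V_1 - V_2\|_\infty$. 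Hence $\|TV_1 - TV_2\|_\infty \le \gamma \|V_1 - V_2\|_\infty$ with $\gamma < 1$, and Banach's fixed-point theorem on the complete metric space $(\mathbb{R}^{|\mathcal{S}|}, \|\cdot\|_\infty)$ gives a unique fixed point, which is exactly the claimed robust state value function $V^i_{*,\pi^{-i},*,\chi^{-i}}$.

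I expect the only real subtlety — the "hard part" — to be justifying that the $\max$–$\min$ can be interchanged with the $\max$/$\min$ in the contraction estimate cleanly, i.e., carefully invoking the two elementary inequalities $|\max f - \max g| \le \|f - g\|_\infty$ and $|\min f - \min g| \le \|f-g\|_\infty$ in the correct order, and making sure the attainment of the extrema (needed for these to be meaningful, not just suprema/infima) is backed by compactness. A secondary point worth stating explicitly is that the order of $\max_{\pi^i}$ and $\min_{\chi^i}$ here is fixed by the definition (agent commits, then adversary responds), so no minimax/Sion's theorem argument is needed for existence of the fixed point itself; the $\max$–$\min$ is just part of the operator. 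If one additionally wanted the value of a game (interchangeability of $\max\min$ and $\min\max$), that would require the bilinearity plus convexity/compactness of the simplices via Sion's theorem, but that is not required for the statement as written. I would close by noting that finiteness of $\mathcal{S}$ and $\mathcal{A}$ is used in exactly two places: compactness of the strategy simplices (for attainment) and completeness of the finite-dimensional function space (for Banach).
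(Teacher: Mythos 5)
Your proposal is correct and follows essentially the same route as the paper: define the Bellman-type operator with the fixed $\max_{\pi^i}\min_{\chi^i}$ built in, show it is a $\gamma$-contraction in the sup-norm (the paper unrolls your two elementary $\max$/$\min$ inequalities into an explicit chain of inequalities using the attained optimizers $\pi^{i*},\chi^{i*}$ and a symmetry argument, which is the same estimate), and conclude by Banach's fixed-point theorem on the complete finite-dimensional space. Your remarks on attainment via compactness of the simplices and on not needing a minimax-interchange theorem are consistent with what the paper does.
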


The proof is based on the contraction mapping property and Banach's fixed point theorem in Appendix~\ref{sec:unique_robust_value}. The theorem could also follow from viewing the SAMG as an extensive form game under imperfect recall and applying the approaches used to analyze those games~\cite{chen2012tractable}.

Based on the well-defined robust state value function, a robust total Nash equilibrium is defined considering each agent is associated with an adversary that tries to minimize its total expected return.

\begin{defi}[\textbf{Robus Total Nash Equilibrium}]
\label{def:equilibrium_statevalue}
For an SAMG, the policy $(\pi^*, \chi^*)$ is a robust total Nash equilibrium if for all $s \in \mathcal{S}$ and all $i \in \mathcal{N}$ and all $\pi^i$ and $\chi^i$, it holds that
\begin{equation}
    V^i_{\pi^i, \pi^{-i*}, \chi^{i*}, \chi^{-i*}}(s) \leq V^i_{\pi^{i*}, \pi^{-i*}, \chi^{i*}, \chi^{-i*}}(s) 
     \leq V^i_{\pi^{i*}, \pi^{-i*}, \chi^i, \chi^{-i*}}(s),
\end{equation}
where $\pi^{-i}$ and $\chi^{-i}$ denotes the agent policies and adversary policies of all the other agents except agent $i$, respectively.
\end{defi}

Definition~\ref{def:equilibrium_statevalue} shows that $\pi^*$ is in a robust total Nash equilibrium if each agent's policy is a robust best response to the other agents' policies under adversarial state perturbations. When agent $i$ is calculating its robust best response, it assumes a worst-case perspective of the state perturbations. 

\begin{theorem}[\textbf{Non-existence of Robust Total Nash Equilibrium}]
\label{theorem:existence_nash}
For SAMGs with finite state and finite action spaces, the robust total Nash equilibrium defined in Definition~\ref{def:equilibrium_statevalue} does not always exist.
\end{theorem}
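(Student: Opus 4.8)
The plan is to construct an explicit finite-state, finite-action SAMG in which no pair $(\pi^*, \chi^*)$ can simultaneously satisfy all the Nash-equilibrium inequalities of Definition~\ref{def:equilibrium_statevalue}. Since Theorem~\ref{theorem:notoptimal} already furnishes a counterexample in which no single agent policy maximizes $\bar V_\pi(s)$ at all states simultaneously, the natural first move is to reuse (or lightly adapt) that same counterexample and then argue that the forced state-by-state trade-off, which obstructs a state-robust totally optimal policy, also obstructs a robust total Nash equilibrium. Concretely, I would take a system with a small number of states (two or three) and, if convenient, even a single agent paired with a single adversary — a one-agent SAMG is a legitimate special case, and for $n=1$ a robust total Nash equilibrium collapses to exactly a $\max_\pi\min_\chi$ saddle point of $V_{\pi,\chi}(s)$ that holds for \emph{all} $s$ simultaneously. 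So it suffices to exhibit an instance where no $\pi^*$ attains $\bar V_{\pi^*}(s) = \max_\pi \bar V_\pi(s)$ for every $s$ at once; the non-existence of a state-robust totally optimal agent policy is then essentially the same phenomenon, and the bridge between the two notions is what the proof must make precise.

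The key steps, in order: (i) fix the instance from the proof of Theorem~\ref{theorem:notoptimal} (or a minor variant), with explicit $\mathcal S$, $\mathcal A$, $r$, $p$, $\gamma$, and admissible sets $\mathcal P_s$ chosen so that the adversary has a genuine choice at some state; (ii) invoke Theorem~\ref{theorem:unique_statevalue} to guarantee that, given any fixed $\pi^{-i},\chi^{-i}$, the robust state value function $V^i_{*,\pi^{-i},*,\chi^{-i}}$ is well defined, so the equilibrium inequalities are meaningful; (iii) suppose toward a contradiction that $(\pi^*,\chi^*)$ is a robust total Nash equilibrium; (iv) use the right-hand inequality $V^i_{\pi^{i*},\pi^{-i*},\chi^{i*},\chi^{-i*}}(s) \le V^i_{\pi^{i*},\pi^{-i*},\chi^i,\chi^{-i*}}(s)$ for all $\chi^i$ and all $s$ to pin down that $\chi^{i*}$ must be an optimal adversary against $\pi^{i*}$ simultaneously at every state (this is where Proposition~\ref{prop:optimal_adv} and the MDP-for-the-adversary viewpoint are used), hence $V^i_{\pi^{i*},\dots}(s) = \bar V_{\pi^{i*}}(s)$ for all $s$; (v) use the left-hand inequality to conclude $\pi^{i*}$ is a best response, i.e. $\bar V_{\pi^{i*}}(s) \ge \bar V_{\pi^i}(s)$ for every $\pi^i$ and every $s$ — that is, $\pi^{i*}$ is precisely a state-robust totally optimal agent policy; (vi) contradict Theorem~\ref{theorem:notoptimal} (or directly exhibit two policies $\pi_A,\pi_B$ and two states $s_1,s_2$ with $\bar V_{\pi_A}(s_1) > \bar V_{\pi_B}(s_1)$ but $\bar V_{\pi_B}(s_2) > \bar V_{\pi_A}(s_2)$, so no single $\pi^{i*}$ dominates at both states).

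The main obstacle is step (iv)–(v): the equilibrium condition only asserts optimality of $\chi^{i*}$ and of $\pi^{i*}$ \emph{holding the other player's equilibrium policy fixed}, and one must be careful that "best response against $\chi^{i*}$" together with "$\chi^{i*}$ optimal against $\pi^{i*}$" genuinely upgrades to "$\pi^{i*}$ maximizes the worst-case value $\bar V_\pi$ at every state." For $n=1$ this is clean because the two inequalities together say exactly that $(\pi^*,\chi^*)$ is a pointwise saddle point, and a pointwise saddle point of $V_{\pi,\chi}(s)$ valid for all $s$ forces $\bar V_{\pi^*}(s) = \max_\pi \bar V_\pi(s)$ for all $s$; for $n>1$ one should either restrict to the single-agent special case (simplest and fully rigorous) or carry an extra argument showing the coupled adversary does not help. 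A secondary, more bookkeeping-type obstacle is making the counterexample's worst-case values $\bar V_{\pi}(s)$ genuinely incomparable across two states while keeping all transition and reward numbers simple enough to verify by hand; borrowing the already-constructed instance behind Theorem~\ref{theorem:notoptimal} sidesteps most of that work, so I would lead with "consider the same SAMG as in the proof of Theorem~\ref{theorem:notoptimal}" and then run the reduction above.
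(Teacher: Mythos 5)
Your overall strategy---derive a contradiction by showing that a robust total Nash equilibrium would yield a state-robust totally optimal agent policy and then invoke Theorem~\ref{theorem:notoptimal}---is a genuinely different route from the paper, which instead builds a fresh two-agent two-state counterexample (a modified version of the game behind Theorem~\ref{theorem:notoptimal}, with the transition at $s_1$ changed so that matching actions keep the agents in $s_1$) and shows directly that the stage-wise equilibrium at $s_1$ forces $\Pr(a^1_t=a^2_t)=1$ while the one at $s_2$ forces $\Pr(a^1_t\neq a^2_t)=1$, requirements no single perturbed-observation policy can meet. Your route is attractive because it would expose the two non-existence results as one phenomenon, but as written it has a genuine gap precisely at the point you flag in steps (iv)--(v). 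The saddle-point chain $\bar V_\pi(s)\le V_{\pi,\chi^*}(s)\le V_{\pi^*,\chi^*}(s)=\bar V_{\pi^*}(s)$ is only valid when the agent side is a single player (or when you separately argue that unilateral deviations suffice); for $n\ge 2$ a robust total Nash equilibrium only excludes \emph{unilateral} improvements, and a common-reward game can have Nash equilibria that are jointly suboptimal, so non-existence of a totally optimal \emph{joint} policy does not contradict existence of an equilibrium. You correctly propose to retreat to $n=1$, but then you can no longer ``reuse the instance from Theorem~\ref{theorem:notoptimal}'': that instance is a two-agent coordination game whose whole mechanism is inter-agent miscoordination, so you would have to construct a new single-agent SAMG in which no policy maximizes $\bar V_\pi(s)$ at all states simultaneously. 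Such an instance exists (e.g.\ two absorbing states with state-dependent rewarded actions and an adversary that can swap observations), but your proposal never exhibits it, and without it the proof does not close.

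A secondary issue: Definition~\ref{def:equilibrium_statevalue} is stated in terms of $V^i_{\pi^i,\pi^{-i},\chi^i,\chi^{-i}}$, which the paper defines (via Theorem~\ref{theorem:unique_statevalue} and its corollary) as the \emph{one-step deviation} value $f^i_s(v^i,\pi^i,\pi^{-i},\chi^i,\chi^{-i})$ with the robust value vector $v^i$ as continuation, not as the plain infinite-horizon return $V_{\pi,\chi}(s)$ of \eqref{def_state_value}. Your inequalities silently use the latter interpretation. The two can be reconciled by a standard dynamic-programming argument (the equilibrium conditions make $v^i$ a fixed point of the evaluation operator for $(\pi^*,\chi^*)$, and Proposition~\ref{prop:optimal_adv} then identifies $v^i(s)$ with $\bar V_{\pi^*}(s)$), but that step must be written out; it is not free. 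If you fill in (a) an explicit single-agent counterexample to total optimality and (b) the fixed-point step linking the stage-wise values to $\bar V_\pi$, your reduction becomes a complete and arguably more conceptual proof than the paper's.
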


The proof in Appendix~\ref{sec:robust_nash} is done by constructing a counterexample. For any state $s \in \mathcal{S}$, there exists a stage-wise equilibrium among the agents and adversaries (See the proof for a stage-wise equilibrium in Theorem~\ref{theorem:existence_stage_wise} in Appendix~\ref{sec:stage_wise_equilibrium}). However, due to the uncertainty of the true state under adversarial state perturbations, it is possible that the stage-wise equilibrium in one state conflicts with the stage-wise equilibrium in another state. As a result, the agents may be required to make trade-offs between different states, making it impossible to find an equilibrium that holds for all states.

\subsection{Robust Agent Policy}
The state-robust totally optimal agent policy and robust total Nash equilibrium concepts do not always exist in our SAMG problem according to the above non-existence analysis. To circumvent this difficulty, we consider another solution concept in which each agent adopts a policy (hereafter referred to as a \emph{robust agent policy}) that maximizes the reward under the worst-case state perturbation. We further show that a robust policy always exists for all agents. We first introduce a new objective for SAMGs, the worst-case expected state value:

\begin{defi}[\textbf{Worst-case Expected State Value}]
\label{def:mean_episode_reward}
The worst-case expected state value under the optimal state perturbation adversary is:
$
    \Expect_{s_0 \sim \Pr(s_0)} \left[\bar{V}_\pi(s_0)\right],
$
where $\Pr(s_0)$ is the probability distribution of the initial state.
\end{defi}

To account for the fact that a policy is not able to maximize all state values in SAMG, we can use the probability of each state as a measure of its importance, and balance the values of different states. The worst-case expected state value is calculated by taking a weighted sum of all state values based on their initial state distribution. The agent policy that aims to maximize this worst-case expected state value is referred to as a robust agent policy.

\begin{defi}[\textbf{Robust Agent Policy}]
\label{def:rap}
An agent policy $\pi^*$ that maximizes the worst-case expected state value is called a robust agent policy: 
\begin{equation}
    \pi^* \in \argmax_\pi \Expect_{s_0 \sim \Pr(s_0)} \left[\bar{V}_\pi(s_0)\right].
\end{equation}
\end{defi}

The following theorem shows finding a robust agent policy is equivalent to solving a maximin problem.
\begin{theorem}
\label{prop:maximin}
Finding an agent policy $\pi$ to maximize the worst-case expected state value under an optimal adversary for $\pi$ is equivalent to the maximin problem: $\max_\pi \min_{\chi} \sum_{s_0} \Pr(s_0)  V_{\pi, \chi}(s_0)$.
\end{theorem}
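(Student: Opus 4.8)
The plan is to show the two optimization problems have the same optimal value and the same set of maximizers by unfolding the definitions. First I would recall that by Definition~\ref{def:rap}, a robust agent policy maximizes $\Expect_{s_0 \sim \Pr(s_0)}[\bar{V}_\pi(s_0)]$, and by \eqref{equ:optimalad} we have $\bar{V}_\pi(s) = \min_{\chi} V_{\pi,\chi}(s)$ for every $s \in \mathcal{S}$. Therefore the objective can be rewritten as
\begin{equation*}
    \Expect_{s_0 \sim \Pr(s_0)}\left[\bar{V}_\pi(s_0)\right] = \sum_{s_0} \Pr(s_0)\, \bar{V}_\pi(s_0) = \sum_{s_0} \Pr(s_0)\, \min_{\chi} V_{\pi,\chi}(s_0).
\end{equation*}
Maximizing this over $\pi$ already gives one side. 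The remaining content is to justify that we may pull the minimization over $\chi$ outside the sum, i.e., that
\begin{equation*}
    \sum_{s_0} \Pr(s_0)\, \min_{\chi} V_{\pi,\chi}(s_0) = \min_{\chi} \sum_{s_0} \Pr(s_0)\, V_{\pi,\chi}(s_0).
\end{equation*}

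The key step is to observe, via Proposition~\ref{prop:optimal_adv}, that for the fixed agent policy $\pi$ there exists a single optimal adversary policy $\chi^*$ that simultaneously attains the minimum $\bar{V}_\pi(s) = V_{\pi,\chi^*}(s)$ for \emph{all} $s \in \mathcal{S}$, not just for one $s$ at a time. Using this $\chi^*$ in the right-hand side shows $\min_\chi \sum_{s_0}\Pr(s_0) V_{\pi,\chi}(s_0) \le \sum_{s_0}\Pr(s_0) V_{\pi,\chi^*}(s_0) = \sum_{s_0}\Pr(s_0)\bar V_\pi(s_0)$; the reverse inequality is immediate since $V_{\pi,\chi}(s_0) \ge \bar V_\pi(s_0)$ termwise for every $\chi$ and $\Pr(s_0)\ge 0$. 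Hence the two expressions are equal for each fixed $\pi$, and taking $\max_\pi$ of both sides yields
\begin{equation*}
    \max_\pi \Expect_{s_0\sim\Pr(s_0)}\left[\bar V_\pi(s_0)\right] = \max_\pi \min_\chi \sum_{s_0}\Pr(s_0) V_{\pi,\chi}(s_0),
\end{equation*}
and the argmax sets coincide, which is exactly the claimed equivalence.

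The main obstacle — really the only subtlety — is the simultaneous-optimality point: a priori $\min_\chi$ inside the sum could be attained by a different adversary for each $s_0$, so one must invoke Proposition~\ref{prop:optimal_adv} (whose proof builds an MDP for the adversary and appeals to the existence of a stationary optimal policy for finite MDPs) to guarantee one adversary works for all states at once. Everything else is termwise inequality manipulation with nonnegative weights $\Pr(s_0)$. I would also note in passing that $\Pr(s_0)$ summing to $1$ is not needed; nonnegativity suffices for the interchange argument.
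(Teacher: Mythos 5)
Your proof is correct and follows essentially the same route as the paper: both rewrite the expectation as a weighted sum and then invoke Proposition~\ref{prop:optimal_adv} to supply a single adversary $\chi^*$ that attains $\bar{V}_\pi(s)$ simultaneously for all states, which is exactly what licenses pulling $\min_\chi$ outside the sum. Your version merely makes the two-sided inequality argument for that interchange explicit, which the paper leaves implicit.
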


In the following theorem, we show the existence of a robust agent policy for finite state and finite action spaces.

\begin{theorem}[\textbf{Existence of Robust Agent Policy}]
\label{theorem:existence_mean_episode}
For SAMGs with finite state and finite action spaces, there exists a robust agent policy to maximize the worst-case expected state value defined in Definition~\ref{def:mean_episode_reward}. 
\end{theorem}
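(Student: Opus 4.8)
The plan is to prove the existence of a robust agent policy by exhibiting the maximin value as a fixed point of a contraction operator on the space of value functions, and then extracting a maximizing agent policy. By Theorem~\ref{prop:maximin}, it suffices to show that the maximin problem $\max_\pi \min_\chi \sum_{s_0} \Pr(s_0) V_{\pi,\chi}(s_0)$ attains its maximum. First I would define, analogously to the robust state value function of Theorem~\ref{theorem:unique_statevalue}, the operator $T$ on $\mathbb{R}^{|\mathcal{S}|}$ by
\begin{equation*}
(TV)(s) = \max_{\pi(\cdot|\cdot)} \min_{\chi(\cdot|s)} \sum_{\rho \in \mathcal{P}_s} \chi(\rho|s) \sum_{a \in \mathcal{A}} \pi(a|\rho)\left( r(s,a) + \gamma \sum_{s' \in \mathcal{S}} p(s'|s,a) V(s') \right),
\end{equation*}
where now the max is over the \emph{joint} agent policy $\pi$ (all agents jointly, since we are looking for a robust agent policy, not an equilibrium) and the min is over all admissible adversary policies $\chi$. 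The first step is to check $T$ is a well-defined $\gamma$-contraction in the sup-norm: finiteness of $\mathcal{S}$, $\mathcal{A}$, and the $\mathcal{P}_s^i$ guarantees the inner min over the simplex $\Delta(\mathcal{P}_s)$ and the outer max over the product of simplices $\prod_i \Delta(\mathcal{A}^i)$ are attained (continuous functions on compact sets), and the standard argument — $|\max\min f - \max\min g| \le \max\min|f-g| \le \gamma \|V - V'\|_\infty$ — gives the contraction. By Banach's fixed point theorem $T$ has a unique fixed point $\bar V^*$.

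The second step is to argue that $\bar V^*$ equals $\bar V_{\pi^*}$ for the policy $\pi^*$ that is \emph{stationary} and picks, at each state $s$ (really at each perturbed observation), the argmax in the definition of $(T\bar V^*)(s)$. Here I would be slightly careful: the agent policy is a map $\pi^i : \mathcal{S} \to \Delta(\mathcal{A}^i)$ whose argument is the perturbed state, so the maximizer of $T$ at true state $s$ constrains $\pi$ only on the perturbation set $\mathcal{P}_s$; since $s \in \mathcal{P}^i_s$ and different true states can share perturbed observations, one must check consistency — but because we take the joint max over the \emph{full} stationary policy $\pi$ (a single vector in $\prod_i \prod_{s} \Delta(\mathcal{A}^i)$) rather than state-by-state, the maximizer is a single globally-defined stationary policy, so no consistency issue arises. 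Fixing this $\pi^*$, the operator $V \mapsto \min_\chi (\text{Bellman-type expression under } \pi^*)$ is again a $\gamma$-contraction whose unique fixed point is exactly $\bar V_{\pi^*}$ (this is essentially Proposition~\ref{prop:optimal_adv}: under fixed $\pi^*$ the adversary faces an MDP), and by construction $\bar V^*$ satisfies this fixed-point equation, so $\bar V_{\pi^*} = \bar V^*$.

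The third step is to show $\pi^*$ is optimal, i.e. $\bar V_{\pi} \le \bar V^*$ pointwise for every (possibly history-dependent) agent policy $\pi$. I would do this by a monotonicity/domination argument: for any $\pi$, the map $V \mapsto \min_\chi (\text{Bellman expression under } \pi)$ is monotone and is pointwise dominated by $T$, so iterating from $\bar V^*$ downward and using $T\bar V^* = \bar V^*$ together with the fact that the $\pi$-operator's iterates converge to $\bar V_\pi$ gives $\bar V_\pi \le \bar V^*$; a standard argument also shows restricting to stationary adversary policies in the inner min loses nothing (Proposition~\ref{prop:optimal_adv}), and a symmetric argument shows history-dependent agent policies cannot beat the stationary $\pi^*$. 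Finally, applying $\mathbb{E}_{s_0 \sim \Pr(s_0)}[\cdot]$ to the pointwise inequality $\bar V_\pi(s_0) \le \bar V_{\pi^*}(s_0)$ yields $\pi^* \in \argmax_\pi \mathbb{E}_{s_0}[\bar V_\pi(s_0)]$, which by Definition~\ref{def:rap} is a robust agent policy.

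The main obstacle I anticipate is the bookkeeping in the second and third steps around the distinction between the agent policy being a function of the \emph{perturbed} state while the adversary's choice depends on the \emph{true} state — one must be sure the pointwise (per-true-state) maximin in $T$ is achieved by a single coherent stationary policy and that no gain is possible from history dependence on the agent side. Once the operator $T$ is set up as a max over the full joint stationary policy this is clean, but stating it carefully (and invoking Proposition~\ref{prop:optimal_adv} for the adversary's side to reduce to stationary adversaries) is where the real content lies; the contraction estimates themselves are routine given Theorem~\ref{theorem:unique_statevalue}.
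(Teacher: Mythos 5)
Your Step 2 contains the fatal gap, and it is precisely the consistency issue you flag and then dismiss. The operator $T$ computes, \emph{separately for each true state} $s$, a max over the stationary policy $\pi$; even though each such max ranges over the full policy vector, the maximizer at $s_1$ and the maximizer at $s_2$ are in general \emph{different} policies. Because the agent's policy is indexed by the perturbed observation $\rho$ and the sets $\mathcal{P}_{s_1}$ and $\mathcal{P}_{s_2}$ overlap, the two maximizers can impose conflicting requirements on $\pi(\cdot\mid\rho)$ for a shared $\rho$. Worse, the fixed point $\bar V^*$ of $T$ effectively lets the agent re-select its entire policy at every stage, so in general $\bar V^*(s) > \sup_\pi \bar V_\pi(s)$ and no stationary $\pi^*$ satisfies $\bar V_{\pi^*} = \bar V^*$. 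The paper's two-agent two-state game (Fig.~\ref{fig:toyexample}) makes this concrete: there $(TV)(s_1) = 1 + \gamma V(s_2)$ (play ``same action under every perturbation,'' transit to $s_2$) and $(TV)(s_2) = 1 + \gamma V(s_2)$ (play ``different actions under every perturbation,'' stay at $s_2$), so the fixed point is $\bar V^*(s_1) = \bar V^*(s_2) = 1/(1-\gamma) = 100$; yet the proof of Theorem~\ref{theorem:notoptimal} shows no single policy can have $\bar V_\pi(s_2)=100$ and $\bar V_\pi(s_1)>0$ simultaneously, because the behavior required at observation $s_1$ to protect the value of $s_2$ conflicts with the behavior required to earn reward at true state $s_1$. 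Hence your chain $\bar V_\pi \le \bar V^* = \bar V_{\pi^*}$ breaks at the equality. Structurally, if your argument went through it would produce a policy optimal at \emph{every} state under the worst-case adversary, i.e.\ a state-robust totally optimal agent policy, directly contradicting Theorem~\ref{theorem:notoptimal}; the entire reason the paper introduces the expectation over $\Pr(s_0)$ is that pointwise optimality is unattainable.

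The paper's proof takes a deliberately non-dynamic-programming route to avoid exactly this trap. It fixes the scalar objective $J(\pi,\chi)=\sum_{s_0}\Pr(s_0)V_{\pi,\chi}(s_0)$ on the compact product of policy simplices, proves joint continuity of $J$ by writing each $V_{\pi,\chi}(s_0)$ as a series of continuous (multilinear-in-the-policy) terms that converges uniformly by the Weierstrass M-test, applies the uniform limit theorem, concludes that $F(\pi)=\min_\chi J(\pi,\chi)$ is continuous on the compact set $\Delta(\mathcal{A})$, and invokes the extreme value theorem. No Bellman operator for the maximin value is defined, and no per-state optimality is claimed. To salvage your approach you would have to abandon the per-state fixed-point equation and work with the single averaged objective from the outset, at which point you are essentially forced back to a continuity-plus-compactness argument of the paper's type.
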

The proof in Appendix~\ref{sec:robust_agent_policy} is based on the Weierstrass M-test~\citep{rudin1976principles}, uniform limit theorem~\citep{rudin1976principles}, and the extreme value theorem. Different from the definitions of the state-robust totally optimal agent policy and robust total Nash equilibrium, the worst-case expected state value objective does not require the optimality condition to hold for all states. Agents won't get stuck in trade-offs between different states, therefore, we can find a robust agent policy to maximize the worst-case expected state value for the SAMG problem.

The robust agent policy, while motivated in specific instances of non-existence theorems, is designed to offer broader applicability. The significance of the new solution concept lies in providing alternative solutions where traditional methods may falter or be inapplicable. This versatility is crucial in advancing the field, particularly in complex scenarios where standard solutions are inadequate.

\paragraph{Multi-Agent Adversarial Actor-Critic (RMA3C) Algorithm} In general, it is challenging to develop algorithms that compute optimal or equilibrium policies for MARL under uncertainties~\citep{zhangkq2020robust,zhang2021robust}. We design a RMA3C Algorithm based on our theoretical analysis above. Each agent has one critic network, one actor network $\pi^i$ and one adversary network $\chi^i$. The critic $Q$ takes in the true global state and global action during the training process. It returns a $Q$-value denoting the total expected return given $s$ and $a$. We use Gradient Descent Ascent (GDA) optimizer~\citep{lin2020gradient} to update parameters for each agent's actor network and adversary network for the maximin problem $\max_{\pi} \min_{\chi} \sum_{s_0} \Pr(s_0)  V_{\pi, \chi}(s_0)$ in Theorem~\ref{prop:maximin}. A detailed introduction for the RMA3C and pseudocode is included in Appendix~\ref{sec:algorithm}.
 

\begin{figure*}[ht]
\centering
  \begin{tabular}{c@{}c@{}c}
    \includegraphics[width=.33\textwidth]{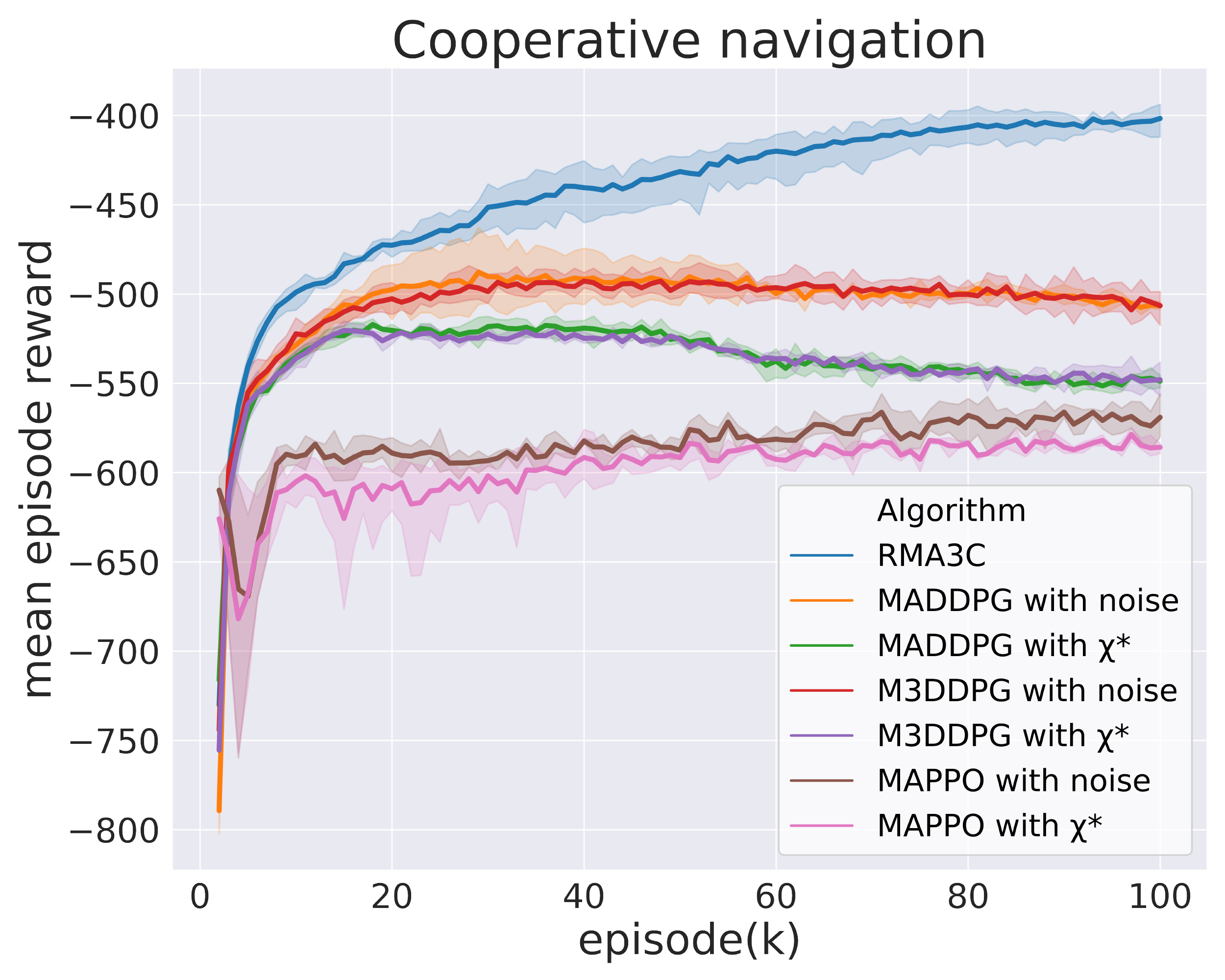} &
    \includegraphics[width=.33\textwidth]{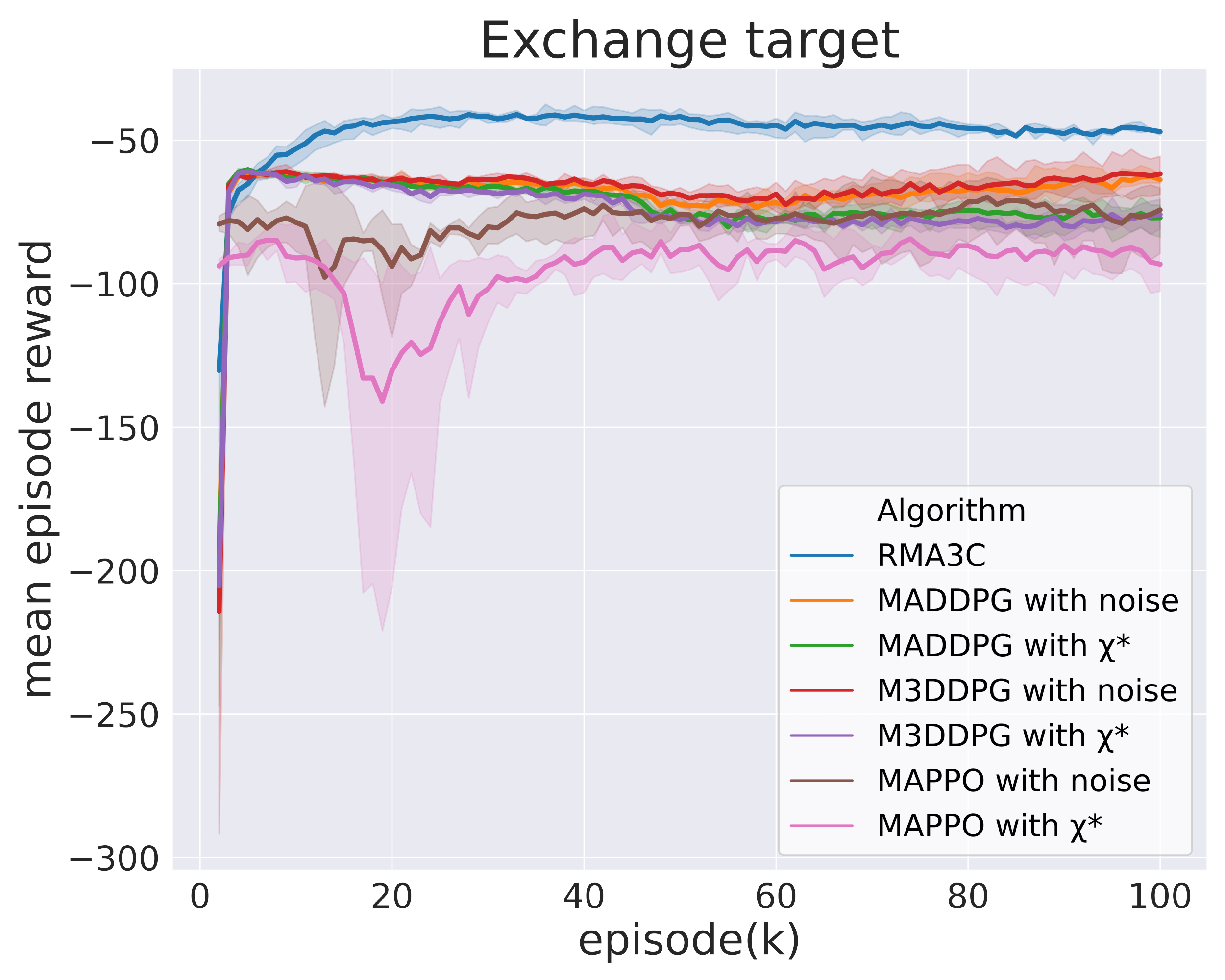} &
    \includegraphics[width=.33\textwidth]{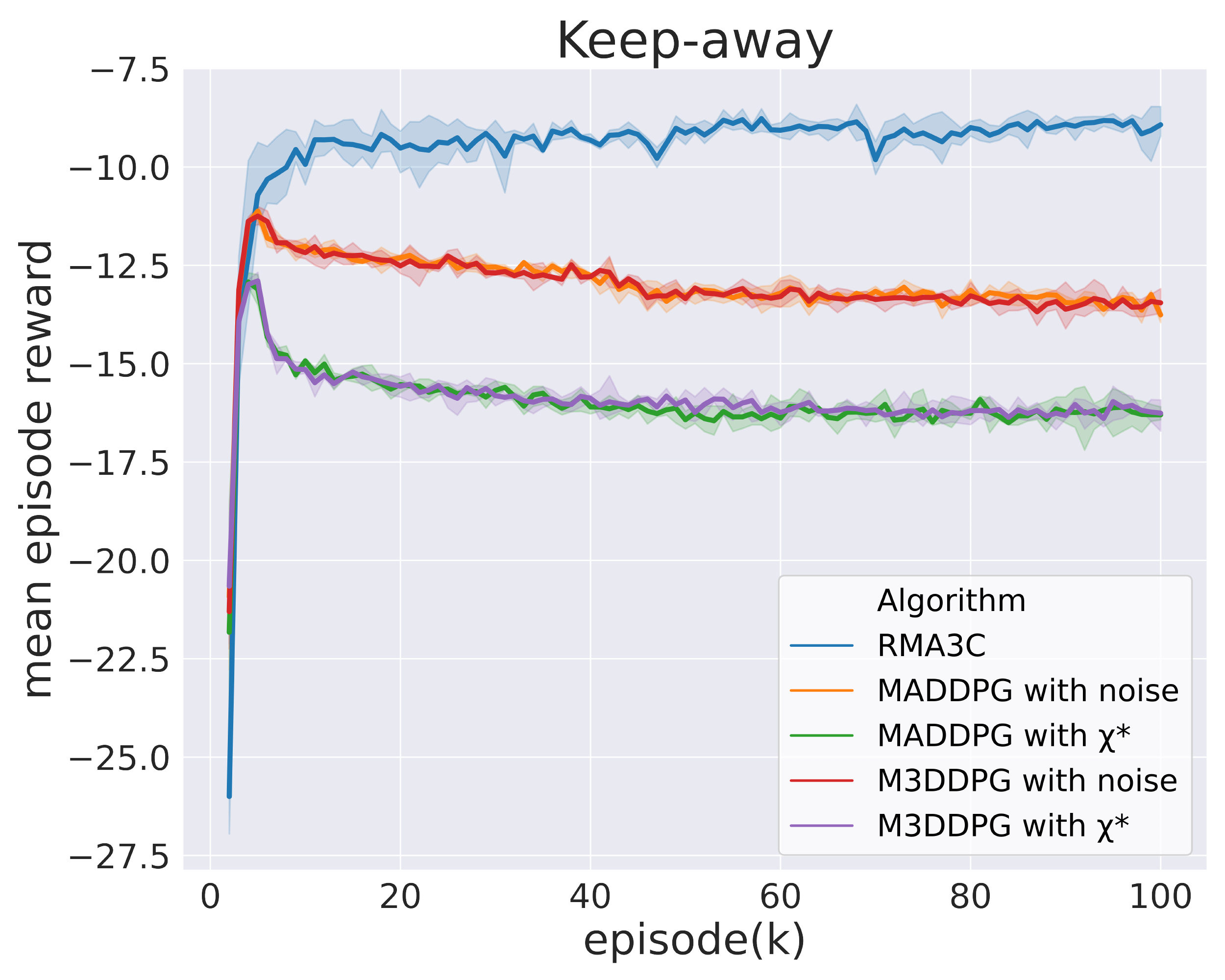}\\
  \end{tabular}
  \vspace{-10pt}
  \caption{Our RMA3C algorithm compared with several baseline algorithms in training. 
  The results show that our RMA3C algorithm outperforms the baselines, achieving higher mean episode rewards and greater robustness to state perturbations. The baselines were trained under either random state perturbations or a well-trained adversary policy $\chi^*$ (adversaries that are trained for the maximum training episodes in RMA3C). Overall, our RMA3C algorithm achieved up to 58.46\% higher mean episode rewards than the baselines.}\label{fig_training_reward}
  \vspace{-5pt}
\end{figure*}

\begin{table}[ht]
\caption{Mean episode reward of 2000 episodes during testing. Our RMA3C policy achieves up to 46.56\% higher mean episode rewards than the baselines with random state perturbations $\mathcal{N}$.}
\label{tab_testing_mean_norm}
\centering
\begin{tabular}{cccccc}
\toprule
Environment                                      & CN  & ET & KA & PD  \\ 
\midrule
MA~\cite{lowe2017multi}          & $-388.59 \pm 60.72$ & $-45.79 \pm 23.50$ &  -8.80 $\pm$ 5.07 & {3.03} $\pm$ 0.67 \\ 
M3~\cite{li2019robust}           & -390.94 $\pm$ 59.83 & -39.55 $\pm$ 20.53 & -8.54 $\pm$ 5.04 & 2.12 $\pm$ 1.04 \\  
MP~\cite{yu2021surprising}  & -381.70 $\pm$ 54.06 & -37.62 $\pm$ 18.94 & -  & - \\ 
\midrule
MADDPG (MA) w/ $\mathcal{N}$  & -487.67 $\pm$ 72.28 & -55.79 $\pm$ 26.78 & -11.21 $\pm$ 6.82  & 1.24 $\pm$ \textbf{0.47} \\  
M3DDPG (M3) w/ $\mathcal{N}$  & -478.96 $\pm$ 70.27 & -54.40 $\pm$ 26.64 & -11.28 $\pm$ 6.71  &1.30 $\pm$ 0.58 \\ 
MAPPO (MP) w/ $\mathcal{N}$  & -523.83 $\pm$ 78.45 & -86.51 $\pm$ 30.86 & -&- \\ 
RMA3C w/  $\mathcal{N}$ (ours)  & \textbf{-390.20} $\pm$ \textbf{64.82} & \textbf{-46.23} $\pm$ \textbf{24.76}  & \textbf{-9.02} $\pm$\textbf{5.87}  & \textbf{2.48} $\pm$ 1.26  \\ 
\bottomrule
\end{tabular}

\vspace{-15pt}
\end{table}

\section{Experiments}
\label{sec:experiments}

To demonstrate the effectiveness of our algorithm, we utilize the multi-agent particle environments developed in~\cite{lowe2017multi} which consist of multiple agents and landmarks in a 2D world. The host machine adopted in our experiments is a server configured with AMD Ryzen Threadripper 2990WX 32-core processors and four Quadro RTX 6000 GPUs. Our experiments are performed on Python 3.5.4, Gym 0.10.5, Numpy 1.14.5, Tensorflow 1.8.0, and CUDA 9.0. In our experiments, we consider the set of admissible perturbed state for agent $i$ at state $s$ as an $\ell_\infty$ norm ball around $s$: $\mathcal{P}^i_s \defeq \{\rho^i \in \mathcal{S}: \| \rho^i - s \|_\infty \leq d \}$ where $d$ is a radius denoting the perturbation budget. In implementation, the adversary network takes in the true state $s$ and learns a state perturbation vector $\Delta^i$ and we project $s + \Delta^i$ to $\mathcal{P}^i_s$. The environments used in our experiments include cooperative navigation (CN), exchange target (ET), keep-away (KA), and physical deception (PD). A detailed introduction to these environments can be found in Appendix~\ref{sec:implementation_detail}. All hyperparameters used in our experiments for RMA3C and the baselines are listed in Appendix~\ref{sec:implementation_detail}, along with additional implementation details and experiment results.

\vspace{-10pt}
\subsection{Baselines}

In our experiment, we have a total of 9 baselines: MADDPG~\cite{lowe2017multi}, M3DDPG~\cite{li2019robust}, MAPPO~\cite{yu2021surprising}, as well as versions of these algorithms with random and well-trained adversarial state perturbations. Detailed explanation of these baselines can be found in Appendix~\ref{sec:baselines_appendix}. To evaluate robustness under state uncertainty, we add state noise to MADDPG, M3DDPG, and MAPPO produced by a truncated normal distribution $\mathcal{N}(0,\lambda, u, l)$ where $\lambda$ is the uncertainty level, $u$  and $l$ are the upper and lower bounds to ensure noise's compactness. This simulates adversaries selecting random state perturbations. In contrast, our RMA3C algorithm trains agents under adversaries that try to minimize the agents' total expected return. We save the well-trained adversaries $\chi^*$ for each scenario in RMA3C to represent the optimal state perturbation adversaries. The well-trained adversaries are the adversary policies trained in the RMA3C algorithm when the algorithm reaches the maximum training episodes (100k episodes). We then use these adversaries to perturb the states for MADDPG, M3DDPG, and MAPPO to train and test their robustness under adversarial state perturbations. Because MAPPO provided in~\cite{yu2021surprising} only works in fully cooperative tasks, we only report its results in cooperative navigation and exchange target. For both training and testing, we report statistics that are averaged across 10 runs in each scenario and algorithm.
\vspace{-10pt}

\subsection{Comparison Results}

\paragraph{Training Comparison Under different Perturbations}
We compare our RMA3C algorithm with baselines during the training process to demonstrate its superiority in terms of mean episode rewards under different state perturbations as shown in Fig.~\ref{fig_training_reward}. As RMA3C has a built-in adversary to perturb states, we do not train it under random state perturbations.
In comparison to other baselines with different state perturbations, RMA3C consistently achieved higher mean episode rewards, demonstrating its robustness under varying state perturbations. Furthermore, when comparing each baseline with random state perturbations to the same baseline with the well-trained adversary policy $\chi^*$, we can see that the adversary policy trained by RMA3C is more effective than random state perturbations. This is because $\chi^*$ is designed to intentionally select state perturbations that minimize the agents' total expected return. The mean episode rewards of the last 1000 episodes during training are shown in the table in Appendix~\ref{sec:training_comparison_appendix}. Our RMA3C algorithm achieved up to 58.46\% higher mean episode rewards than the baselines under different state perturbations.



\paragraph{Training Comparison With More Agents} Our RMA3C algorithm is compared with baselines in the cooperative navigation scenario with an increasing number of agents. As shown in Fig.\ref{fig_training_reward}, the original cooperative navigation environment has 3 agents and our RMA3C algorithm outperforms the baselines in terms of mean episode rewards. In Fig.\ref{fig:moreagents}, we present the results of training with 4 agents, where our RMA3C algorithm still surpasses the baselines. We include the training results with 6 agents in Appendix~\ref{sec:cn_6agents}. Our RMA3C algorithm continues to achieve higher mean episode rewards, even with an increasing number of agents in the environment.

\begin{figure}[!t]
\vspace*{-0.2cm}
\centering
\scalebox{.95}[0.9]{\subfloat[]{\centering\label{fig:moreagents}\includegraphics[width=0.52\columnwidth]{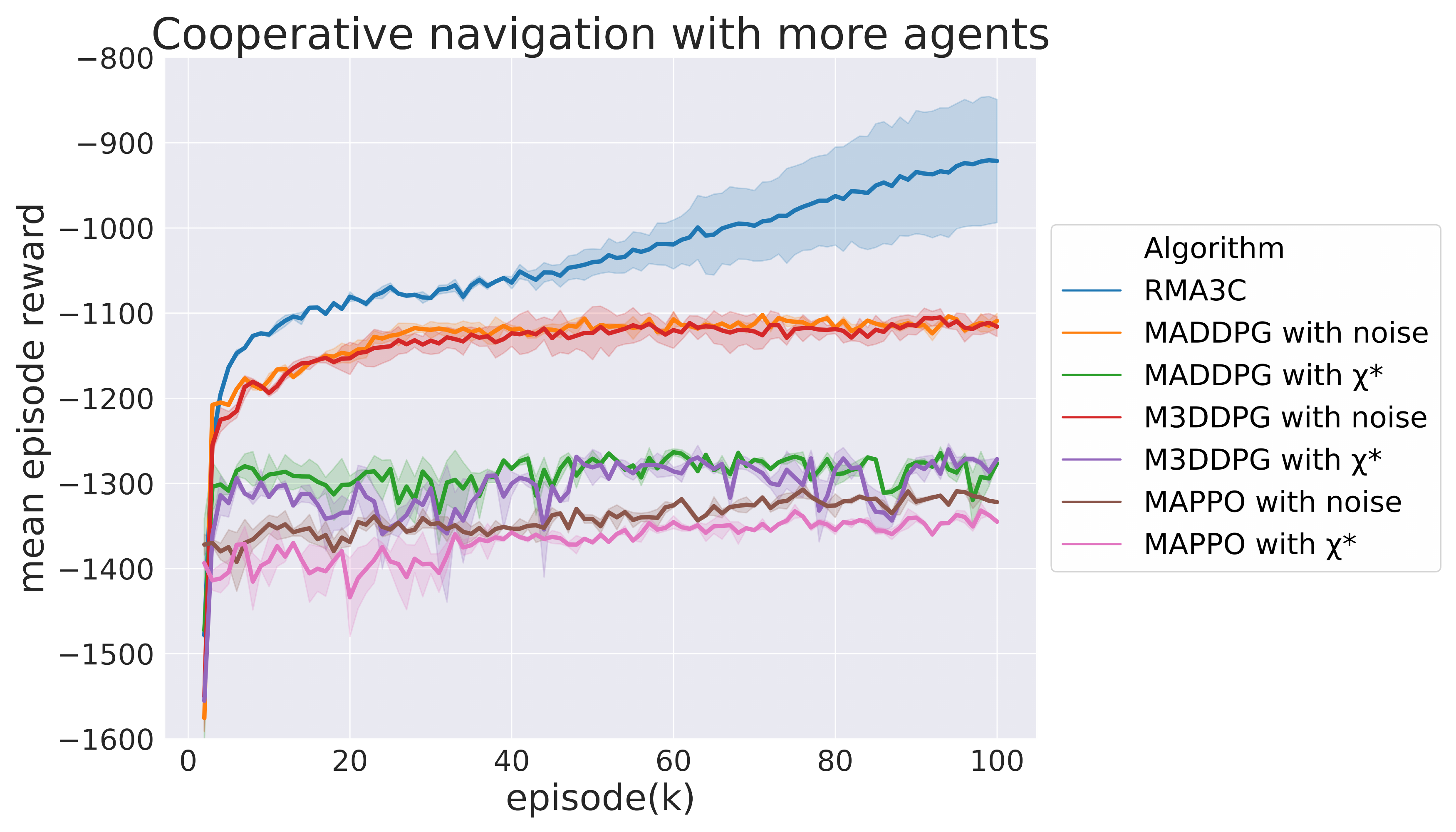}}}\hfill
\scalebox{.95}[0.9]{\subfloat[]{\centering\label{fig:s3_d}\includegraphics[width=0.48\columnwidth]{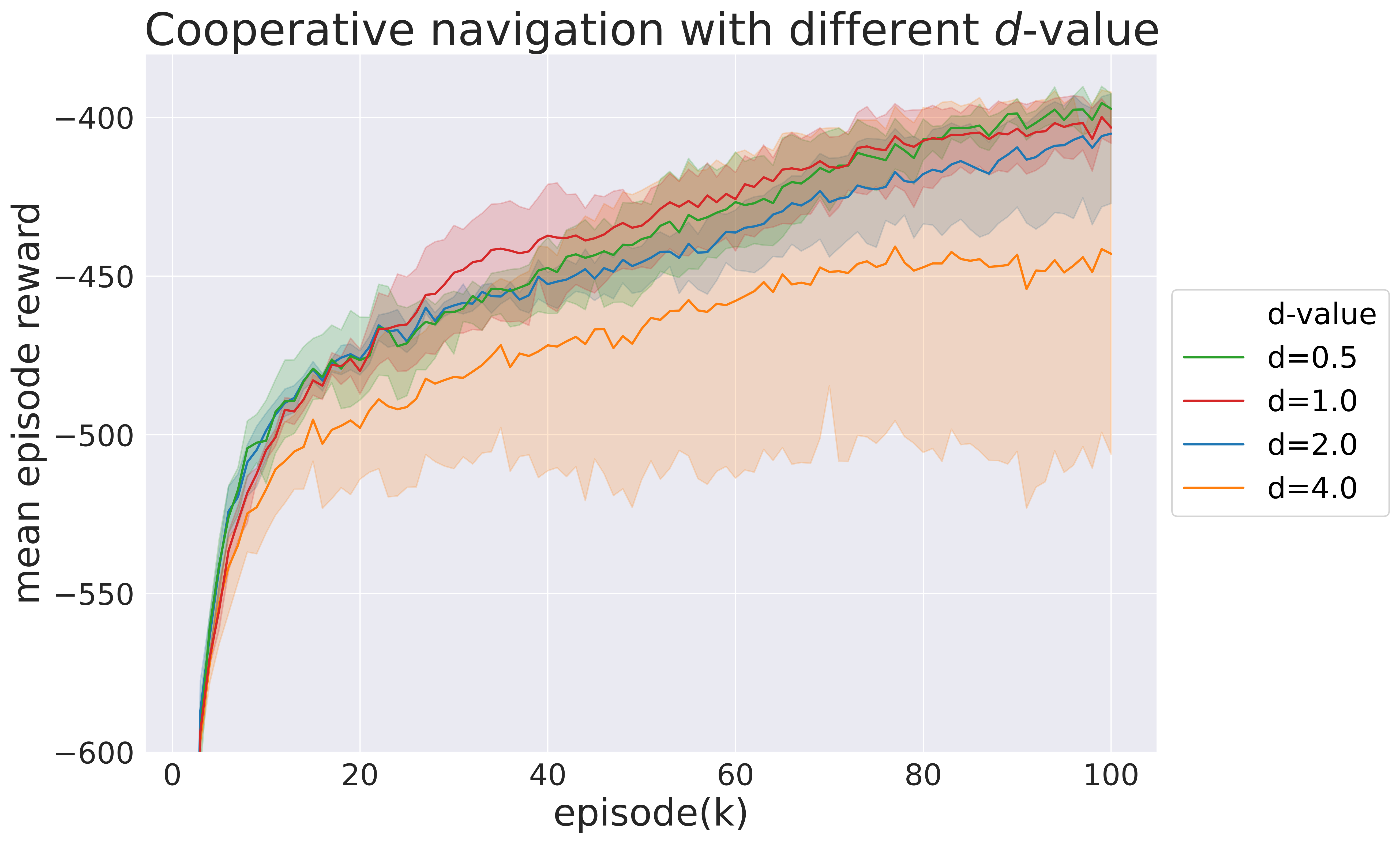}}}
\vspace*{-0.2cm}
\caption{\ref{fig:moreagents}: Our RMA3C algorithm continues to achieve higher mean episode rewards, even with an increasing number of agents in the environment. \ref{fig:s3_d}:Our RMA3C algorithm is trained in the cooperative navigation environment with different perturbation budgets $d$. When $d$ increases, adversaries get more advantage, and may further decrease agents' total expected return.}
\vspace{-15pt}
\label{fig:other_res}
\end{figure}

\begin{table}[ht]
\caption{Our RMA3C policy achieves up to 54.02\% higher mean episode reward than the baselines with well-trained $\chi^*$.}
\label{tab_testing_mean_best}
\centering
\begin{tabular}{cccccc}
\toprule
Environment                                      & CN  & ET & KA & PD  \\ 
\midrule
MADDPG (MA) w/$\chi^*$   & -537.56 $\pm$ 72.28& -71.65 $\pm$ 42.50 & -14.72 $\pm$ 5.44 &-0.95 $\pm$ 1.32 \\
M3DDPG (M3) w/$\chi^*$  & -515.85 $\pm$ 74.58 & -70.68 $\pm$  41.54 & -13.51 $\pm$ \textbf{5.30} &-0.70 $\pm$ 0.96 \\ 
MAPPO (MP) w/$\chi^*$   & -572.39 $\pm$ 79.34 & -109.26 $\pm$ 47.97 & - &-\\ 
RMA3C w/$\chi^*$ (ours)    &  \textbf{-400.82} $\pm$ \textbf{62.59}& \textbf{-50.23} $\pm$ \textbf{26.97} & \textbf{-9.64} $\pm$ 5.31 &\textbf{1.23} $\pm$ \textbf{0.82} \\ 
\bottomrule
\end{tabular}
\vspace{-15pt}
\end{table}

\paragraph{Training Comparison With Different Perturbation Budgets} We compare our algorithm with baselines in the cooperative navigation scenario with varying levels of perturbation budgets $d$. We consider the set of admissible perturbed state for agent $i$ at state $s$ as an $\ell_\infty$ norm ball around $s$: $\mathcal{P}^i_s \defeq \{\rho^i \in \mathcal{S}: \| \rho^i - s \|_\infty \leq d \}$ where $d$ is a radius denoting the perturbation budget. As shown in Fig.~\ref{fig:s3_d}, when $d$ increases, adversaries have greater freedom to perturb the state within a larger admissible perturbed state set. As $d$ increases, adversaries get more powerful and lead to a decrease in agents' total expected return.


\paragraph{Testing Comparison in different Environments}
Our RMA3C algorithm is tested in different environments to demonstrate its robustness under state perturbations. As shown in Table~\ref{tab_testing_mean_norm}, the mean episode rewards are averaged across 2000 episodes and 10 test runs in each environment. The results of MADDPG, M3DDPG, and MAPPO, which are not designed to handle state perturbations, are shown as a reference for the no state perturbation scenario. These algorithms perform poorly when random state perturbations are introduced, indicating the need for an algorithm that can handle state perturbations. As seen in Table~\ref{tab_testing_mean_best}, the RMA3C policy achieves up to 46.56\% higher mean episode rewards than the baselines in environments with random state perturbations. Additionally, we also test the learned policies using different algorithms in environments with well-trained adversary policies $\chi^*$ to perturb states. The results indicate that the RMA3C policy achieves up to 54.02\% higher mean episode reward than the baselines with well-trained adversarial state perturbations. Overall, these tests demonstrate that the RMA3C algorithm achieves higher robustness in different environments with state perturbations.

\section{Conclusion}
\label{sec:conclusion}
In this work, we propose a State-Adversarial Markov Game (SAMG) and investigate the fundamental properties of robust MARL under adversarial state perturbations. We prove that the widely used solution concepts such as optimal agent policy and robust Nash equilibrium do not always exist for SAMGs. Instead, we consider a new solution concept (the robust agent policy) to maximize the worst-case expected state value and prove its existence. This is the primary theoretical contribution of our work. Additionally, we also propose a RMA3C algorithm to find a robust policy for MARL agents under state perturbations. Our numerical experiments demonstrate that the RMA3C algorithm improves the robustness of the trained policies against both random and adversarial state perturbations. Some discussions and future directions are provided in Appendix~\ref{sec:discussion}.

\subsubsection*{Acknowledgments}
Songyang Han, Sanbao Su, Sihong He, and Fei Miao are supported by the National Science Foundation under Grants CNS-1952096, and CNS-2047354 grants. Haizhao Yang was partially supported by the US National Science Foundation under awards DMS-2244988, DMS-2206333, and the Office of Naval Research Award N00014-23-1-2007.

Shaofeng Zou is supported by the National Science Foundation under Grants CCF-2106560, and CCF-2007783. This material is based upon work supported under the AI Research Institutes program by National Science Foundation and the Institute of Education Sciences, U.S. Department of Education through Award \# 2229873 - National AI Institute for Exceptional Education. Any opinions, findings and conclusions or recommendations expressed in this material are those of the author(s) and do not necessarily reflect the views of the National Science Foundation, the Institute of Education Sciences, or the U.S. Department of Education.  

We extend our thanks to Peter Stone and Dustin Morrill in Sony AI for their assistance in proofreading this paper and for their insightful suggestions that have significantly enhanced the quality of this work. Their careful attention to detail and valuable feedback were greatly appreciated.

\bibliography{main}
\bibliographystyle{tmlr}

\newpage
\appendix

\section{Comparison with Dec-POMDP and Markov Games}
\label{sec:connection}

\subsection{Comparison with Dec-POMDP}
Our SAMG problem cannot be solved by the existing work in the Decentralized Partially Observable Markov Decision Process (Dec-POMDP)~\citep{oliehoek2016concise}. In contrast, the policy in our problem needs to be robust under a set of admissible perturbed states. The adversary aims to find the worst-case state perturbation policy $\chi$ to minimize the MARL agents' total expected return. In the following proposition, we show that under certain additional conditions our proposed SAMG problem becomes a Dec-POMDP problem. 

\begin{figure}[ht]
\centering
  \includegraphics[width=\columnwidth]{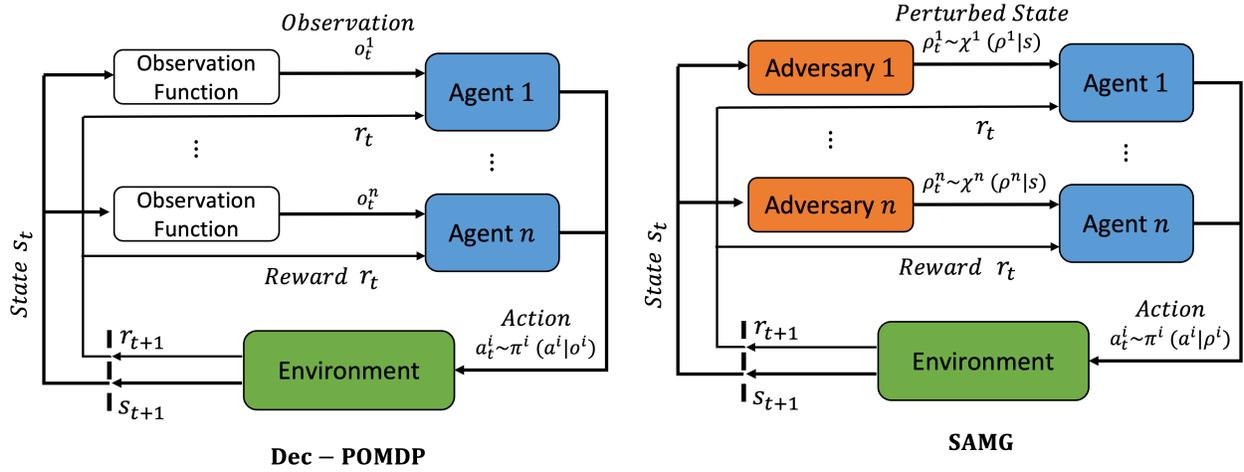}
  \caption{Comparison between Dec-POMDP and SAMG. In Dec-POMDP, the observation probability function is fixed, and it will not change according to the change of the agent policy. However, in SAMG the adversary policy is not a fixed policy, it may change according to the agents' policies and always select the worst-case state perturbation for agents.}\label{fig_training_reward_appendix}
\end{figure}

\textbf{Proposition 3.2.}
\textit{When the adversary policy is a fixed policy, the SAMG problem becomes a Dec-POMDP~\citep{oliehoek2016concise}. }
\begin{proof}
When the adversary policy $\chi$ is a fixed policy, an SAMG $(\mathcal{N}, \mathcal{S}, \mathcal{A}, r, \mathcal{P}_s, p, \gamma, \Pr(s_0))$ becomes a Dec-POMDP $(\mathcal{N}, \mathcal{S}, \mathcal{A}, r, \mathcal{O}, O, p, \gamma, \Pr(s_0))$. The agent set $\mathcal{N} = \{1, ..., n\}$. The global joint state is $ s\in \mathcal{S}$. Each agent $i$ is associated with an action $a^i \in \mathcal{A}^i$. The global joint action is $ a = (a^1, ..., a^n) \in \mathcal{A}$, $\mathcal{A} \defeq \mathcal{A}^1 \times \cdots \times \mathcal{A}^n$. All agents share a stage-wise reward function $r: \mathcal{S} \times \mathcal{A} \rightarrow \mathbb{R}$. The state transition function is $p: \mathcal{S} \times \mathcal{A} \rightarrow \Delta(\mathcal{S})$, where $\Delta(\mathcal{S})$ is a probability simplex denoting the set of all possible probability measures on $\mathcal{S}$. The state transits from the true state to the next state. The discount factor is $\gamma$. The joint observation set $\mathcal{O}$ is the same as the joint state set $\mathcal{S}$. The observation probability function $O(o|s) = \chi(o|s)$ for any $o \in \mathcal{P}_s$ and $O(o|s) = 0 $ for any $o \notin \mathcal{P}_s$, where $o$ is the observation given the state $s$. The $\Pr(s_0)$ is the probability distribution of the initial state. 
\end{proof}


In Dec-POMDP, the observation probability function is fixed, and it will not change according to the change of the agent policy. However, in SAMG the adversary policy is not a fixed policy, it may change according to the agents' policies and always select the worst-case state perturbation for agents. In contrast to Dec-POMDP, the adversary's policy $\chi$ is chosen to minimize the total expected return of the agents in our problem. Additionally, in Dec-POMDP the agents do not have access to the true state $s$, whereas in our problem, the adversaries are aware of the true state and can use it to select perturbed states.

\subsection{SAMG cannot be solved by Dec-POMDP: Two-Agent Two-State Game Example} We use a two-agent two-state game to show the difference between Dec-POMDP and SAMG. Consider a game with two agents $\mathcal{N} = \{1, 2\}$ and two states $\mathcal{S} = \{s_1, s_2\}$ as shown in Fig.~\ref{fig:toyexample_A}. Each agent has two actions $\mathcal{A}^1 = \mathcal{A}^2 = \{a_1, a_2\}$. The transition probabilities are defined below.
\begin{align}
    p(s' = s_1 | s = s_1, a^1 \neq a^2) &= 1, \nonumber \\
    p(s' = s_2 | s = s_1, a^1 = a^2) &= 1, \nonumber \\
    p(s' = s_2 | s = s_2, a^1 \neq a^2) &= 1, \nonumber \\
    p(s' = s_1 | s = s_2, a^1 = a^2) &= 1.
\end{align}
Specifically, $a^1 = a^2$ includes two cases: $a^1 = a^2 = a_1$ or $a^1 = a^2 = a_2$. Similarly, $a^1 \neq a^2$ includes two cases: $a^1 = a_1, a^2 = a_2$ or $a^1 = a_2, a^2 = a_1$.  

\begin{figure}[ht]
  \centering
  \includegraphics[width=3.1in]{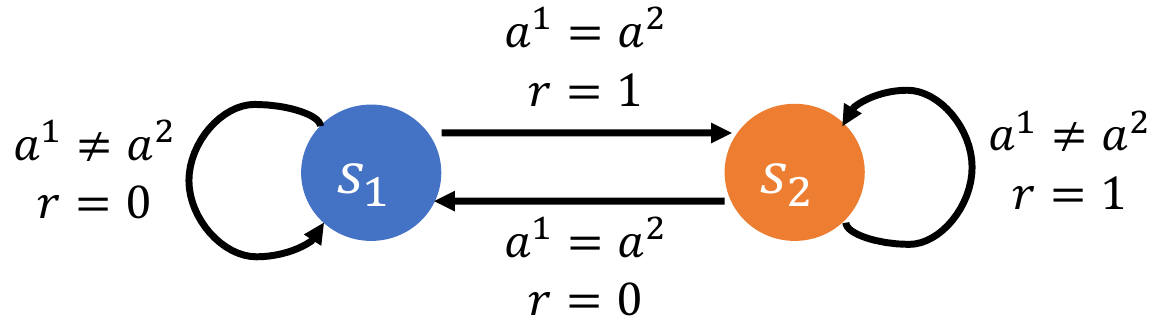}
  \caption{A two-agent two-state game example. Agents get reward 1 at state $s_1$ if they choose the same action. Agents get reward 1 at state $s_2$ if they choose different actions.}\label{fig:toyexample_A}
\end{figure}

Two agents share the same reward function:
\begin{equation}
r(s, a^1, a^2)=\left\{
\begin{aligned}
1 & , & a^1 = a^2, \text{and } s = s_1, \\
0 & , & a^1 \neq a^2, \text{and } s = s_1, \\
0 & , & a^1 = a^2, \text{and } s = s_2, \\
1 & , & a^1 \neq a^2, \text{and } s = s_2.
\end{aligned}
\right. 
\end{equation}
In a SAMG, each agent is associated with an adversary to perturb its knowledge or observation of the true state.  For the power of the adversary, we allow the adversary to perturb any state to the other state:
\begin{equation}
    \mathcal{P}^1_s = \mathcal{P}^2_s = \{s_1, s_2 \}.
\end{equation}
We use $\gamma = 0.99$ as the discount factor. Agents want to find a policy $\pi$ to maximize their total expected return while adversaries want to find a policy $\chi$ to minimize agents' total expected return. 

\paragraph{This problem cannot be formulated as a Dec-POMDP} Consider one agent policy where both agents select the same action in $s_1$ and select different actions in 
 $s_2$: $\pi^1(a_1|s_1) = \pi^1(a_1|s_2) = \pi^2(a_1 |s_1) = \pi^2(a_2|s_2) = 1$. When there is no adversary, agents keep receiving rewards. The values for each state are $\tilde{V}(s_1) = \tilde{V}(s_2) = \frac{1}{1- \gamma} = 100$. Because agents share the same reward function, they also share the same values for each state. However, this policy receives ${V}(s_1) = {V}(s_2) = 0$ when agents are facing the worst-case adversaries $\chi^i(s_1|s_2) = \chi^i(s_2|s_1) = 1$ for $i = 1,2$ and always taking the wrong actions with 0 reward.

 If the adversary policy is fixed at $\chi^i(s_1|s_2) = \chi^i(s_2|s_1) = 1$ for $i = 1,2$, this problem becomes a Dec-POMDP with the observation space $\mathcal{O} = \{o_1 = s_1, o_2 = s_2\}$. The observation function is $o^i(o_1|s_2) = o^i(o_2|s_1) = 1$ for $i = 1,2$. The agent policy is $\pi^1(a_1|o_1) = \pi^1(a_1|o_2) = \pi^2(a_1 |o_1) = \pi^2(a_2|o_2) = 1$.

 However, when we consider a different agent policy where both agents select the same action in $s_2$ and select different actions in 
 $s_1$: $\pi^1(a_1|s_2) = \pi^1(a_1|s_1) = \pi^2(a_1 |s_2) = \pi^2(a_2|s_1) = 1$, agents keep receiving 0 rewards even when the adversary does nothing. For the new agent policy, the worst-case adversary policy is $\chi^i(s_1|s_1) = \chi^i(s_2|s_2) = 1$ for $i = 1,2$. The corresponding observation function for the new adversary policy is $o^i(o_1|s_1) = o^i(o_2|s_2) = 1$ for $i = 1,2$, which is completely different from the previous observation functions. Because the observation function in Dec-POMDP won't change according to agents' policies, therefore, the SAMG problem cannot be formulated by Dec-POMDP when adversary policy is not fixed.

\paragraph{Under different observation functions, Dec-POMDP can lead to contradictory agent policies.} Besides the analysis of why this problem cannot be formulated as a Dec-POMDP, we also demonstrate that Dec-POMDPs fail to solve this problem from a different perspective.

Let's consider a Dec-POMDP with the observation space $\mathcal{O} = \{o_1 = s_1, o_2 = s_2\}$. The observation function is defined as $o^i(o_1|s_2) = o^i(o_2|s_1) = 1$ for $i = 1,2$. In this scenario, the optimal agent policy is to select the same action in response to $o_2$ and choose different actions for $o_1$.
Agents keep receiving rewards based on this policy.

Now let's consider another Dec-POMDP with the observation space $\mathcal{O} = \{o_1 = s_1, o_2 = s_2\}$. The observation function is defined as $o^i(o_1|s_1) = o^i(o_2|s_2) = 1$ for $i = 1,2$. In this case, the optimal agent policy is to select the same action in response to $o_1$ and choose different actions for $o_2$.
Agents keep receiving rewards based on this policy. However, the new optimal agent policy contradicts the previous one.

By comparing these two Dec-POMDPs with different observation functions, we observe that Dec-POMDPs can yield different agent policies based on different observation functions. This implies that Dec-POMDPs do not address the problem of selecting an agent policy when the observation function is determined by an adversary.

Furthermore, we will reanalyze this problem and demonstrate how a SAMG can solve this two-agent two-state game in Appendix~\ref{sec:toy_game} and~\ref{sec:existence}. The SAMG formulation addresses this problem by selecting the agent policy against the worst-case observation function.

\subsection{Comparison with Markov Games}
Under a specific condition, when the adversary policy $\chi$ is a bijective mapping from $\mathcal{S}$ to $\mathcal{S}$, the SAMG problem is equivalent to a Markov game, as demonstrated in the following proposition. This proposition illustrates the relationship between a SAMG and a Markov game with a particular form of state perturbation. 

When $\chi$ is a bijective mapping from $\mathcal{S}$ to $\mathcal{S}$, the adversary policy follows $\chi(\rho | s) = 1$ selecting the perturbed state $\rho$ for the true state $s$ with probability 1. Let us use the notation $\chi(s) = \rho$ for this special case. 

\textbf{Proposition 3.3.}
\textit{When the adversary policy is a fixed bijective mapping from $\mathcal{S}$ to $\mathcal{S}$, the SAMG problem becomes a Markov game. }
\begin{proof}
When the adversary policy $\chi$ is a fixed bijective mapping from $\mathcal{S}$ to $\mathcal{S}$, an SAMG problem $(\mathcal{N}, \mathcal{S}, \mathcal{A}, r, \mathcal{P}_s, p, \gamma, \Pr(s_0))$ becomes a Markov game $(\mathcal{N}_{new}, \mathcal{S}_{new}, \mathcal{A}_{new}, r^i_{new}, p_{new}, \gamma, \Pr(s_{new,0}))$ that is constructed as follows: 

Taking $s_{new} = \rho = \chi(s)$ as the new state, the new global joint state set is $\mathcal{S}_{new} \defeq \mathcal{S}$. The global joint action set $\mathcal{A}_{new} = \mathcal{A} = \mathcal{A}^1 \times \cdots \times \mathcal{A}^n$ and the agent set $\mathcal{N}_{new} = \mathcal{N}$ stay the same.  

We can construct a new reward function $r^i_{new}: \mathcal{S}_{new} \times \mathcal{A}_{new} \rightarrow \mathbb{R}$ for each agent $i$ as
\begin{align}
    r^i_{new}(s_{new} = \chi(s), a_{new} = a) = r(s,a),
\end{align}
and a new state transition function 
$p_{new}: \mathcal{S}_{new} \times \mathcal{A}_{new} \rightarrow \Delta(\mathcal{S}_{new})$ defined as
\begin{align}
    p_{new}(\rho' = \chi(s') | \rho = \chi(s) , a) =  p(s'|s, a).
\end{align}
The new probability of the initial state is 
\begin{equation}
    \Pr(s_{new,0} = \chi(s_0)) = \Pr(s_0).
\end{equation}
Each agent uses a policy $\pi^i_{new}: \mathcal{S}_{new} \rightarrow \Delta(\mathcal{A}^i)$ to choose an action based on the new state. Hence, the SAMG problem becomes a Markov game.
\end{proof}

If the adversary's policy $\chi$ is a fixed bijective mapping from $\mathcal{S}$ to $\mathcal{S}$, the new global joint state set $\mathcal{S}_{new}$ is a perturbation of $\mathcal{S}$ and each state is assigned a new "label" by the adversary. Under this condition, the SAMG is equivalent to a Markov game.

\section{Optimal Adversary Policy and Optimal Agent Policy}
\label{sec:toy_game}

In this section, we analyze the existence of the optimal adversary policy and the optimal agent policy. We will utilize the two-agent two-state game introduced in Appendix~\ref{sec:connection}. For completeness, let us revisit this game with two agents $\mathcal{N} = \{1, 2\}$ and two states $\mathcal{S} = \{s_1, s_2\}$ as shown in Fig.~\ref{fig:toyexample}. Each agent has two actions $\mathcal{A}^1 = \mathcal{A}^2 = \{a_1, a_2\}$. The transition probabilities are defined below.
\begin{align}
    p(s' = s_1 | s = s_1, a^1 \neq a^2) &= 1, \nonumber \\
    p(s' = s_2 | s = s_1, a^1 = a^2) &= 1, \nonumber \\
    p(s' = s_2 | s = s_2, a^1 \neq a^2) &= 1, \nonumber \\
    p(s' = s_1 | s = s_2, a^1 = a^2) &= 1.
\end{align}
Specifically, $a^1 = a^2$ includes two cases: $a^1 = a^2 = a_1$ or $a^1 = a^2 = a_2$. Similarly, $a^1 \neq a^2$ includes two cases: $a^1 = a_1, a^2 = a_2$ or $a^1 = a_2, a^2 = a_1$.  

\begin{figure}[ht]
  \centering
  \includegraphics[width=3.1in]{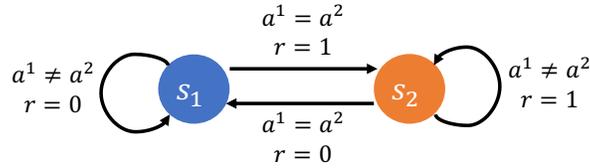}
  \caption{A two-agent two-state game example. Agents get reward 1 at state $s_1$ if they choose the same action. Agents get reward 1 at state $s_2$ if they choose different actions. This example was used in Appendix~\ref{sec:connection} to show the difference between Dec-POMDP and SAMG. We will revisit this game in Appendix~\ref{sec:toy_game} to discuss optimal adversary policy and optimal agent policy.}\label{fig:toyexample}
\end{figure}

Two agents share the same reward function:
\begin{equation}
r(s, a^1, a^2)=\left\{
\begin{aligned}
1 & , & a^1 = a^2, \text{and } s = s_1, \\
0 & , & a^1 \neq a^2, \text{and } s = s_1, \\
0 & , & a^1 = a^2, \text{and } s = s_2, \\
1 & , & a^1 \neq a^2, \text{and } s = s_2.
\end{aligned}
\right. 
\end{equation}
In a SAMG, each agent is associated with an adversary to perturb its knowledge or observation of the true state.  For the power of the adversary, we allow the adversary to perturb any state to the other state:
\begin{equation}
    \mathcal{P}^1_s = \mathcal{P}^2_s = \{s_1, s_2 \}.
\end{equation}
We use $\gamma = 0.99$ as the discount factor. Agents want to find a policy $\pi$ to maximize their total expected return while adversaries want to find a policy $\chi$ to minimize agents' total expected return. 

\subsection{Optimal Agent Policy Without Adversaries}
When there is no adversary, the optimal policy for agents is to choose the same action in $s_1$ and choose different actions in $s_2$. One example is $\pi^1(a_1|s_1) = \pi^1(a_1|s_2) = \pi^2(a_1 |s_1) = \pi^2(a_2|s_2) = 1$. The agents keep receiving rewards. The values for each state are $\tilde{V}(s_1) = \tilde{V}(s_2) = \frac{1}{1- \gamma} = 100$. Because agents share the same reward function, they also share the same values for each state. However, this policy receives ${V}(s_1) = {V}(s_2) = 0$ when agents are facing adversaries $\chi^i(s_1|s_2) = \chi^i(s_2|s_1) = 1$ for $i = 1,2$ and always taking the wrong actions with 0 reward.

\subsection{A Stochastic Policy With Adversaries}
We consider a stochastic policy $\pi^1(a_1|s_1) = \pi^1(a_1|s_2) = \pi^2(a_1 |s_1) = \pi^2(a_2|s_2) = 0.5$. Under this policy, the probabilities of taking the same or different actions are the same for each state $\Pr(a^1= a^2 \mid s_1) = \Pr(a^1 \neq a^2 \mid s_1) = \Pr(a^1= a^2 \mid s_2) = \Pr(a^1 \neq a^2 \mid s_2) = 0.5$. Agents randomly stay or transit in each state and receive a positive reward with a 50\% probability. The adversary has no power under this policy because $\pi$ is the same for both states. The values for each state are $V(s_1) = V(s_2) = \tilde{V}(s_1) = \tilde{V}(s_2) = \frac{0.5}{1- \gamma} = 50$.

\subsection{Deterministic Policies With Adversaries}
Since each agent has two actions for each state, there are in total $2^4 = 16$ possible deterministic policies for the two-agent two-state game example. All possible deterministic policies can be classified into three cases: 
(1) If agents select the same action in one state $s_i$ and select different actions in the other state $s_j$, then we always have ${V}(s_1) = {V}(s_2) = 0$. This is because adversaries can always use $\chi^k(s_1|s_j) = \chi^k(s_2|s_i) = 1$ for $k = 1,2$ such that agents always receive a 0 reward. (2) If agents always select different actions in both states, then ${V}(s_1) = 0, {V}(s_2) = 100$. This is because agents never transit to the other state and keep receiving the same reward. (3) If agents always select the same action in both states, then ${V}(s_1) = \frac{1}{1- \gamma^2} \approx 50.25, {V}(s_2) = \frac{\gamma}{1- \gamma^2} \approx 49.75$. This is because agents circulate through both states and adversaries have no power to change it.

\subsection{Optimal Adversary Policy}
\label{sec:optimal_adversary_policy}
In this section, we examine optimal policies for both the adversary and the agent in a State-Adversarial Markov Game (SAMG). 
The following proposition demonstrates the existence of an optimal adversary in an SAMG.

\textbf{Proposition 4.1}
\textbf{(Existence of Optimal Adversary Policy).}
\textit{Given an SAMG, for any given agent policy, there exists an optimal adversary policy.}
\begin{proof}
We prove this by constructing an MDP $M= (\mathcal{S}, \hat{\mathcal{A}}, \hat{r}, \hat{p}, \gamma)$ such that an optimal policy of $M$ is an optimal adversary policy $\chi^*$ for the SAMG given the fixed $\pi$. In the MDP $M$, we take all adversaries as a joint adversary agent. The joint adversary learns a policy $\chi$ to find a joint perturbed state given the current true state. The action space $\hat{\mathcal{A}} = \mathcal{S} \times \mathcal{S} \times \cdots \times \mathcal{S}$. Note that the joint admissible perturbed state set in Definition~\ref{def:perturbation_set} $\mathcal{P}_s \subseteq \hat{\mathcal{A}}$.

The reward function $\hat{r}$ is defined as:
\begin{equation}
    \hat{r}(s, \hat{a}) = - \sum_{a \in \mathcal{A}} \pi(a|\hat{a})r(s,a) \text{  for } \hat{a} \in \mathcal{P}_s.
\end{equation}

The transition probability $\hat{p}$ is defined as
\begin{equation}
    \hat{p}(s'|s, \hat{a}) = \sum_{a \in \mathcal{A}} \pi(a|\hat{a}) p(s'|s,a)  \text{  for } \hat{a} \in \mathcal{P}_s. 
\end{equation}

The reward function is defined based on the intuition that when the agent receives $r$ given $s, a$, the reward of the adversary is the negative of the agent reward, that is to say, $\hat{r} = - r$. Considering that $r(s, a) = \Expect[R| s,a] = - \Expect[\hat{R}|s, a]$, 
\begin{align}
    \hat{r}(s, \hat{a}) &= \Expect[ \hat{R}| s, \hat{a}] \nonumber \\
    &= \sum_{\hat{R}} \hat{R} \sum_{a \in \mathcal{A}} \Pr[\hat{R}|s,a] \pi(a |\hat{a}) \nonumber \\
    &= \sum_{a \in \mathcal{A}} \left[\sum_{\hat{R}} \hat{R}\Pr[\hat{R}|s,a] \right] \pi(a |\hat{a}) \nonumber \\
    &= \sum_{a \in \mathcal{A}} \Expect[\hat{R}|s, a] \pi(a |\hat{a}) \nonumber \\
    &= - \sum_{a \in \mathcal{A}} \Expect[R|s, a] \pi(a |\hat{a}) \nonumber \\
    &= - \sum_{a \in \mathcal{A}} r(s,a) \pi(a |\hat{a}).
\end{align}



Based on the properties of MDP~\citep{sutton1998introduction,puterman2014markov}, we know that the MDP $M$ has an optimal policy $\chi^*$ that satisfies $\hat{V}_{\pi, \chi^*}(s) \geq \hat{V}_{\pi, \chi}(s)$ for all $s$ and all $\chi$, where $\hat{V}_{\pi, \chi}$ is the state value function of the MDP $M$.


The Bellman equation for the MDP $M$ is
\begin{align}
    &\hat{V}_{\pi, \chi}(s) = \sum_{\hat{a} \in \mathcal{P}_s} \chi(\hat{a}|s) \left(\hat{r}+ \gamma \sum_{s' \in \mathcal{S}} \hat{p}(s'|s, \hat{a}) \hat{V}_{\pi, \chi}(s') \right) \nonumber \\
    &= \sum_{\hat{a} \in \mathcal{P}_s} \chi(\hat{a}|s) \sum_{a\in \mathcal{A}} \pi(a|\hat{a}) \left(-r + \gamma \sum_{s' \in \mathcal{S}} p(s'|s, a) \hat{V}_{\pi, \chi}(s') \right).
\end{align}
By multiplying $-1$ on both sides, we have
\begin{align}
\label{equ:bellman_mdp}
    (- \hat{V}_{\pi, \chi}(s)) &= \sum_{\hat{a} \in \mathcal{P}_s} \chi(\hat{a}|s) \sum_{a\in \mathcal{A}} \pi(a|\hat{a}) \nonumber \\
    &\left[r + \gamma \sum_{s' \in \mathcal{S}} p(s'|s, a) ( -\hat{V}_{\pi, \chi}(s')) \right].
\end{align}
On the other side, for the SAMG, we have the Bellman equation for any fixed policies $\pi$ and $\chi$ as
\begin{align}
\label{equ:bellman_equation2}
     V_{\pi, \chi}(s) &=  \sum_{\rho \in \mathcal{P}_s}  \chi(\rho|s)  \sum_{a\in \mathcal{A}} \pi(a|\rho) \nonumber \\
     & \left( r+ \gamma\sum_{s' \in \mathcal{S}} p(s'|s, a) V_{\pi, \chi}(s') \right).
\end{align}
When $\pi$ and $\chi$ are fixed, they can be taken together as a single policy, and the existing results from Dec-POMDP can be directly applied. Comparing Eq.~(\ref{equ:bellman_equation2}) and~(\ref{equ:bellman_mdp}), we know that $V_{\pi, \chi}(s) = (- \hat{V}_{\pi, \chi}(s))$.

An optimal adversary policy $\chi^*$ for the MDP $M$ satisfies $\hat{V}_{\pi, \chi^*}(s) \geq \hat{V}_{\pi, \chi}(s)$ for any $s$ and any $\chi$. Therefore, $\chi^*$ also satisfies $V_{\pi, \chi^*}(s) \leq V_{\pi, \chi}(s)$ for any $s$ and any $\chi$, and an optimal policy of the MDP $M$ is an optimal adversary policy for the SAMG given the fixed $\pi$.

\end{proof}

\subsection{Optimal Agent Policy With Adversaries}
\label{sec:optimal_agent_policy}
We have established the existence of an optimal adversary in SAMGs. Next, we consider a state-robust totally optimal agent policy under this optimal adversary. The following proposition demonstrates that a deterministic agent policy is not always superior to a stochastic policy in SAMGs.

\begin{prop}
There exists an SAMG and some stochastic policy $\pi$ such that we cannot find a better deterministic policy $\pi'$ satisfying $\bar{V}_{\pi'} (s) \geq \bar{V}_\pi (s)$ for all $s \in \mathcal{S}$.
\end{prop}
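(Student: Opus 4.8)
The plan is to reuse the two-agent two-state SAMG already set up in this appendix (Fig.~\ref{fig:toyexample}) and to take as the witness $\pi$ the uniform stochastic policy $\pi^1(a_1|s_1) = \pi^1(a_1|s_2) = \pi^2(a_1|s_1) = \pi^2(a_2|s_2) = 0.5$. First I would observe that under this $\pi$ each agent's action distribution is independent of the observed (hence of the perturbed) state, so the adversary has no leverage: the per-step expected reward equals $0.5$ at either true state, and therefore $\bar{V}_\pi(s_1) = \bar{V}_\pi(s_2) = 0.5/(1-\gamma) = 50$.

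Next I would show that no deterministic policy can match these two values simultaneously. Since only the event $\{a^1 = a^2\}$ versus $\{a^1 \neq a^2\}$ affects the reward and the transition, every one of the $16$ deterministic joint policies induces a ``same/different'' pattern over the two states and thus falls into exactly one of three classes, whose worst-case values are recorded in the subsections above: (i) ``same in one state, different in the other'' yields $\bar{V}(s_1) = \bar{V}(s_2) = 0$; (ii) ``different in both states'' yields $\bar{V}(s_1) = 0$ and $\bar{V}(s_2) = 100$; and (iii) ``same in both states'' yields $\bar{V}(s_1) = 1/(1-\gamma^2) \approx 50.25$ and $\bar{V}(s_2) = \gamma/(1-\gamma^2) \approx 49.75$. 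In each class at least one state has worst-case value strictly below $50$ --- both states in class (i), state $s_1$ in class (ii), state $s_2$ in class (iii) --- so for every deterministic $\pi'$ there is some $s$ with $\bar{V}_{\pi'}(s) < 50 = \bar{V}_\pi(s)$, which is exactly the assertion.

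The only substantive step is the justification of the three worst-case computations, and in particular that the adversary attains the stated minima. For classes (ii) and (iii) this is immediate because the induced ``same/different'' outcome is insensitive to the perturbation, so $\bar{V}$ coincides with the value of the unperturbed Markov chain. Class (i) is the crux: I would check that the stationary adversary that always reports the state in which the agents are programmed to act ``wrongly'' forces the true state into the reward-$0$ pattern forever (using the transition rules that keep $s_1$ fixed under differing actions and $s_2$ fixed under matching actions), so the value is $0$; non-negativity of the reward then certifies that $0$ is the minimum. I expect this class-(i) argument to be the main obstacle, but it is already sketched in the preceding subsection, so the proof reduces to assembling these facts.
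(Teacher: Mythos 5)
Your proposal is correct and follows essentially the same route as the paper: the same two-agent two-state counterexample, the same uniform stochastic witness with $\bar{V}_\pi(s_1)=\bar{V}_\pi(s_2)=50$, and the same three-class case analysis of the sixteen deterministic policies showing that each has at least one state with worst-case value strictly below $50$. (One caveat, shared with the paper's own write-up: the class-(ii)/(iii) values you quote hold verbatim only when each agent's policy is observation-independent --- for non-constant policies in those classes the two adversaries can feed the agents \emph{mismatched} observations and drive the worst-case values even lower --- but since lower values only strengthen the inequality you need, the conclusion is unaffected.)
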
 
\begin{proof}
We prove this theorem by giving a counter-example where no deterministic policy is better than a stochastic policy. As shown in the two-agent two-state game example in Fig.~\ref{fig:toyexample}, all 16 deterministic policies are no better than the stochastic policy $\pi^1(a_1|s_1) = \pi^1(a_1|s_2) = \pi^2(a_1 |s_1) = \pi^2(a_2|s_2) = 0.5$.
\end{proof}

Finally, we show a state-robust totally optimal agent policy $\pi^*$ does not always exist such that $\bar{V}_{\pi^*} (s) \geq \bar{V}_\pi (s)$ for any $\pi$ and all $s \in \mathcal{S}$ in SAMGs in the following theorem.

\textbf{Theorem 4.3}
\textbf{(Non-existence of State-robust Totally Optimal Agent Policy).}
\textit{A state-robust totally optimal agent policy does not always exist for SAMGs.}
\begin{proof}
We prove this theorem by showing that the two-agent two-state game in Fig.~\ref{fig:toyexample} does not have an optimal policy. We first show that the policy $\pi_1:$ $\pi^1(a_1|s_1) = \pi^1(a_1|s_2) = \pi^2(a_2 |s_1) = \pi^2(a_2|s_2) = 1$ is not an optimal policy. Because agents always select different actions in both states, agents always stay in the same state and adversaries have no power to change it. The values for each state are $\bar{V}_{\pi_1}(s_1) = 0, \bar{V}_{\pi_1}(s_2) = 100$. Now we consider the stochastic policy $\pi_2: \pi^1(a_1|s_1) = \pi^1(a_1|s_2) = \pi^2(a_1 |s_1) = \pi^2(a_2|s_2) = 0.5$. The values for each state are $\bar{V}_{\pi_2}(s_1) = \bar{V}_{\pi_2}(s_2) = 50$. Because $\bar{V}_{\pi_2}(s_1) > \bar{V}_{\pi_1}(s_1)$, the policy $\pi_1$ is not an optimal policy for agents.

If there exists an optimal policy $\pi^*$, then it must be better than $\pi_1$ and have $\bar{V}_{\pi^*}(s_1) > 0, \bar{V}_{\pi^*}(s_2) = 100$. In order to have $\bar{V}_{\pi^*}(s_2) = 100$, agents must select different actions in $s_2$ and keep receiving the positive rewards from each step. In order to have $\bar{V}_{\pi^*}(s_1) > 0$, agents must have a chance to select the same action in $s_1$, i.e., $\Pr(a^1 = a^2 \mid s_1) > 0$. However, if $\Pr(a^1 = a^2 \mid s_1) > 0$, then adversaries can have $\chi^i(s_1|s_2) > 0$ for $i = 1,2$ to perturb the state $s_2 $ to $s_1$ and reduce $\bar{V}_{\pi^*}(s_2)$. Therefore, no policy can do better than $\pi_1$ and since $\pi_1$ is not an optimal policy, there is no optimal policy for agents.
\end{proof}

In the comparison of $\pi_1$ and $\pi_2$ in the above proof, it is apparent that it is not always possible to maximize the state value of all states and that trade-offs may need to be made among different states. Using the traditional definition of an optimal policy, it is not possible to determine which policy, $\pi_1$ or $\pi_2$, is better. However, if we use the worst-case expected state value concept from Definition~\ref{def:mean_episode_reward} and assume that the initial state is always $s_2$, then we can conclude that $\pi_1$ is an optimal agent policy, as it gives the maximum worst-case expected state value of 100 in this case.

\section{Stage-wise Equilibrium, Robust Total Nash Equilibrium, and Robust Agent Policy}
\label{sec:existence}

In Theorem~\ref{theorem:notoptimal}, it has been proven that a state-robust totally optimal agent policy does not always exist for SAMGs. This section explores alternative solution concepts for the agent policy in SAMGs. We begin by demonstrating the existence of a unique robust state value function for each agent in~\ref{sec:unique_robust_value}. Building on this property, we establish the existence of a stage-wise equilibrium for each state in~\ref{sec:stage_wise_equilibrium}. However, we show in~\ref{sec:robust_nash} that the robust total Nash equilibrium may not always exist. As an alternative, we propose the concept of a robust agent policy and demonstrate its existence in~\ref{sec:robust_agent_policy}.

We first give a review of the Nash equilibrium used in the literature. The Nash equilibrium is a widely used solution concept in game theory, first proposed by Nash in~\cite{nash1951non} for general-sum finite one-shot games. It states that each player selects the best response strategy to the others' strategies and no player would want to deviate from the equilibrium, as doing so would result in a worse utility. This concept was later extended to infinite games by Debreu~\citep{debreu1952social}, Glicksberg~\citep{glicksberg1952further}, and Fan~\citep{fan1952fixed}.
Markov games, which involve a sequential decision process in a two-player zero-sum setting, were first defined by Shapley in~\cite{shapley1953stochastic}. Fink extended the Nash equilibrium concept to Markov games in~\cite{fink1964equilibrium} and proved that an equilibrium point exists in n-player general-sum discounted Markov games. The uncertainty in transition dynamics of a Markov game was considered in~\cite{nilim2005robust,iyengar2005robust} using a robust optimization approach, with independent proofs for the existence of the equilibrium point. Additionally, uncertainty in utility (or "reward" in reinforcement learning) was also taken into account in~\cite{kardecs2011discounted} for n-player finite state/action discounted Markov games, with a proof for the existence of the equilibrium point.

Despite the extensive study of the Nash equilibrium in game theory, the uncertainty in the state has not yet been explored in the context of Markov games. To the best of our knowledge, we are the first to formulate the problem of n-player finite state/action discounted Markov games with state uncertainty and to demonstrate the existence of a stage-wise equilibrium, as well as the non-existence of a robust total Nash equilibrium.

We use the following Assumption~\ref{assump:finite} throughout this section.
\begin{assumption}
\label{assump:finite}
The global state set $\mathcal{S}$ and the global action set $\mathcal{A}$ are finite sets.
\end{assumption}

\subsection{Unique Robust State Value Function}
\label{sec:unique_robust_value}

Denote the agent policies and adversary policies of all other agents and adversaries except agent $i$ and adversary $i$ as $\pi^{-i}$ and $\chi^{-i}$ respectively. We show that there exists a unique robust state value function for agent $i$ given any $\pi^{-i}$ and $\chi^{-i}$.

\textbf{Definition 4.4}
\textbf{(Robust state value function).}
    A state value function $V^i_{*,\pi^{-i},*, \chi^{-i}}: \mathcal{S} \rightarrow \mathbb{R}$ for agent $i$ given $\pi^{-i}$ and $\chi^{-i}$ is called a robust state value function if for all $s \in \mathcal{S}$,
    \begin{align}
    & V^i_{*,\pi^{-i},*, \chi^{-i}}(s) =  \max_{\pi^i } \min_{\chi^i} \sum_{\rho \in \mathcal{P}_s} \chi(\rho|s) \sum_{a\in \mathcal{A}} \pi(a|\rho)  \nonumber \\
    & \left( r(s,a) + \gamma\sum_{s' \in \mathcal{S}} p(s'|s, a) V^i_{*,\pi^{-i},*, \chi^{-i}}(s') \right).
\end{align}

Note that we use $\pi(a | \rho) = \Pi_{i=1}^n \pi^i(a^i | \rho^i)$ to denote the joint agent policy. We use $\chi(\rho | s) = \Pi_{i=1}^n \chi^i(\rho^i | s)$ to denote the joint adversary policy. 

Before proving the existence of the unique robust state value function, we first introduce some notations for this proof. 
For a given state value function $V^i_{*,\pi^{-i},*, \chi^{-i}}: \mathcal{S} \rightarrow \mathbb{R}$ defined on a finite state set $\mathcal{S}$, we can construct a state value vector $v^i = \mathrm{vec}(V^i_{*,\pi^{-i},*, \chi^{-i}}) = [V^i_{*,\pi^{-i},*, \chi^{-i}}(s)]_{s \in \mathcal{S}} \in \mathcal{V} \defeq \mathbb{R}^{|\mathcal{S}|}$ by traversing all states, where $\mathrm{vec}(\cdot)$ is a vectorization function. The infinity norm on $\mathcal{V}$ is $\| v^i \|_{\infty} = \max_{s \in \mathcal{S}} |V^i(s)|$. Define the total expected return in state $s$ for $\pi^i$ and $\chi^i$ as
\begin{align}
    & f^i_s(v^i, \pi^i, \pi^{-i}, \chi^i, \chi^{-i})  = \sum_{\rho \in \mathcal{P}_s} \chi(\rho|s) \sum_{a\in \mathcal{A}}  \pi(a|\rho) \nonumber \\
    &  \left( r(s,a) + \gamma\sum_{s' \in \mathcal{S}} p(s'|s, a) [\mathrm{vec}^{-1}(v^i)](s') \right),
    \label{equ:f}
\end{align}
where $\pi^{-i}$ and $\chi^{-i}$ denotes the agent policies and the adversary policies of all other agents except agent $i$.


Define the robust state value in state $s$ given $\pi^{-i}$ and $\chi^{-i}$ as a function $\psi^i_s: \mathcal{V} \rightarrow \mathbb{R}$,
\begin{align}
\label{equ:psi_s}
    \psi_s^i(v^i, \pi^{-i}, \chi^{-i}) = \max_{\pi^i } \min_{\chi^i} f^i_s(v^i, \pi^i, \pi^{-i}, \chi^i, \chi^{-i}).
\end{align}

Note that $\psi^i_s$ gives a real number that denotes the total expected return in state $s$ given $\pi^{-i}$ and $\chi^{-i}$. We can construct a mapping $\Psi^i_{\pi, \chi}: \mathcal{V} \rightarrow \mathcal{V}$ from any state value vector $v^i$ to a robust state value vector $[\Psi^i_{\pi, \chi}(v^i)]_{s \in \mathcal{S}}$ by traversing all $s$, that is to say, $[\Psi^i_{\pi, \chi}(v^i)]_{s \in \mathcal{S}} = \psi^i_s(v^i, \pi^{-i}, \chi^{-i})$. 

\begin{lemma}
\label{lemma:contraction_mapping}
    For any $i\in \mathcal{N}$, the function $\Psi^i_{\pi, \chi}: \mathcal{V} \rightarrow \mathcal{V}$ is a contraction mapping given any $\pi^{-i}$ and $\chi^{-i}$ of other agents and adversaries except agent $i$ and adversary $i$.
\end{lemma}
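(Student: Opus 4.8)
The plan is to show that $\Psi^i_{\pi,\chi}$ contracts distances in the sup-norm with modulus $\gamma < 1$, i.e., that $\|\Psi^i_{\pi,\chi}(u^i) - \Psi^i_{\pi,\chi}(w^i)\|_\infty \le \gamma \|u^i - w^i\|_\infty$ for any two state value vectors $u^i, w^i \in \mathcal{V}$. Fix a state $s$; by definition $[\Psi^i_{\pi,\chi}(v^i)]_s = \psi^i_s(v^i,\pi^{-i},\chi^{-i}) = \max_{\pi^i}\min_{\chi^i} f^i_s(v^i,\pi^i,\pi^{-i},\chi^i,\chi^{-i})$. The core observation is that, for fixed policies, the map $v^i \mapsto f^i_s(v^i,\pi^i,\pi^{-i},\chi^i,\chi^{-i})$ differs between $u^i$ and $w^i$ only through the term $\gamma \sum_{s'} p(s'|s,a)\,[\mathrm{vec}^{-1}(v^i)](s')$, and since $\sum_\rho \chi(\rho|s)\sum_a \pi(a|\rho) = 1$ and $\sum_{s'} p(s'|s,a) = 1$ with all coefficients nonnegative, we get the pointwise bound $|f^i_s(u^i,\cdots) - f^i_s(w^i,\cdots)| \le \gamma \max_{s'}|[\mathrm{vec}^{-1}(u^i)](s') - [\mathrm{vec}^{-1}(w^i)](s')| = \gamma\|u^i - w^i\|_\infty$, uniformly over all choices of $\pi^i$ and $\chi^i$.

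Next I would propagate this uniform closeness through the $\max_{\pi^i}\min_{\chi^i}$ operators. The standard fact here is that if two functions $g_1, g_2$ on a common domain satisfy $|g_1(x) - g_2(x)| \le \epsilon$ everywhere, then $|\max_x g_1(x) - \max_x g_2(x)| \le \epsilon$ and likewise $|\min_x g_1(x) - \min_x g_2(x)| \le \epsilon$; composing these for the nested $\max\min$ yields $|\psi^i_s(u^i,\pi^{-i},\chi^{-i}) - \psi^i_s(w^i,\pi^{-i},\chi^{-i})| \le \gamma\|u^i - w^i\|_\infty$. (One should note that the inner $\min_{\chi^i}$ over the simplex $\Delta(\mathcal{P}^i_s)$ and outer $\max_{\pi^i}$ over $\Delta(\mathcal{A}^i)$ are attained since $f^i_s$ is continuous — in fact multilinear — in these arguments and the simplices are compact, so the $\max$/$\min$ are well-defined; this is where Assumption~\ref{assump:finite} is used.) Taking the maximum over $s \in \mathcal{S}$ on the left-hand side gives $\|\Psi^i_{\pi,\chi}(u^i) - \Psi^i_{\pi,\chi}(w^i)\|_\infty \le \gamma\|u^i - w^i\|_\infty$, establishing that $\Psi^i_{\pi,\chi}$ is a $\gamma$-contraction on the complete metric space $(\mathcal{V}, \|\cdot\|_\infty)$.

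I expect the only genuine subtlety — not a difficulty so much as a point requiring care — to be the justification that the $\max\min$ operations commute appropriately with the uniform bound, i.e., cleanly stating and applying the lemma that $\sup$/$\inf$ (here attained) are nonexpansive in the sup-norm on the underlying function. Everything else is a routine telescoping estimate exploiting that $\chi$, $\pi$, and $p$ are probability distributions so their convolution is an averaging (hence nonexpansive) operator, and that the reward terms $r(s,a)$ cancel in the difference. Since the domains over which we optimize ($\Delta(\mathcal{A}^i)$, $\Delta(\mathcal{P}^i_s)$) do not depend on $v^i$, the nonexpansiveness of $\max\min$ applies directly without any additional argument about moving domains.
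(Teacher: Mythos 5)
Your proposal is correct and follows essentially the same route as the paper: establish the uniform bound $|f^i_s(u^i,\cdot)-f^i_s(w^i,\cdot)|\le\gamma\|u^i-w^i\|_\infty$ (the rewards cancel and the policy/transition weights form an averaging operator) and then push it through the nested $\max\min$. The only difference is presentational — where you invoke the nonexpansiveness of $\max$/$\min$ as a general lemma, the paper unrolls that step explicitly by fixing the optimizers $\pi^{i*},\chi^{i*}$ and a suitably chosen $\varphi^{i*}_2$, proving one inequality and then appealing to symmetry for the other.
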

\begin{proof}
    Let us consider two vectors $v^i, z^i \in \mathcal{V}$. For any $i \in \mathcal{N}$, given any $\pi^{-i}$ and $\chi^{-i}$, for all $s \in \mathcal{S}$, we have 
\begin{align}
    \psi^i_s(v^i, \pi^{-i}, \chi^{-i}) &= \max_{\pi^i } \min_{\chi^i} f^i_s(v^i, \pi^i, \pi^{-i}, \chi^i, \chi^{-i}) \nonumber \\
    & = f^i_s(v^i, \pi^{i*}, \pi^{-i}, \chi^{i*}, \chi^{-i}),
\end{align}
where $\pi^{i*}$ is the corresponding maximizer, and $\chi^{i*}$ is the corresponding optimizer for $\pi^{i*}$. Similarly, with the optimizers $\omega^{i*}$ and $\varphi^{i*}_1$ for the following maximin optimization problem, we have
\begin{align}
    \psi^i_s(z^i, \pi^{-i}, \chi^{-i}) &= \max_{\omega^i } \min_{\varphi^i} f^i_s(z^i, \omega^i, \pi^{-i}, \varphi^i, \chi^{-i}) \nonumber \\
    & = f^i_s(z^i, \omega^{i*}, \pi^{-i}, \varphi^{i*}_1, \chi^{-i}) \nonumber \\
    & \geq f^i_s(z^i, \pi^{i*}, \pi^{-i}, \varphi^{i*}_2, \chi^{-i}),
\end{align}
where 
\begin{equation}
    \varphi^{i*}_2 = \argmin_{\varphi^i} f^i_s(z^i, \pi^{i*}, \pi^{-i}, \varphi^i, \chi^{-i}) .
\end{equation}
Then, for any $i \in \mathcal{N}$, given any $\pi^{-i}$ and $\chi^{-i}$, for all $s \in \mathcal{S}$, it holds that
\begin{align}
\label{equ:contraction_mapping}
    & ~~~~ \psi^i_s(v^i, \pi^{-i}, \chi^{-i}) - \psi^i_s(z^i, \pi^{-i}, \chi^{-i}) \nonumber \\
    & = f^i_s(v^i, \pi^{i*}, \pi^{-i}, \chi^{i*}, \chi^{-i}) - f^i_s(z^i, \omega^{i*}, \pi^{-i}, \varphi^{i*}_1, \chi^{-i}) \nonumber \\
    & \leq f^i_s(v^i, \pi^{i*}, \pi^{-i}, \chi^{i*}, \chi^{-i}) - f^i_s(z^i, \pi^{i*}, \pi^{-i}, \varphi^{i*}_2, \chi^{-i}) \nonumber \\
    & \leq f^i_s(v^i, \pi^{i*}, \pi^{-i}, \varphi^{i*}_2, \chi^{-i}) - f^i_s(z^i, \pi^{i*}, \pi^{-i}, \varphi^{i*}_2, \chi^{-i}) \nonumber \\
    & = \sum_{\rho \in \mathcal{P}_s} \varphi^{i*}_2(\rho^i|s) \prod_{j \neq i} \chi^{j}(\rho^j|s) \sum_{a\in \mathcal{A}} \pi^{i*}(a^i|\rho^i) \times \nonumber \\
    &  ~~~~  \prod_{k\neq i}  \pi^{k}(a^k|\rho^k) \left( r + \gamma\sum_{s' \in \mathcal{S}} p(s'|s, a) [\mathrm{vec}^{-1}(v^i)](s') \right) \nonumber \\
    & ~~~~ - \sum_{\rho \in \mathcal{P}_s} \varphi^{i*}_2(\rho^i|s) \prod_{j \neq i} \chi^{j}(\rho^j|s) \sum_{a\in \mathcal{A}} \pi^{i*}(a^i|\rho^i) \times \nonumber \\
    &  ~~~~  \prod_{k\neq i}  \pi^{k}(a^k|\rho^k) \left( r + \gamma\sum_{s' \in \mathcal{S}} p(s'|s, a) [\mathrm{vec}^{-1}(z^i)](s') \right) \nonumber \\
    & = \sum_{\rho \in \mathcal{P}_s} \varphi^{i*}_2 (\rho^i|s) \prod_{j \neq i} \chi^{j}(\rho^j|s) \sum_{a\in \mathcal{A}} \pi^{i*}(a^i|\rho^i) \times \nonumber \\
    &  ~~~~  \prod_{k\neq i}  \pi^{k}(a^k|\rho^k) \gamma\sum_{s' \in \mathcal{S}} p(s'|s, a) \times \nonumber \\
    & ~~~~ \left\{[\mathrm{vec}^{-1}(v^i)](s') - [\mathrm{vec}^{-1}(z^i)](s') \right\} \nonumber \\
    & \leq \sum_{\rho \in \mathcal{P}_s} \varphi^{i*}_2 (\rho^i|s) \prod_{j \neq i} \chi^{j}(\rho^j|s) \sum_{a\in \mathcal{A}} \pi^{i*}(a^i|\rho^i) \times \nonumber \\
    &  ~~~~  \prod_{k\neq i}  \pi^{k}(a^k|\rho^k) \gamma\sum_{s' \in \mathcal{S}} p(s'|s, a) \|v^i - z^i\|_{\infty} \nonumber \\
    & = \gamma \|v^i - z^i\|_{\infty}.
\end{align}


The second inequality in Eq.~(\ref{equ:contraction_mapping}) follows 
\begin{equation}
    \chi^{i*} = \argmin_{\chi^i} f^i_s(v^i, \pi^{i*}, \pi^{-i}, \chi^{i}, \chi^{-i}).
\end{equation}

Because for any $i \in \mathcal{N}$, given any $\pi^{-i}$ and $\chi^{-i}$, for all $s \in \mathcal{S}$
\begin{equation}
    \psi^i_s(v^i, \pi^{-i}, \chi^{-i}) - \psi^i_s(z^i, \pi^{-i}, \chi^{-i}) \leq \gamma \|v^i - z^i\|_{\infty},
\end{equation}
Based on symmetry, we have
\begin{align}
    \psi^i_s(z^i, \pi^{-i}, \chi^{-i}) - \psi^i_s(v^i, \pi^{-i}, \chi^{-i}) & \leq \gamma \|z^i - v^i\|_{\infty} \nonumber \\
    & = \gamma \|v^i - z^i\|_{\infty}.
\end{align}
Thus, it holds that for any $i \in \mathcal{N}$, given any $\pi^{-i}$ and $\chi^{-i}$
\begin{equation}
    \| \Psi^i_{\pi, \chi}(v^i) - \Psi^i_{\pi, \chi}(z^i)\|_{\infty} \leq \gamma \|v^i - z^i\|_{\infty},
\end{equation}
that is to say, the function $\Psi^i_{\pi, \chi}$ is a contraction mapping.
\end{proof}

\textbf{Theorem 4.5}
\textbf{(Existence of Unique Robust State Value Function).}
\label{theorem:unique_statevalue}
\textit{For an SAMG with finite state and finite action spaces, for any $i \in \mathcal{N}$, given any $\pi^{-i}$ and $\chi^{-i}$ of other agents and adversaries except agent $i$ and adversary $i$, there exists a unique robust state value function $V^i_{*,\pi^{-i},*, \chi^{-i}}: \mathcal{S} \rightarrow \mathbb{R}$ for agent $i$ such that for all $s \in \mathcal{S}$,
\begin{align}
\label{equ:unique_statevalue}
    & V^i_{*,\pi^{-i},*, \chi^{-i}}(s) =  \max_{\pi^i } \min_{\chi^i} \sum_{\rho \in \mathcal{P}_s} \chi(\rho|s) \sum_{a\in \mathcal{A}} \pi(a|\rho)  \nonumber \\
    &  \left( r(s,a) + \gamma\sum_{s' \in \mathcal{S}} p(s'|s, a) V^i_{*,\pi^{-i},*, \chi^{-i}}(s') \right).
\end{align}}


\begin{proof}
For any $i \in \mathcal{N}$, there exists a state value function $V^i_{*,\pi^{-i},*, \chi^{-i}}$ satisfying~(\ref{equ:unique_statevalue}) if and only if $v^i= \mathrm{vec}(V^i_{*,\pi^{-i},*, \chi^{-i}})$ is a fixed point of $\Psi^i_{\pi, \chi}: \mathcal{V} \rightarrow \mathcal{V}$, where $[\Psi^i_{\pi, \chi}(v^i)]_{s \in \mathcal{S}} = \psi^i_s(v^i, \pi^{-i}, \chi^{-i})$ and $\psi^i_s(v^i, \pi^{-i}, \chi^{-i})$ is defined in~(\ref{equ:psi_s}). We use Banach's fixed point theorem to prove this as follows.

Because any finite-dimensional normed vector space is complete~\citep{kreyszig1991introductory}, the $(\mathcal{V}, \| \cdot \|_\infty)$ is a complete Banach space. Also, for any $i \in \mathcal{N}$, given any $\pi^{-i}$ and $\chi^{-i}$, the function $\Psi^i_{\pi, \chi}$ is a contraction mapping according to Lemma~\ref{lemma:contraction_mapping}. Therefore, by Banach's fixed point theorem, there is a unique fixed point $v^i$ such that $\Psi^i_{\pi, \chi}(v^i) = v^i$. In other words, for any $i \in \mathcal{N}$, given any $\pi^{-i}$ and $\chi^{-i}$, there exists a unique $V^i_{*,\pi^{-i},*, \chi^{-i}}$ such that 
\begin{align}
\label{def_V^i(s)}
    V^i_{*,\pi^{-i},*, \chi^{-i}}(s) & =  \max_{\pi^i } \min_{\chi^i} f^i_s(v^i, \pi^i, \pi^{-i}, \chi^i, \chi^{-i}).
\end{align}
\end{proof}

Denote the state value function for agent $i$ given any $\pi^{-i}$ and $\chi^{-i}$ of other agents and adversaries except agent $i$ and adversary $i$ as
\begin{equation}
    V^i_{\pi^{i}, \pi^{-i}, \chi^{i}, \chi^{-i}}(s) = f^i_s(v^i, \pi^i, \pi^{-i}, \chi^i, \chi^{-i}),
\end{equation}
where $v^i = \mathrm{vec}(V^i_{*,\pi^{-i},*, \chi^{-i}})$. Then we have the following corollary for Theorem~\ref{theorem:unique_statevalue}.

\begin{corollary}
\label{corollary:corollary_unique_value}
For an SAMG with finite state and finite action spaces, let $V^i_{*,\pi^{-i},*, \chi^{-i}}$ be the unique robust state value function for agent $i$ given any $\pi^{-i}$ and $\chi^{-i}$ such that for all $s \in \mathcal{S}$,
\begin{align}
\label{equ:v_s}
    V^i_{*,\pi^{-i},*, \chi^{-i}}(s) & =  \max_{\pi^i } \min_{\chi^i} f^i_s(v^i, \pi^i, \pi^{-i}, \chi^i, \chi^{-i}) \nonumber \\
        &= f^i_s(v^i, \pi^{i*}, \pi^{-i}, \chi^{i*}, \chi^{-i}),
\end{align}
where $v^i = \mathrm{vec}(V^i_{*,\pi^{-i},*, \chi^{-i}})$, $\pi^{i*}$ is the corresponding maximizer at state $s$, and $\chi^{i*}$ is the corresponding optimizer for $\pi^{i*}$ at state $s$, then for state $s$ it holds that
$V^i_{\pi^{i*}, \pi^{-i}, \chi^{i*}, \chi^{-i}}(s) \geq V^i_{\pi^{i}, \pi^{-i}, \chi^{i*}, \chi^{-i}}(s)$ for any $\pi^i$, and 
$V^i_{\pi^{i*}, \pi^{-i}, \chi^{i*}, \chi^{-i}}(s) \leq V^i_{\pi^{i*}, \pi^{-i}, \chi^{i}, \chi^{-i}}(s)$ for any $\chi^i$.
\end{corollary}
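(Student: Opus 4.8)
The plan is to recognize the statement as the two halves of a saddle-point inequality for the one-step operator $f^i_s$ with the fixed continuation value $v^i = \mathrm{vec}(V^i_{*,\pi^{-i},*,\chi^{-i}})$ — the same vector $v^i$ appears on both sides, so the corollary is really a claim about the function $f^i_s$ alone. Throughout, fix $i$, $\pi^{-i}$, $\chi^{-i}$ and a state $s$, and abbreviate $f(\pi^i,\chi^i) \defeq f^i_s(v^i,\pi^i,\pi^{-i},\chi^i,\chi^{-i})$, so that by definition $V^i_{\pi^i,\pi^{-i},\chi^i,\chi^{-i}}(s) = f(\pi^i,\chi^i)$ and, by Theorem~\ref{theorem:unique_statevalue}, $V^i_{*,\pi^{-i},*,\chi^{-i}}(s) = \max_{\pi^i}\min_{\chi^i} f(\pi^i,\chi^i) = f(\pi^{i*},\chi^{i*})$ with $\pi^{i*}$ the outer maximizer and $\chi^{i*}$ an inner minimizer for $\pi^{i*}$.

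The right-hand inequality is immediate from the construction of $\chi^{i*}$: since $\chi^{i*}$ minimizes $\chi^i \mapsto f(\pi^{i*},\chi^i)$ over $\Delta(\mathcal{P}^i_s)$, we have $f(\pi^{i*},\chi^{i*}) \le f(\pi^{i*},\chi^i)$ for every $\chi^i$, i.e.\ $V^i_{\pi^{i*},\pi^{-i},\chi^{i*},\chi^{-i}}(s) \le V^i_{\pi^{i*},\pi^{-i},\chi^i,\chi^{-i}}(s)$.

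The left-hand inequality is the substantive part. First I would record the structural properties of $f$: inspecting the formula for $f^i_s$, each summand (indexed by $(\rho,a)$) contains exactly one factor $\pi^i(a^i|\rho^i)$ and exactly one factor $\chi^i(\rho^i|s)$, so $f$ is continuous (a polynomial), affine in $\pi^i$ when $(\chi^i,\pi^{-i},\chi^{-i})$ is fixed, and affine in $\chi^i$ when the rest is fixed; moreover $\chi^i(\cdot|s)$ ranges over the simplex $\Delta(\mathcal{P}^i_s)$ and the block of $\pi^i$ entering $f$ ranges over $\prod_{\rho^i\in\mathcal{P}^i_s}\Delta(\mathcal{A}^i)$, both compact and convex. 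Then by the minimax theorem (Sion's, or von Neumann's since $f$ is bilinear on products of simplices) $\max_{\pi^i}\min_{\chi^i} f = \min_{\chi^i}\max_{\pi^i} f$, and by the extreme value theorem all inner and outer optima are attained, so $f$ has a saddle point. The maximin-optimal $\pi^{i*}$ together with a minimax-optimal $\chi^{i*}$ then forms a saddle point; and one checks that any minimax-optimal $\chi^{i*}$ is in particular a best response to $\pi^{i*}$, hence a legitimate choice of ``the corresponding optimizer for $\pi^{i*}$''. From the saddle-point property, $f(\pi^i,\chi^{i*}) \le f(\pi^{i*},\chi^{i*})$ for all $\pi^i$, which rewrites as $V^i_{\pi^i,\pi^{-i},\chi^{i*},\chi^{-i}}(s) \le V^i_{\pi^{i*},\pi^{-i},\chi^{i*},\chi^{-i}}(s)$.

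The main obstacle is precisely this last step: for a general concave–convex function an arbitrary inner minimizer $\chi^{i*}$ against a maximin-optimal $\pi^{i*}$ need not form a saddle point, so the left inequality genuinely rests on the minimax equality $\max_{\pi^i}\min_{\chi^i}f = \min_{\chi^i}\max_{\pi^i}f$ (and on choosing $\chi^{i*}$ among the minimax-optimal minimizers). Everything else — the affineness of $f$ in each block of variables, compactness and convexity of the simplex domains, and attainment of the optima — is routine bookkeeping I would carry out only to the extent needed to license the minimax theorem.
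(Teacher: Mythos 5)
Your proof is correct, and it is worth noting that the paper itself offers no proof of this corollary at all: it is stated as an immediate consequence of Theorem~\ref{theorem:unique_statevalue} and the definitional identification $V^i_{\pi^{i},\pi^{-i},\chi^{i},\chi^{-i}}(s)=f^i_s(v^i,\pi^i,\pi^{-i},\chi^i,\chi^{-i})$. You correctly observe that only the right-hand inequality is immediate from $\chi^{i*}$ being a best response to $\pi^{i*}$, while the left-hand inequality genuinely requires the minimax equality $\max_{\pi^i}\min_{\chi^i}f=\min_{\chi^i}\max_{\pi^i}f$ for the bilinear stage function $f^i_s(v^i,\cdot,\pi^{-i},\cdot,\chi^{-i})$ on products of simplices; your route via Sion/von Neumann supplies exactly the ingredient the paper leaves implicit. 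The paper effectively obtains the same saddle point by a different device later, in the proof of Theorem~\ref{theorem:existence_stage_wise}: there the pair $(\pi^{i*},\chi^{i*})$ is extracted from a Nash equilibrium of an auxiliary $2n$-player stage game (via Debreu--Glicksberg--Fan), which hands over both inequalities simultaneously, and the corollary is then invoked only to relabel $f$-inequalities as $V$-inequalities. Your caveat about the choice of $\chi^{i*}$ is also substantively right and worth keeping: as literally worded (``the corresponding optimizer for $\pi^{i*}$''), the corollary would be false for an \emph{arbitrary} minimizer of $f(\pi^{i*},\cdot)$ --- e.g.\ in the $2\times 2$ matrix game with payoffs $0,0$ in the first row and $-1,1$ in the second, every column is a best response to the maximin row but not every such pair is a saddle point --- so the statement only holds under your reading, where $\chi^{i*}$ is taken minimax-optimal (equivalently, part of a saddle point), which is consistent with how the corollary is actually used in the paper.
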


\subsection{Existence of the Stage-wise Equilibrium}
\label{sec:stage_wise_equilibrium}

Before we show the existence of the robust total Nash equilibrium, we first show a concept of the stage-wise equilibrium. 

\begin{defi}[\textbf{Stage-wise Equilibrium}]
\label{def:stage_equilibrium}
For an SAMG, the policy $(\pi^*, \chi^*)$ is a stage-wise equilibrium for state $s$ if for all $i \in \mathcal{N}$ and all $\pi^i$ and $\chi^i$, it holds that
\begin{align}
    V^i_{\pi^i, \pi^{-i*}, \chi^{i*}, \chi^{-i*}}(s) &\leq V^i_{\pi^{i*}, \pi^{-i*}, \chi^{i*}, \chi^{-i*}}(s) \nonumber \\
    & \leq V^i_{\pi^{i*}, \pi^{-i*}, \chi^i, \chi^{-i*}}(s),
\end{align}
where $\pi^{-i}$ and $\chi^{-i}$ denotes the agent policies and adversary policies of all the other agents except agent $i$, respectively.
\end{defi}

The Nash equilibrium was originally proposed by Nash for finite one-shot games, in which the state transition of the environment is not considered. When the concept of Nash equilibrium is extended to Markov games, the existence of the equilibrium is shown through the existence of a state-wise equilibrium for each state. A policy that is a stage-wise equilibrium for all states is considered a Nash equilibrium for the Markov game.

This idea brings the following proposition to show the relationship between the robust total Nash equilibrium and the stage-wise equilibrium for SAMGs.

\begin{prop}
   The policy $(\pi^*, \chi^*)$ is a robust total Nash equilibrium for an SAMG if the policy $(\pi^*, \chi^*)$ is a stage-wise equilibrium for all $s \in \mathcal{S}$.
\end{prop}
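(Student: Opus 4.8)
The plan is to unfold the two definitions and observe that the condition defining a robust total Nash equilibrium is nothing but the family, indexed by $s \in \mathcal{S}$, of the conditions defining a stage-wise equilibrium. First I would fix an arbitrary $s \in \mathcal{S}$. By hypothesis $(\pi^*, \chi^*)$ is a stage-wise equilibrium for $s$, so Definition~\ref{def:stage_equilibrium} yields, for every $i \in \mathcal{N}$ and all $\pi^i$ and $\chi^i$,
\[
V^i_{\pi^i, \pi^{-i*}, \chi^{i*}, \chi^{-i*}}(s) \le V^i_{\pi^{i*}, \pi^{-i*}, \chi^{i*}, \chi^{-i*}}(s) \le V^i_{\pi^{i*}, \pi^{-i*}, \chi^i, \chi^{-i*}}(s).
\]
Since $s$ was arbitrary, this chain of inequalities holds for all $s \in \mathcal{S}$, all $i \in \mathcal{N}$, and all $\pi^i, \chi^i$, which is verbatim the defining property of a robust total Nash equilibrium in Definition~\ref{def:equilibrium_statevalue}. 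Hence $(\pi^*, \chi^*)$ is a robust total Nash equilibrium.

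There is no real obstacle in this argument; the only point that warrants a sentence of care is that the value functions appearing in the two definitions denote the same object. In both, $V^i_{\pi^i, \pi^{-i*}, \chi^{i*}, \chi^{-i*}}(s)$ is $f^i_s(v^i, \pi^i, \pi^{-i*}, \chi^{i*}, \chi^{-i*})$ with $v^i = \mathrm{vec}(V^i_{*,\pi^{-i*},*,\chi^{-i*}})$ the unique robust state-value vector guaranteed by Theorem~\ref{theorem:unique_statevalue}; the key observation is that $v^i$ depends only on $\pi^{-i*}$ and $\chi^{-i*}$, not on agent $i$'s own (possibly deviating) policy, so the continuation value is held fixed across every deviation quantified over in both definitions. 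With that noted, the proposition records the elementary fact that a robust total Nash equilibrium is, by construction, precisely a policy profile that is simultaneously a stage-wise equilibrium at every state — the analogue of the classical state-by-state characterization of stationary Nash equilibria of Markov games. (The converse implication holds equally immediately, which is the sense in which the non-existence result, Theorem~\ref{theorem:existence_nash}, amounts to showing that no single profile can be a stage-wise equilibrium at all states at once.)
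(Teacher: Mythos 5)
Your proof is correct and matches the paper's approach: the paper's own proof simply remarks that the proposition is an immediate consequence of Definitions~\ref{def:equilibrium_statevalue} and~\ref{def:stage_equilibrium}, which is precisely the definition-unfolding you carry out (with the added, and welcome, observation that the value functions in the two definitions coincide).
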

\begin{proof}
    It is a natural result according to the Definition~\ref{def:equilibrium_statevalue} and the Definition~\ref{def:stage_equilibrium}.
\end{proof}

We show the existence of the stage-wise equilibrium defined in Definition~\ref{def:stage_equilibrium} in the following theorem.
\begin{theorem}[\textbf{Existence of Stage-wise equilibrium}]
\label{theorem:existence_stage_wise}
For SAMGs with finite state and finite action spaces, the stage-wise equilibrium defined in Definition~\ref{def:stage_equilibrium} exists for any $s \in \mathcal{S}$.
\end{theorem}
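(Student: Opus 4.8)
The plan is to establish the stage-wise equilibrium at a fixed state $s$ by a Kakutani fixed-point argument applied to a product of per-agent best-response correspondences, using Theorem~\ref{theorem:unique_statevalue} to make the continuation values well defined and continuous, and reducing each agent's stage condition to a bilinear saddle-point problem solvable by von Neumann's minimax theorem (consistently with Corollary~\ref{corollary:corollary_unique_value}).

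Fix $s \in \mathcal{S}$. First I would freeze, at arbitrary fixed values, all policy components that do not enter the stage-wise conditions at $s$: the entries $\pi^i(\cdot\mid\rho^i)$ for $\rho^i \in \mathcal{S}\setminus\mathcal{P}^i_s$ and the entries $\chi^i(\cdot\mid s')$ for $s'\neq s$, for every $i\in\mathcal{N}$ (these are still needed so that continuation values are defined). The remaining free variables range over the compact convex set $\mathcal{Z} \defeq \big(\prod_{i\in\mathcal{N}}\prod_{\rho^i\in\mathcal{P}^i_s}\Delta(\mathcal{A}^i)\big)\times\big(\prod_{i\in\mathcal{N}}\Delta(\mathcal{P}^i_s)\big)$. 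For $z\in\mathcal{Z}$, assemble full profiles $(\pi,\chi)$ by combining $z$ with the frozen parts, and for each $i$ set $v^i(z)\defeq\mathrm{vec}(V^i_{*,\pi^{-i},*,\chi^{-i}})$, which exists and is unique by Theorem~\ref{theorem:unique_statevalue}; note $v^i(z)$ does not depend on the $\pi^i$-components of $z$. Viewed as a function of $\big(\pi^i(\cdot\mid\rho^i)\big)_{\rho^i\in\mathcal{P}^i_s}$ and $\chi^i(\cdot\mid s)$ with $\pi^{-i},\chi^{-i}$ held fixed, the map $f^i_s(v^i(z),\cdot,\pi^{-i},\cdot,\chi^{-i})$ from~(\ref{equ:f}) is bilinear and continuous on a product of simplices, so by von Neumann's minimax theorem it has a nonempty, compact, convex set of saddle points. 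Define $\Phi:\mathcal{Z}\rightrightarrows\mathcal{Z}$ sending $z$ to the product over $i\in\mathcal{N}$ of these saddle-point sets. A fixed point $z^\star$ yields an assembled profile $(\pi^\star,\chi^\star)$ for which, at each $i$, the pair $\big((\pi^{i\star}(\cdot\mid\rho^i))_{\rho^i\in\mathcal{P}^i_s},\chi^{i\star}(\cdot\mid s)\big)$ is a saddle point of $f^i_s(v^i(z^\star),\cdot,\pi^{-i\star},\cdot,\chi^{-i\star})$; since in Definition~\ref{def:stage_equilibrium} the deviation $\pi^i$ enters $V^i_{\pi^i,\pi^{-i\star},\chi^{i\star},\chi^{-i\star}}(s)$ only through the components $\pi^i(\cdot\mid\rho^i)$, $\rho^i\in\mathcal{P}^i_s$ (the continuation value being $v^{i\star}$, independent of $\pi^i$), and symmetrically for $\chi^i$, the saddle property translates verbatim into the two inequalities of Definition~\ref{def:stage_equilibrium} for all $i$. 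Hence $(\pi^\star,\chi^\star)$ is a stage-wise equilibrium for $s$.

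It then remains to check the Kakutani hypotheses for $\Phi$. Nonemptiness and convex, compact values are immediate from the minimax theorem and the product structure: each agent's saddle-point set is the product of the (convex, compact) set of $\pi^i$-components attaining $\max_{\pi^i}\min_{\chi^i}f^i_s$ with the (convex, compact) set of $\chi^i(\cdot\mid s)$ attaining $\min_{\chi^i}\max_{\pi^i}f^i_s$, and since $\max\min=\min\max$ here these are exactly the pairs satisfying the inequalities of Corollary~\ref{corollary:corollary_unique_value}. For the closed graph, Berge's maximum theorem applied to the inner $\min_{\chi^i}$ and then the outer $\max_{\pi^i}$ (and symmetrically) gives upper hemicontinuity of the saddle-point correspondence in the data $(v^i,\pi^{-i},\chi^{-i})$, so everything reduces to continuity of $z\mapsto v^i(z)$. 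This is the one place the earlier machinery is essential: $v^i(z)$ is the unique fixed point of $\Psi^i_{\pi,\chi}$, which Lemma~\ref{lemma:contraction_mapping} shows is a contraction on $(\mathcal{V},\|\cdot\|_\infty)$ with modulus $\gamma<1$ \emph{uniform} in $(\pi,\chi)$; since $(z,v)\mapsto\Psi^i_{\pi,\chi}(v)$ is continuous (a composition of the multilinear map $f^i_s$ with the continuous $\max$--$\min$ operation from~(\ref{equ:psi_s}) and the continuous assembly $z\mapsto(\pi,\chi)$), the standard uniform-contraction parametric-fixed-point lemma shows $v^i(z)$ is continuous on $\mathcal{Z}$. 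Kakutani's theorem then furnishes the fixed point $z^\star$.

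The main obstacle is precisely this continuity of the robust continuation values $V^i_{*,\pi^{-i},*,\chi^{-i}}$ in the other players' policies: pointwise contraction is not enough, and one must extract from Lemma~\ref{lemma:contraction_mapping} that the contraction modulus $\gamma$ is independent of $(\pi,\chi)$ to run the parametric fixed-point argument. A secondary, purely bookkeeping point is that the agent's ``move'' in the stage game at $s$ is the whole bundle $(\pi^i(\cdot\mid\rho^i))_{\rho^i\in\mathcal{P}^i_s}$ (the entry $\pi^i(\cdot\mid\rho^i)$ may be shared among stage games of several true states whose admissible sets contain $\rho^i$), so freezing the complementary entries $\rho^i\notin\mathcal{P}^i_s$ is consistent and preserves the bilinear structure. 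As an alternative route one could view the SAMG together with its $n$ per-agent adversaries as a $2n$-player finite discounted stochastic game with state-dependent adversary action sets and invoke Fink-type equilibrium-existence results; but the direct Kakutani argument above is self-contained given Lemma~\ref{lemma:contraction_mapping} and Theorem~\ref{theorem:unique_statevalue}.
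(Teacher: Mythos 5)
Your proof is correct, but it takes a genuinely different route from the paper's. The paper fixes $s$, builds a $2n$-player one-shot game in which each agent's \emph{pure} action is a full contingency plan (one action per admissible perturbed state in $\mathcal{P}^i_s$), takes mixed strategies over these plans, recovers behavioral policies as marginals, and invokes the Debreu--Glicksberg--Fan existence theorem with the utility $u^i_s(\sigma^i_s,\sigma^{-i}_s)=f^i_s(v^{i*},\pi^i,\pi^{-i},\chi^i,\chi^{-i})$ (and its negation for the adversaries). You instead work directly with behavioral strategies on the product of simplices, exploit that each agent--adversary pair plays a \emph{zero-sum bilinear} subgame given the others (so von Neumann/Sion yields a nonempty convex compact saddle set, consistent with Corollary~\ref{corollary:corollary_unique_value}), and glue the pairs with Kakutani applied to the product of saddle-point correspondences. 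Because the payoff in~(\ref{equ:f}) is multilinear in the per-$(i,\rho^i)$ blocks, only marginals matter, so the two parametrizations are equivalent; your version just skips the marginalization bookkeeping. The more substantive difference is that both routes need the continuation values $v^{i*}=\mathrm{vec}(V^i_{*,\pi^{-i},*,\chi^{-i}})$ to depend \emph{continuously} on the other players' strategies --- in the paper this is required for the joint continuity hypothesis of Debreu--Glicksberg--Fan but is not verified, whereas you supply it explicitly via the $\gamma$-uniform contraction modulus in Lemma~\ref{lemma:contraction_mapping} and the parametric fixed-point lemma. That makes your argument somewhat longer but closes a step the paper leaves implicit; the paper's construction, in exchange, is more self-contained as a single appeal to a classical $N$-player existence theorem and makes the ``action $=$ contingency plan'' structure of the stage game very explicit.
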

\begin{proof}
Let us construct a $2n$ player game for any $s\in \mathcal{S}$. We have $n$ agents and $n$ adversaries in the player set. 
We introduce uniform notations for the agents and adversaries to describe a $2n$ player game at state $s$. The player set $\mathcal{I} =\{1,..., n, n+1, ..., 2n\}$. The first half of the player set $\{1, ..., n\}$ represents agents, while the second half $\{n+1, ..., 2n\}$ represents adversaries.
The set of available actions for player $i$ is 
\begin{equation}
    A^i_s = \begin{cases} 
			 \underbrace{\mathcal{A}^i \times \mathcal{A}^i \cdots \times \mathcal{A}^i}_{\text{total number: }|\mathcal{P}^i_s|}, & i = 1, ..., n; \\
			 \mathcal{P}^{i-n}_s, & i = n+1, ..., 2n. \end{cases}
\end{equation}
Each adversary's action set includes all possible perturbed states in the admissible perturbed state set at state $s$. Each agent's action set includes all possible joint actions given every possible perturbed state. Take the two-agent two-state game in Fig.~\ref{fig:toyexample} as an example, the player set $\mathcal{I}= \{1, 2, 3, 4\}$. Player 3 is the adversary for agent player 1. Player 4 is the adversary for agent player 2. If the current true state is $s_1$, then $A^1_{s_1} = A^2_{s_1} = \{(a_1, a_1), (a_1, a_2), (a_2, a_2), (a_2, a_1)\}$ are the action sets for two agent players. In $A^1_{s_1}$ for agent 1, the joint action $(a_1, a_2)$ means selecting $a_1$ if the perturbed state for agent 1 is $s_1$ and selecting $a_2$ if the perturbed state for agent 1 is $s_2$. For two adversary players, $A^3_{s_1} = A^4_{s_1} = \{s_1, s_2\}$, as adversaries can perturb the true state $s_1$ to $s_2$.

We consider the mixed strategy $\sigma^i_s \in \Delta(A^i_s)$ for player $i$. Note that the mixed strategy for each adversary gives us the probability distribution of all possible perturbed states for state $s$, i.e. ${\chi}^{i-n}(\rho^{i-n}|s) = \sigma^{i}_s(\rho^{i - n})$ for $i =n+1, ..., 2n$. Then we show how we can get each agent's policy ${\pi}^i(a^i|\rho^i)$ based on its mixed strategy $\sigma^i_s$ by calculating the marginal probabilities. Denote the total number of possible perturbed state for agent $i$ at state $s$ as $P$ such that $P= |\mathcal{P}^i_s|$. Here we drop the subscript $s$ in $P_s$ for a concise representation. The perturbed state set for agent $i$ is represented as $\{\rho^i_1, \rho^i_2, \ldots, \rho^i_P \}$. Denote the joint action of agent $i$ as $b^i = (b^i_1, b^i_2, \ldots, b^i_P)$ where $b^i_k$ is the action selected for the perturbed state $\rho^i_k \in \mathcal{P}^i_s$. Then the mixed strategy $\sigma^i_s(b^i_1, b^i_2, ..., b^i_P)$ gives us the joint probability of selecting $b^i_k$ for $\rho^i_k$ for all $k = 1, 2, ..., P$. We can get the marginal probability of selecting action $a^i$ given the perturbed state $\rho^i_k \in \mathcal{P}^i_s$ as 
\begin{equation}
    {\pi}^i(a^i|\rho^i_k) = \sum_{ \{b^i \in A^i_s \mid b^i_k = a^i\} } \sigma^i_s(b^i_1, b^i_2, ..., b^i_P).
\end{equation}
The marginal probability of selecting action $a^i$ given the perturbed state $\rho^i_k$ is calculated by summing up the joint probability over all joint actions in which agent $i$ selects $a^i$ given the perturbed state $\rho^i_k$. Take the two-agent two-state game in Fig.~\ref{fig:toyexample} as an example, if the current perturbed state for agent 1 is $\rho^1 = s_1$, then agent 1's policy is \begin{align}
    \pi^1(a_1 | \rho^1 = s_1) &= \sigma^1(a_1, a_1) + \sigma^1(a_1, a_2) \nonumber \\
    \pi^1(a_2 | \rho^1 = s_1) &= \sigma^1(a_2, a_1) + \sigma^1(a_2, a_2).
\end{align}

Note that the mixed strategy $\sigma^i_s \in \Delta(A^i_s)$ only gives part of the agent and adversary policies. For example, the mixed strategy for the adversaries only gives a distribution of the perturbed states for $s_t = s$. We construct the complete agent and adversary policies as follows: For $i = 1, ..., n$, the agent $i$'s policy is
\begin{equation}
\label{equ:constructed_agent_policy}
    \pi^i(a^i | \rho^i) = 
    \begin{cases} 
			 \sum_{ \{b^i \in A^i_s \mid b^i_k = a^i\} } \sigma^i_s(b^i_1, b^i_2, ..., b^i_P), \\
			 \text{for  } \rho^i = \rho^i_k \in \mathcal{P}^i_s; \\
			 \mathcal{U}(\mathcal{A}^i), \\
			 \text{for  } \rho^i \notin \mathcal{P}^i_s,
	 \end{cases}
\end{equation}
where $\mathcal{U}(\mathcal{A}^i)$ represents a uniform distribution on $\mathcal{A}^i$. For $i = 1, ..., n$, the adversary $i$'s policy is
\begin{equation}
\label{equ:constructed_adversary_policy}
    \chi^i(\rho^i | s_t) = \begin{cases} 
			  \sigma^{i+n}_s(\rho^{i}), & \text{for } s_t = s; \\
			 \mathcal{U}(\mathcal{P}^i_s), & \text{for } s_t \neq s, \end{cases}
\end{equation}
where $\mathcal{U}(\mathcal{P}^i_s)$ represents a uniform distribution on $\mathcal{P}^i_s$.

The utility function for player $i$ is
\begin{equation}
\label{equ:utility_function}
    u^i_s(\sigma^i_s, \sigma^{-i}_s) = 
    \begin{cases} 
			 f^i_s(v^{i*}, {\pi}^i, {\pi}^{-i}, {\chi}^i, {\chi}^{-i}), \\
			 \text{for  } i = 1, ..., n; \\
			 -f^{i-n}_s(v^{(i-n)*}, {\pi}^{i-n}, {\pi}^{-(i-n)}, \\
			 {\chi}^{i-n}, {\chi}^{-(i-n)}), \\
			 \text{for  } i = n+1, ..., 2n.
	 \end{cases}
\end{equation}
where $\sigma^{-i}_s$ denotes the strategies of all other players except player $i$, $v^{i*} = \mathrm{vec}(V^i_{*, \pi^{-i}, *, \chi^{-i}})$, and $V^i_{*, \pi^{-i}, *, \chi^{-i}}$ is the unique robust state value function of agent $i$ when the policies of other agents and adversaries are given by $\pi^{-i}$ and $\chi^{-i}$. The $v^{i*}$ satisfies 
\begin{equation}
    [\mathrm{vec}^{-1}(v^{i*})](s) = \max_{\pi^i } \min_{\chi^i} f^i_s(v^{i*}, {\pi}^i, {\pi}^{-i}, {\chi}^i, {\chi}^{-i}),
\end{equation} 
where $f^i_s$ is defined for player $i$ in~(\ref{equ:f}) as
\begin{align}
    & f^i_s(v^i, \pi^i, \pi^{-i}, \chi^i, \chi^{-i})  = \sum_{\rho \in \mathcal{P}_s} \chi(\rho|s) \sum_{a\in \mathcal{A}}  \pi(a|\rho) \nonumber \\
    &  \left( r(s,a) + \gamma\sum_{s' \in \mathcal{S}} p(s'|s, a) [\mathrm{vec}^{-1}(v^i)](s') \right). \nonumber
\end{align}
Note that $\sigma^{-i}_s$ includes both ${\pi}^{-i}$ and ${\chi}^{-i}$ for any $i \in \mathcal{I}$, and the existence of $V^i_{*, \pi^{-i}, *, \chi^{-i}}$ is guaranteed by Theorem~\ref{theorem:unique_statevalue}. Thus, the utility function is well-defined.

Since the state set $\mathcal{S}$ is finite, $\mathcal{P}^i_s \subseteq \mathcal{S}$ is a finite set for all $i \in \mathcal{N}$. Also, $\mathcal{A}^i$ is a finite set for all $i \in \mathcal{N}$. Therefore, $\Delta(A^i_s)$ is compact and convex for all $i \in \mathcal{I}$.
Moreover, for all $i \in \mathcal{I}$, 
$u^i_s(\sigma^i_s, \cdot)$ is linear in $\sigma^{i}_s$ and therefore continuous and concave in $\sigma^i_s$. According to the theorem (Debreu~\cite{debreu1952social}, Glicksberg~\cite{glicksberg1952further}, Fan~\cite{fan1952fixed}), the conditions for the existence of a Nash Equilibrium are satisfied, hence, there exists a Nash equilibrium $\sigma_s^*$ for this $2n$ player game for any $s \in \mathcal{S}$ such that for any $i \in \mathcal{I}$, $u^i_s(\sigma^{i*}_s, \sigma^{-i*}_s) \geq u^i_s(\sigma^i_s, \sigma^{-i*}_s)$ for any $\sigma^i_s$.

Denote the agent and adversary policies as $(\pi^*, \chi^*)$ that are constructed following Eq.~(\ref{equ:constructed_agent_policy}) and Eq.~(\ref{equ:constructed_adversary_policy}) by plugging in the Nash equilibrium $(\sigma^{i*}_s, \sigma^{-i*}_s)$. Substituting the $(\pi^*, \chi^*)$ into $u^i_s(\sigma^{i*}_s, \sigma^{-i*}_s) \geq u^i_s(\sigma^i_s, \sigma^{-i*}_s)$ and plugging in the definition of the utility functions, for any $i = 1, 2, ..., n$, it holds that
\begin{equation}
    f^i_s(v^{i*}, {\pi}^{i*}, {\pi}^{-i*}, {\chi}^{i*}, {\chi}^{-i*}) \geq f^i_s(v^{i*}, {\pi}^{i}, {\pi}^{-i*}, {\chi}^{i*}, {\chi}^{-i*}), 
\end{equation}
for any $\pi^i$. Also, for any $i = 1, 2, ..., n$, it holds that
\begin{equation}
    f^i_s(v^{i*}, {\pi}^{i*}, {\pi}^{-i*}, {\chi}^{i*}, {\chi}^{-i*}) \leq f^i_s(v^{i*}, {\pi}^{i*}, {\pi}^{-i*}, {\chi}^{i}, {\chi}^{-i*}), 
\end{equation}
for any $\chi^i$. Therefore, 
\begin{align}
      &  \max_{\pi^i } \min_{\chi^i} f^i_s(v^{i*}, \pi^i, \pi^{-i*}, \chi^i, \chi^{-i*}) \nonumber \\
       = & f^i_s(v^{i*}, \pi^{i*}, \pi^{-i*}, \chi^{i*}, \chi^{-i*}).
\end{align}
According to Corollary~\ref{corollary:corollary_unique_value}, for any $\pi^i$, it holds that
\begin{equation}
    V^i_{\pi^{i*}, \pi^{-i*}, \chi^{i*}, \chi^{-i*}}(s) \geq V^i_{\pi^{i}, \pi^{-i*}, \chi^{i*}, \chi^{-i*}}(s), 
\end{equation}
Also, for any $\chi^i$, it holds that
\begin{equation}
    V^i_{\pi^{i*}, \pi^{-i*}, \chi^{i*}, \chi^{-i*}}(s) \leq V^i_{\pi^{i*}, \pi^{-i*}, \chi^{i}, \chi^{-i*}}(s).
\end{equation}
Thus, the stage-wise equilibrium defined in Definition~\ref{def:stage_equilibrium} exists for any $s \in \mathcal{S}$.
\end{proof}

\subsection{Non-existence of Robust Total Nash Equilibrium}
\label{sec:robust_nash}

Theorem~\ref{theorem:existence_stage_wise} demonstrates the existence of a stage-wise equilibrium for any state $s \in \mathcal{S}$. In classic Markov games~\citep{fink1964equilibrium} and Markov games with reward/transition uncertainties~\citep{kardecs2011discounted,nilim2005robust,iyengar2005robust}, this result naturally extends to the existence of a Nash equilibrium policy, as all agents' and adversaries' policies are based on the current true state. If a stage-wise equilibrium exists for any state $s \in \mathcal{S}$, then a Nash equilibrium can be constructed by taking the policies for each state $s$ from their corresponding stage-wise equilibrium for state $s$~\citep{fink1964equilibrium,kardecs2011discounted,nilim2005robust,iyengar2005robust}. However, this natural extension cannot be used for our SAMG problem because the agent's policy is based on the perturbed state instead of the true state. The problem is that the agent's stage-wise equilibrium in one state may not be consistent with its stage-wise equilibrium in a different state. We illustrate this idea in the following theorem to show that the robust total Nash equilibrium does not always exist for SAMGs.

\textbf{Theorem 4.7}
\textbf{(Non-existence of Robust Total Nash Equilibrium).}
\label{theorem:existence_nash2}
\textit{For SAMGs with finite state and finite action spaces, the robust total Nash equilibrium defined in Definition~\ref{def:equilibrium_statevalue} does not always exist.}
\begin{proof}
We prove this theorem by showing that the following two-agent two-state game in Fig.~\ref{fig:toyexample_nash} does not have a robust total Nash equilibrium. 
\begin{figure}[ht]
  \centering
  \includegraphics[width=3.1in]{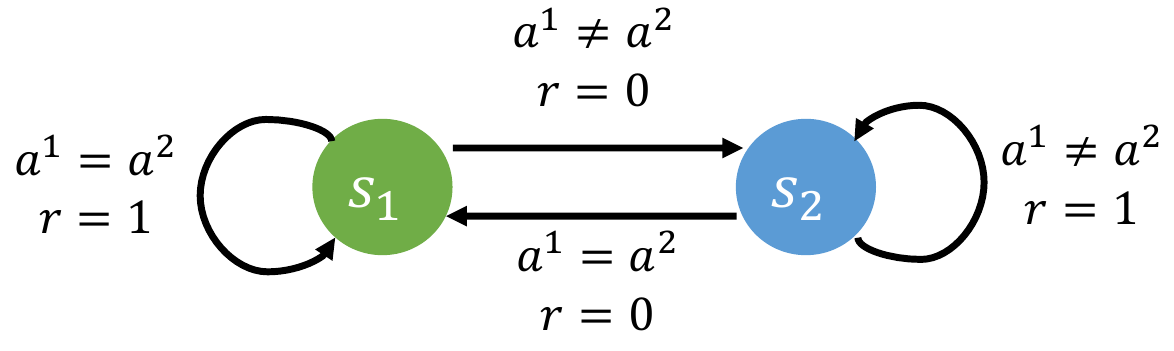}
  \caption{A new two-agent two-state game example. Agents get reward 1 at state $s_1$ if they choose the same action. Agents get reward 1 at state $s_2$ if they choose different actions.}\label{fig:toyexample_nash}
\end{figure}
The two-agent two-state game in Fig.~\ref{fig:toyexample_nash} is basically the same as the two-agent two-state game in Fig.~\ref{fig:toyexample}. The only difference is we changed the state transition for the state $s_1$. The new state transition functions for the state $s_1$ are 
\begin{align}
    p(s' = s_2 | s = s_1, a^1 \neq a^2) &= 1, \nonumber \\
    p(s' = s_1 | s = s_1, a^1 = a^2) &= 1.
\end{align}
We first consider the stage-wise equilibriums for each state. 

For state $s_1$, the stage-wise equilibrium requires $\Pr(a^1_t = a^2_t) = 1$ for all $t$. One example of the agent policy is $\pi^1(a_1|s_1) = \pi^1(a_1|s_2) = \pi^2(a_1 |s_1) = \pi^2(a_1|s_2) = 1$. Note that the agent should have a policy for both $s_1$ and $s_2$ even when considering the state-wise equilibrium for the state $s_1$ (This means the current true state is $s_1$). This is because the adversary can perturb each agent's state observation to be $s_2$. There is no requirement for the adversary policy in the state-wise equilibrium because when $\Pr(a^1_t = a^2_t) = 1$, the true state never transits. The state value for $s_1$ is $V(s_1) = 100$.

Similarly, for state $s_2$, the stage-wise equilibrium requires $\Pr(a^1_t \neq a^2_t) = 1$ for all $t$. One example of the agent policy is $\pi^1(a_1|s_1) = \pi^1(a_1|s_2) = \pi^2(a_2 |s_1) = \pi^2(a_2|s_2) = 1$. There is no requirement for the adversary policy in the state-wise equilibrium of $s_2$. The state value for $s_2$ is $V(s_2) = 100$.

Since the stage-wise equilibriums have conflict requirements for the agent policy in $s_1$ and $s_2$, there is no agent policy satisfying the requirements of the stage-wise equilibriums in both $s_1$ and $s_2$ at the same time. Therefore, there is no robust total Nash equilibrium for agents in this two-agent two-state game.
\end{proof}

In the proof of Theorem 4.7, we intended to present a straightforward example for ease of understanding. However, more counter-examples can be more illustrative in demonstrating the prevalence of non-existence scenarios. As long as the two stage-wise equilibriums have different requirements (not necessarily contrary to each other), there is no Nash equilibrium.

To elaborate, consider a 2-state 2-action game. If, for state $s_1$, the stage-wise equilibrium necessitates choosing action $a^1$ with probability 0.2 and $a^2$ with probability 0.8, and for state $s_2$, the stage-wise equilibrium requires choosing $a^1$ with probability 0.6 and $a^2$ with probability 0.4, then it's clear that no Nash equilibrium can simultaneously satisfy these requirements. This example illustrates the absence of a robust Nash equilibrium in such a 2-state 2-action game scenario.

Our conclusion is similar to that of Theorem~\ref{theorem:notoptimal}, in that it is not always possible to find a policy that is a stage-wise equilibrium for all states. When facing adversarial state perturbations, trade-offs must be made among different states. As a result, the traditional solution concepts of an optimal agent policy and the robust total Nash equilibrium cannot be applied to SAMGs.

\subsection{Existence of Robust Agent Policy}
\label{sec:robust_agent_policy}
We need to consider a new objective that is not state-dependent. Therefore, we propose a new objective, the worst-case expected state value, in Definition~\ref{def:mean_episode_reward} as
\begin{equation}
    \Expect_{s_0 \sim \Pr(s_0)} \left[\bar{V}_\pi(s_0)\right], \nonumber
\end{equation}
where $\Pr(s_0)$ is the probability distribution of the initial state. 

The new objective of "worst-case expected state value" is designed specifically for the state perturbation problem present in SAMGs. It is proposed as a response to our analysis of the non-existence of widely-used concepts. We demonstrate that these concepts can be easily corrupted by adversaries, requiring agents to make trade-offs between different states. This is the reason for introducing the new objective. The agent policy that aims to maximize this worst-case expected state value is referred to as a robust agent policy.

In this section, we show the existence of a robust agent policy to maximize the worst-case expected state value.
We first introduce lemmas for this proof.

Denote $p^{\pi, \chi, s_0}(s_t)$ as the probability of reaching state $s_t$ given the agent policy $\pi$, adversary policy $\chi$, and initial state $s_0$. Let $p^{\pi, \chi, s_0}(s_0) = 1$. The connection between $p^{\pi, \chi, s_0}(s_{t+1})$ and $p^{\pi, \chi, s_0}(s_t)$ is:
\begin{align}
\label{equ:connection_p_pi_chi}
    & p^{\pi, \chi, s_0}(s_{t+1}) = \nonumber \\
    & \sum_{s_t \in \mathcal{S}} \sum_{a_t \in \mathcal{A}} \sum_{\rho_t \in \mathcal{P}} p(s_{t+1}|s_t, a_t)\pi(a_t | \rho_t) \chi(\rho_t | s_t) p^{\pi, \chi, s_0}(s_{t}).
\end{align}

For a concise representation, we omit the subscript $s_t$ of $\mathcal{P}_{s_t}$ in this section. Consider the function
\begin{align}
    g^{s_0}_{t}(\pi, \chi) &= \sum_{s_t \in \mathcal{S}} \sum_{a_t \in \mathcal{A}} \sum_{\rho_t \in \mathcal{P}} p^{\pi, \chi, s_0}(s_t) \times \nonumber \\
   & \pi(a_t | \rho_t) \chi(\rho_t | s_t) \gamma^t r_{t+1}(s_t, a_t).
\end{align}

\begin{lemma}
\label{lemma:f_s0_continuous}
    The function $g^{s_0}_{t}$ is continuous on $\Delta(\mathcal{A}) \times \Delta(\mathcal{P})$ for any $t = 0, 1, 2, ..., n$ where $n \in \mathbb{N_+}$.
\end{lemma}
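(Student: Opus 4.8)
The plan is to prove the claim by induction on $t$, using the recursion~(\ref{equ:connection_p_pi_chi}) that links $p^{\pi,\chi,s_0}(s_{t+1})$ to $p^{\pi,\chi,s_0}(s_t)$. The key structural observation is that, by Assumption~\ref{assump:finite}, $\Delta(\mathcal{A})$ and $\Delta(\mathcal{P})$ are finite-dimensional simplices; the joint policies factor as $\pi(a|\rho) = \prod_{i=1}^n \pi^i(a^i|\rho^i)$ and $\chi(\rho|s) = \prod_{i=1}^n \chi^i(\rho^i|s)$, and each factor is just a coordinate of the point $(\pi,\chi)$, hence a continuous (indeed polynomial) function on $\Delta(\mathcal{A})\times\Delta(\mathcal{P})$. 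Since finite sums and finite products of continuous real-valued functions are continuous, and the transition probabilities $p(s'|s,a)$ and the rewards $r_{t+1}(s_t,a_t)$ are fixed constants (the rewards taking only finitely many values), it suffices to establish continuity of the occupancy weights $(\pi,\chi)\mapsto p^{\pi,\chi,s_0}(s_t)$ for each $s_t\in\mathcal{S}$.

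For the base case $t=0$ we have $p^{\pi,\chi,s_0}(s_0)=1$ identically, which is trivially continuous, so $g^{s_0}_0(\pi,\chi) = \sum_{a_0 \in \mathcal{A}} \sum_{\rho_0 \in \mathcal{P}} \pi(a_0|\rho_0)\,\chi(\rho_0|s_0)\,r_1(s_0,a_0)$ is a finite sum of products of coordinate functions, hence continuous. For the inductive step, assume that $(\pi,\chi)\mapsto p^{\pi,\chi,s_0}(s)$ is continuous on $\Delta(\mathcal{A})\times\Delta(\mathcal{P})$ for every $s\in\mathcal{S}$ at level $t$; then the right-hand side of~(\ref{equ:connection_p_pi_chi}) exhibits $p^{\pi,\chi,s_0}(s_{t+1})$ as a finite sum over $s_t,a_t,\rho_t$ of products of the constant $p(s_{t+1}|s_t,a_t)$, the continuous factors $\pi(a_t|\rho_t)$ and $\chi(\rho_t|s_t)$, and the continuous function $p^{\pi,\chi,s_0}(s_t)$, and is therefore continuous. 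This completes the induction on the occupancy weights. Substituting back into the definition of $g^{s_0}_t$ and again invoking closure of continuity under finite sums and products yields continuity of $g^{s_0}_t$ for every $t$.

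I do not anticipate a genuine obstacle here: the argument is a bookkeeping induction, and the only point that requires a moment's care is confirming that the factorized policy terms are continuous on the product of simplices and that only finitely many summands appear at each level — both immediate from Assumption~\ref{assump:finite}. If one wants to be maximally careful, one can instead observe that $(\pi,\chi)\mapsto p^{\pi,\chi,s_0}(s_t)$ is in fact a polynomial in the coordinates of $(\pi,\chi)$ (of degree growing with $t$), and polynomials are continuous; this viewpoint makes the finiteness of $\mathcal{S}$ and $\mathcal{A}$ — ensuring these are genuine finite-variable polynomials — the single essential hypothesis, and it also sets up the uniform-convergence argument needed later when summing $g^{s_0}_t$ over all $t$.
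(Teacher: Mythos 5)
Your proof is correct and follows essentially the same route as the paper's: both arguments reduce $g^{s_0}_t$ to finite sums of products of the coordinate functions $\pi(a|\rho)$, $\chi(\rho|s)$ and the occupancy weights $p^{\pi,\chi,s_0}(s_t)$, with the latter handled via the recursion~(\ref{equ:connection_p_pi_chi}). If anything, your explicit induction and the observation that the dependence is polynomial is slightly more precise than the paper's phrasing, which calls these expressions ``linear combinations'' of $\Vec{r}$, $\Vec{p}_t$, $\Vec{\pi}$, $\Vec{\chi}$ even though the dependence on $(\pi,\chi)$ is genuinely multilinear rather than linear.
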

\begin{proof}
To prove the continuity, we construct some equivalent vectors as follows. We define a vector $\Vec{\pi} \in \mathbb{R}^{|\mathcal{A}||\mathcal{P}|}$ and $\Vec{\pi}(a, \rho) = \pi(a | \rho)$ for $a \in \mathcal{A}, \rho \in \mathcal{P}$, and a vector $\Vec{\chi} \in \mathbb{R}^{|\mathcal{P}||\mathcal{S}|}$ where $\Vec{\chi}(\rho, s) = \chi(\rho | s)$ for $\rho \in \mathcal{P}, s \in \mathcal{S}$. And a vector constant $\Vec{r} \in \mathbb{R}^{|\mathcal{S}||\mathcal{A}|}$ where $\Vec{r}(s, a) = r(s,a)$. 

\begin{align}
    \Vec{\pi}^\top &= [\pi(a^1|\rho^1), \cdots, \pi(a^{|\mathcal{A}|} | \rho^1), \pi(a^2 | \rho^1), \cdots, \pi(a^{|\mathcal{A}|} |  \rho^{|\mathcal{P}|})] \nonumber \\
    \Vec{\chi}^\top &= [\chi(\rho^1|s^1), \cdots, \chi(\rho^{|\mathcal{P}|} | s^1), \chi(\rho^2 | s^1), \cdots, \chi(\rho^{|\mathcal{P}|}| s^{|\mathcal{S}|})] \nonumber \\
    \Vec{p}_t &= [{p}^{\pi, \chi, s_0}(s_t = s^1), \cdots, {p}^{\pi, \chi, s_0}(s_t = s^{|\mathcal{S}|})]
\end{align}

Note that when $\rho \notin \mathcal{P}_s$, then the entry $\chi(\rho|s) = 0$.
$\Vec{p}_t \in \mathbb{R}^{|S|}$ can be expressed as a linear combination of $\Vec{p}_{t-1}, \Vec{\pi}$ and $\Vec{\chi}$ according to~(\ref{equ:connection_p_pi_chi}).
Let's first consider the case $t=0$, 

\begin{align}
    g_0^{s_0} (\pi, \chi) = \sum_{a_0 \in \mathcal{A}} \sum_{\rho_0 \in \mathcal{P}} \pi(a_0 | \rho_0) \chi(\rho_0 | s_0) r(s_0, a_0)
\end{align}

Function $g_0^{s_0}$ can be expressed as a linear combination of $\Vec{r}, \Vec{\pi}$ and $\Vec{\chi}$.
We consider the general case 
    
\begin{align}
    g_t^{s_0} (\pi, \chi) = \sum_{s_t \in \mathcal{S}} \sum_{a_t \in \mathcal{A}} \sum_{\rho_t \in \mathcal{P}} p^{\pi, \chi, s_0}(s_t) \times \nonumber \\\pi(a_t | \rho_t) \chi(\rho_t | s_t) \gamma^t r_{t+1}(s_t, a_t).
\end{align}

Function $g_t^{s_0}$ can be expressed as a linear combination of $\Vec{r}, \Vec{p}_{t}, \Vec{\pi}$ and $\Vec{\chi}$. Therefore, $g_t^{s_0}$ is continuous on $\Delta(\mathcal{A}) \times \Delta(\mathcal{P})$ for any $t = 0, 1, 2, ..., n$ where $n \in \mathbb{N_+}$.
\end{proof}

\begin{lemma}
\label{lemma:m_test}
    For any $s_0 \in \mathcal{S}$, the series $\{ \sum_{t = 0}^n g^{s_0}_{t}(\pi, \chi) \}, \allowbreak n = 1, \allowbreak 2,  ...,$ converges uniformly on $\Delta(\mathcal{A}) \times \Delta(\mathcal{P})$.
\end{lemma}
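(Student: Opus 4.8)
The plan is to apply the Weierstrass M-test~\citep{rudin1976principles} to the series $\sum_{t=0}^{n} g^{s_0}_t(\pi,\chi)$, viewing each $g^{s_0}_t$ as a function on the compact domain $\Delta(\mathcal{A}) \times \Delta(\mathcal{P})$. The crucial structural fact is that, under Assumption~\ref{assump:finite}, the reward function is bounded, so we may set $R_{\max} \defeq \max_{s \in \mathcal{S},\, a \in \mathcal{A}} |r(s,a)| < \infty$. I will show that $|g^{s_0}_t(\pi,\chi)| \leq \gamma^t R_{\max}$ uniformly over all admissible $(\pi,\chi)$, take $M_t \defeq \gamma^t R_{\max}$ as the dominating sequence, and observe that $\sum_{t=0}^{\infty} M_t = R_{\max}/(1-\gamma) < \infty$ because $\gamma \in [0,1)$; the M-test then yields uniform convergence.

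To obtain the uniform bound I first record that $s_t \mapsto p^{\pi,\chi,s_0}(s_t)$ is a probability distribution on $\mathcal{S}$ for every $t$, i.e. $\sum_{s_t \in \mathcal{S}} p^{\pi,\chi,s_0}(s_t) = 1$. This follows by induction on $t$ from the recursion~(\ref{equ:connection_p_pi_chi}): the base case is $p^{\pi,\chi,s_0}(s_0) = 1$, and in the inductive step one sums out $\sum_{s_{t+1}} p(s_{t+1}|s_t,a_t) = 1$, then $\sum_{a_t \in \mathcal{A}} \pi(a_t|\rho_t) = 1$ (since $\pi(a|\rho) = \prod_{i} \pi^i(a^i|\rho^i)$ is a product of probability measures), then $\sum_{\rho_t \in \mathcal{P}} \chi(\rho_t|s_t) = 1$ (likewise $\chi(\rho|s) = \prod_i \chi^i(\rho^i|s)$), and finally invokes the inductive hypothesis. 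Consequently, for every $t$ and every $(\pi,\chi)$,
\begin{align*}
|g^{s_0}_t(\pi,\chi)| &\leq \gamma^t \sum_{s_t \in \mathcal{S}} \sum_{a_t \in \mathcal{A}} \sum_{\rho_t \in \mathcal{P}} p^{\pi,\chi,s_0}(s_t)\, \pi(a_t|\rho_t)\, \chi(\rho_t|s_t)\, |r_{t+1}(s_t,a_t)| \\
&\leq \gamma^t R_{\max} \sum_{s_t \in \mathcal{S}} p^{\pi,\chi,s_0}(s_t) \sum_{\rho_t \in \mathcal{P}} \chi(\rho_t|s_t) \sum_{a_t \in \mathcal{A}} \pi(a_t|\rho_t) = \gamma^t R_{\max} = M_t .
\end{align*}

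Since $\sup_{(\pi,\chi)} |g^{s_0}_t(\pi,\chi)| \leq M_t$ with $\sum_{t=0}^{\infty} M_t < \infty$, the Weierstrass M-test gives that the partial sums $\sum_{t=0}^{n} g^{s_0}_t(\pi,\chi)$ converge uniformly on $\Delta(\mathcal{A}) \times \Delta(\mathcal{P})$, which is exactly the assertion. I do not expect a genuine obstacle here: the only step requiring care is the bookkeeping that the joint policies $\pi$ and $\chi$ (being products of per-agent distributions) together with the induced marginal state occupancy $p^{\pi,\chi,s_0}(\cdot)$ all normalize to $1$; once that is in place, the geometric majorant and the M-test close the argument. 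Combined with Lemma~\ref{lemma:f_s0_continuous} and the uniform limit theorem, this will subsequently give that $\sum_{t=0}^{\infty} g^{s_0}_t$ is continuous on $\Delta(\mathcal{A}) \times \Delta(\mathcal{P})$, setting up the extreme value theorem step that establishes existence of a robust agent policy.
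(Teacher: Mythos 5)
Your proposal is correct and follows essentially the same route as the paper's proof: both bound $|g^{s_0}_t(\pi,\chi)|$ by $\gamma^t R_{\max}$ using the fact that the state occupancy, adversary, and agent distributions all normalize to one, and then invoke the Weierstrass M-test with the convergent geometric majorant $\sum_t \gamma^t R_{\max} = R_{\max}/(1-\gamma)$. Your explicit induction verifying $\sum_{s_t} p^{\pi,\chi,s_0}(s_t)=1$ is a slightly more careful rendering of a step the paper handles implicitly by treating the triple sum as a joint probability, but it is not a different argument.
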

\begin{proof}
Consider $M^{s_0}_{t}(\pi, \chi) = \gamma^t R^{max}$, where $R^{max}$ is the largest absolute value of the rewards. We can check that $ | g^{s_0}_{t}(\pi, \chi) | \leq M^{s_0}_{t}(\pi, \chi)  $ for $t \geq 0$ as follows. 
\begin{align}
    & | g^{s_0}_{t}(\pi, \chi)| \nonumber \\
    = & \left | \sum_{s_t \in \mathcal{S}} \sum_{a_t \in \mathcal{A}} \sum_{\rho_t \in \mathcal{P}} p^{\pi, \chi, s_0}(s_t) \pi(a_t | \rho_t) \chi(\rho_t | s_t) \gamma^t r_{t+1}(s_t, a_t) \right| \nonumber \\
   \leq & \left | \sum_{s_t \in \mathcal{S}} \sum_{a_t \in \mathcal{A}} \sum_{\rho_t \in \mathcal{P}} p^{\pi, \chi, s_0}(s_t) \pi(a_t | \rho_t) \chi(\rho_t | s_t) \gamma^t R^{max}  \right| \nonumber \\
   = & \gamma^t R^{max} \times \left| \sum_{s_t \in \mathcal{S}} \sum_{a_t \in \mathcal{A}} \sum_{\rho_t \in \mathcal{P}} p^{\pi, \chi, s_0}(s_t) \pi(a_t | \rho_t) \chi(\rho_t | s_t) \right| \nonumber \\
   = & \gamma^t R^{max} \times \left| \sum_{s_t \in \mathcal{S}} \sum_{a_t \in \mathcal{A}} \sum_{\rho_t \in \mathcal{P}} \Pr(s_t, a_t, \rho_t \mid s_0, \pi, \chi) \right| \nonumber \\
   = & \gamma^t R^{max} \times 1 = M^{s_0}_{t}(\pi, \chi).
\end{align}

Meanwhile, 
\begin{equation}
    \sum_{t = 0} ^\infty M^{s_0}_{t}(\pi, \chi) = \sum_{t = 0} ^\infty \gamma^t R^{max} = \frac{R^{max}}{1-\gamma},
\end{equation}
so $\sum g^{s_0}_{t}$ converges uniformly on $\Delta(\mathcal{A}) \times \Delta(\mathcal{P})$ according to the Weierstrass M-test in Theorem 7.10 of~\cite{rudin1976principles}.
\end{proof}

Lemma~\ref{lemma:m_test} shows the series $\{ \sum_{t = 0}^n g^{s_0}_{t}(\pi, \chi) \}, \allowbreak n = 1, \allowbreak 2,  ...,$ converges uniformly on $\Delta(\mathcal{A}) \times \Delta(\mathcal{P})$ for any $s_0 \in \mathcal{S}$. In the following lemma, we show $\sum_{t = 0}^\infty g^{s_0}_{t}(\pi, \chi)$ is continuous on $\Delta(\mathcal{A}) \times \Delta(\mathcal{P})$ for any $s_0 \in \mathcal{S}$. Denote $h^{s_0}(\pi, \chi) = \sum_{t = 0}^\infty g^{s_0}_{t}(\pi, \chi)$.

\begin{lemma}
\label{lemma:uniform_limit}
    The function $h^{s_0}$ is continuous on $\Delta(\mathcal{A}) \times \Delta(\mathcal{P})$ for any $s_0 \in \mathcal{S}$.
\end{lemma}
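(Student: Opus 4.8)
The plan is to obtain this lemma as an immediate consequence of the two preceding lemmas together with the uniform limit theorem, so the argument is essentially a bookkeeping step that packages results already in hand. First I would observe that for each $n \in \mathbb{N_+}$ the partial sum $S^{s_0}_n \defeq \sum_{t=0}^n g^{s_0}_t$ is a \emph{finite} sum of functions, each of which is continuous on $\Delta(\mathcal{A}) \times \Delta(\mathcal{P})$ by Lemma~\ref{lemma:f_s0_continuous}; since a finite sum of continuous functions is continuous, $S^{s_0}_n$ is continuous on $\Delta(\mathcal{A}) \times \Delta(\mathcal{P})$ for every $n$.

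Next I would recall that, by Lemma~\ref{lemma:m_test}, the sequence of partial sums $\{S^{s_0}_n\}_{n=1}^\infty$ converges uniformly on $\Delta(\mathcal{A}) \times \Delta(\mathcal{P})$ (this is exactly the conclusion of the Weierstrass M-test applied there), and its limit is by definition $h^{s_0} = \sum_{t=0}^\infty g^{s_0}_t$. Since $\Delta(\mathcal{A}) \times \Delta(\mathcal{P})$ is a metric space, the uniform limit theorem (e.g.\ Theorem~7.12 in \cite{rudin1976principles}) applies: the uniform limit of a sequence of continuous functions is continuous. Applying it to the sequence $\{S^{s_0}_n\}$, whose terms are continuous by the previous paragraph and which converges uniformly to $h^{s_0}$, yields that $h^{s_0}$ is continuous on $\Delta(\mathcal{A}) \times \Delta(\mathcal{P})$, which is the claim.

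I do not anticipate any genuine obstacle here: the two nontrivial ingredients — termwise continuity of the $g^{s_0}_t$ and uniform convergence of the series — are exactly what Lemmas~\ref{lemma:f_s0_continuous} and~\ref{lemma:m_test} supply, and everything else is the standard ``uniform limit of continuous is continuous'' fact. The only minor point to state carefully is that the uniform convergence is on all of $\Delta(\mathcal{A}) \times \Delta(\mathcal{P})$ (not merely pointwise), so that the uniform limit theorem is genuinely applicable; this is already guaranteed because the dominating bound $M^{s_0}_t(\pi,\chi) = \gamma^t R^{\max}$ used in Lemma~\ref{lemma:m_test} is independent of $(\pi,\chi)$.
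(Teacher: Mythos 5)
Your proposal is correct and follows essentially the same route as the paper's proof: both argue that the partial sums $\sum_{t=0}^n g^{s_0}_t$ are continuous as finite sums of continuous functions (Lemma~\ref{lemma:f_s0_continuous}), that they converge uniformly by Lemma~\ref{lemma:m_test}, and then invoke the uniform limit theorem (Theorem~7.12 of \cite{rudin1976principles}). No gaps; your extra remark that the dominating bound $\gamma^t R^{\max}$ is independent of $(\pi,\chi)$ is a nice clarification but not a departure from the paper's argument.
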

\begin{proof}
Consider $h^{s_0}_{n}(\pi, \chi) = \sum_{t = 0}^n g^{s_0}_{t}(\pi, \chi)$ for $n \in \mathbb{N_+}$. Since $h^{s_0}_n$ is a linear combination of $\{ g_t^{s_0} \}_{t = 0, 1, 2, \cdots, n}$ and $g_t^{s_0}$ is continuous on $\Delta(\mathcal{A}) \times \Delta(\mathcal{P})$ for any $t = 0, 1, 2, \cdots, n$ according to Lemma~\ref{lemma:f_s0_continuous}, the sequence $\{h^{s_0}_{n} \}$ is a sequence of continuous functions on $\Delta(\mathcal{A}) \times \Delta(\mathcal{P})$. Meanwhile, $h^{s_0}_{n} \rightarrow h^{s_0}$ uniformly on $\Delta(\mathcal{A}) \times \Delta(\mathcal{P})$ for any $s_0 \in \mathcal{S}$ according to Lemma~\ref{lemma:m_test}, therefore $h^{s_0}$ is continuous on $\Delta(\mathcal{A}) \times \Delta(\mathcal{P})$ for any $s_0 \in \mathcal{S}$ according to the uniform limit theorem in Theorem 7.12 of~\cite{rudin1976principles}.
\end{proof}

The following theorem shows finding a robust agent policy is equivalent to solving a maximin problem.

\textbf{Theorem 4.10.}
\textit{Finding an agent policy $\pi$ to maximize the worst-case expected state value under an optimal adversary for $\pi$ is equivalent to the maximin problem: $\max_\pi \min_{\chi} \sum_{s_0} \Pr(s_0)  V_{\pi, \chi}(s_0)$.}
\begin{proof}
According to the Proposition~\ref{prop:optimal_adv}, for any fixed agent policy $\pi$, there exists an optimal adversary policy $\chi^*$ such that $\bar{V}_\pi(s_0) = \min_{\chi} V_{\pi, \chi}(s_0)$ for any $s_0 \in \mathcal{S}$. Thus,
\begin{align}
    & \max_\pi \Expect_{s_0 \sim \Pr(s_0)} \left[\bar{V}_\pi(s_0)\right] \nonumber \\
   = & \max_\pi \Expect_{s_0 \sim \Pr(s_0)} \left[\min_{\chi} V_{\pi, \chi}(s_0) \right] ~~~~(\text{Eq.~(\ref{equ:optimalad})})\nonumber \\
   = & \max_\pi \sum_{s_0} \Pr(s_0) \min_{\chi} V_{\pi, \chi}(s_0) ~(\text{Definition of Expectation})\nonumber \\
   = & \max_\pi \min_{\chi} \sum_{s_0} \Pr(s_0)  V_{\pi, \chi}(s_0),  ~~~~(\text{Proposition~\ref{prop:optimal_adv}})
\end{align}
\end{proof}

Finally, we show the existence of the robust agent policy to maximize the worst-case expected state value in the following theorem.

\textbf{Theorem 4.11}
\textbf{(Existence of Robust Agent Policy).}
\textit{For SAMGs with finite state and finite action spaces, there exists a robust agent policy to maximize the worst-case expected state value defined in Definition~\ref{def:mean_episode_reward}.}
\begin{proof}
According to Theorem~\ref{prop:maximin}, finding an agent policy $\pi$ to maximize the worst-case expected state value under an optimal adversary for $\pi$ is equivalent to the following maximin problem:
\begin{align}
\label{equ:maximin_objective}
    & \max_\pi  F(\pi)  \nonumber \\
   \defeq & \max_\pi \Expect_{s_0 \sim \Pr(s_0)} \left[\bar{V}_\pi(s_0)\right] \nonumber \\
   = & \max_\pi \min_{\chi} \sum_{s_0} \Pr(s_0)  V_{\pi, \chi}(s_0) \nonumber \\
   = & \max_\pi \min_{\chi} J(\pi, \chi),
\end{align}
where the objective function in~(\ref{equ:maximin_objective}) can be expanded as follows:
\begin{align} 
    & ~~~~ J(\pi, \chi)   \nonumber \\
     & =  \Expect_{s_0 \sim \Pr(s_0)} \left[V_{\pi, \chi}(s_0)\right] \nonumber \\
   & =  \sum_{s_0} \Pr(s_0)  V_{\pi, \chi}(s_0) \nonumber \\
   & = \sum_{s_0} \Pr(s_0)  \Expect_{a_t \sim \pi, \rho_t \sim \chi} \left[ \sum_{t=0}^\infty \gamma^t r_{t+1}(s_t, a_t) \mid s_0  \right] \nonumber \\
   & = \sum_{s_0} \Pr(s_0)  \sum_{t=0}^\infty \Expect_{a_t \sim \pi, \rho_t \sim \chi} \left[ \gamma^t r_{t+1}(s_t, a_t) \mid s_0  \right] \nonumber \\
   & ~~~~ (\text{linearity of the expectation}) \nonumber \\
   & = \sum_{s_0} \Pr(s_0) \sum_{t=0}^\infty \sum_{s_t \in \mathcal{S}} \sum_{a_t \in \mathcal{A}} \sum_{\rho_t \in \mathcal{P}} p^{\pi, \chi, s_0}(s_t) \times \nonumber \\
   & ~~~~~~~~ \pi(a_t | \rho_t) \chi(\rho_t | s_t) \gamma^t r_{t+1}(s_t, a_t) \nonumber \\
   & = \sum_{s_0} \Pr(s_0) \sum_{t = 0}^\infty g^{s_0}_{t}(\pi, \chi) \nonumber \\
   & = \sum_{s_0} \Pr(s_0) h^{s_0}.
\end{align}
Because $J(\pi, \chi)$ is a linear combination of $\{h^{s_0} \}_{s_0 \in \mathcal{S}}$, $\mathcal{S}$ is finite, and $h^{s_0}$ is continuous on $\Delta(\mathcal{A}) \times \Delta(\mathcal{P})$ for any $s_0 \in \mathcal{S}$ according to Lemma~\ref{lemma:uniform_limit}, the objective function $J(\pi, \chi) = \sum_{s_0}\Pr(s_0) h^{s_0}$ is continuous on $\Delta(\mathcal{A}) \times \Delta(\mathcal{P})$. Consider the function $F(\pi) = \min_{\chi} J(\pi, \chi)$. Since the adversary policy space $\Delta(\mathcal{P})$ is compact, the function $F$ is continuous in $\pi$. Meanwhile, the agent policy space $\Delta(\mathcal{A})$ is closed. Therefore, there exists an agent policy $\pi$ to maximize $F$ according to the extreme value theorem.


\end{proof}

Theorem~\ref{theorem:existence_mean_episode} shows the existence of a robust agent policy. Different from the definitions of the state-robust totally optimal agent policy and robust total Nash equilibrium, the worst-case expected state value objective does not require the optimality condition to hold for all states. Agents won't get stuck in trade-offs between different states, therefore, we can find a robust agent policy to maximize the worst-case expected state value for the SAMG problem.

Now look back at the two-agent two-state game in Fig.~\ref{fig:toyexample}. If we use the worst-case expected state value concept from Definition~\ref{def:mean_episode_reward} and assume that the initial state is always $s_2$, then we can conclude that $\pi_1:$ $\pi^1(a_1|s_1) = \pi^1(a_1|s_2) = \pi^2(a_2 |s_1) = \pi^2(a_2|s_2) = 1$ is a robust agent policy, as it gives the maximum worst-case expected state value of 100 for this game.

\section{Robust Multi-Agent Adversarial Actor-Critic (RMA3C) Algorithm}
\label{sec:algorithm}
In general, it is challenging to develop algorithms that compute optimal or equilibrium policies for MARL under uncertainties~\citep{zhangkq2020robust,zhang2021robust}. 
Our algorithm adopts centralized training and decentralized execution paradigm following the popular framework in~\cite{lowe2017multi}. During training, there is a centralized critic $Q(s,a)$ that records the total expected return given the global state $s$ and global action $a$. The connection between $Q(s,a)$ and $V(s)$ is that for any $i \in \mathcal{N}, s \in \mathcal{S}, a \in \mathcal{A}$,
\begin{equation}
    Q(s,a) =  r(s,a) + \gamma\sum_{s' \in \mathcal{S}} p(s'|s, a) V(s') .
\end{equation} 
Each agent's state input for the actor is perturbed by an adversary $\chi^i(\cdot|s): \mathcal{S} \rightarrow \Delta(\mathcal{P}^i_s)$. During execution, each agent $i$ selects action $a^i$ based on the perturbed state $\rho^i \in \mathcal{S}$ using a trained policy $\pi^i:\mathcal{S} \rightarrow \Delta (\mathcal{A}^i)$. We want to find a policy $\pi^i$ for each agent to maximize the worst-case expected state value in Definition~\ref{def:mean_episode_reward} under adversarial state perturbations.

\begin{algorithm}[h!]
\SetAlgoLined
 Randomly initialize the critic network $Q$, the actor network $\pi^i$, and the adversary network $\chi^i$ for each agent\;
 
 Initialize target networks $Q^{\prime}, \pi^{i\prime}, \chi^{i\prime}$\;
 \For {each episode}
 {
    The initial state $s_0 \leftarrow$ sample from $\Pr(s_0)$\;
    
     Initialize a random process $\mathcal{X}$ for action exploration\;
    \For {each time step}
    { 
        \For {i=1 to n}
        {   
            $\rho^i \leftarrow$ sample from $\chi^i(\cdot|s)$\;
            
            $a^i \leftarrow$ sample from $\pi^i(\cdot|\rho^i) + \mathcal{X}$\;
        }
        
        Execute actions $a = (a^1,...,a^n)$\; 
        
        Obtain the reward $r$ and the next state $s^{\prime}$\;
        
        $\mathcal{D} \leftarrow \mathcal{D} \cup ({s}, {a}, {r}, {s}^{\prime})$\;
        
        ${s} \leftarrow {s}^{\prime}$\;
        
            
            

        $Q \leftarrow \text{MGD\_Optimizer}(Q, \mathcal{D}, Q^{\prime}, \pi^{\prime}, \chi^{\prime})$\; \tcc{Mini-batch gradient descent optimizer for critic.} 
            
        $\pi, \chi \leftarrow \text{GDA\_Optimizer}(Q, \pi, \chi)$\; \tcc{Gradient descent ascent optimizer for policies.} 
            
                
                
                
                
                
        Update all target networks: $\theta^{i\prime} \leftarrow \tau\theta^i + (1-\tau)\theta^{i\prime}$.
    }
 }
 
 \caption{Robust Multi-Agent Adversarial Actor-Critic (RMA3C) Algorithm}
 \label{alg:actor-critic}
\end{algorithm}
\setlength{\textfloatsep}{1pt}

As shown in Alg.~\ref{alg:actor-critic}, our algorithm has a centralized critic network $Q$ for training. Each agent has one actor network $\pi^i$ and one adversary network $\chi^i$. The critic $Q$ takes in the true global state and global action during the training process. It returns a $Q$-value denoting the total expected return given $s$ and $a$. The state transition experience is represented by $(s, a, r, s')$ where $s'$ is the next state. It is stored in a replay buffer $\mathcal{D}$ for the critic network's training. We apply "replay buffer" and "target network" techniques~\citep{mnih2015human}. The critic network is trained with a mini-batch gradient descent optimizer in line 15. In line 16, we use Gradient Descent Ascent (GDA) optimizer~\citep{lin2020gradient} to update parameters for each agent's actor network and adversary network for the maximin problem $\max_{\pi} \min_{\chi} \sum_{s_0} \Pr(s_0)  V_{\pi, \chi}(s_0)$ in Theorem~\ref{prop:maximin}. A detailed introduction for the GDA optimizer is included in Appendix~\ref{sec:GDA_appendix}.

We have added an adversarial network that inputs the true state and outputs a perturbed state in RMA3C. This is in contrast to MADDPG and MAPPO, which do not include such a network. Compared to M3DDPG, which has a target policy network for each agent with outputs for the action space, our adversarial network's output pertains to the state space, indicating a different computational load.

\begin{table*}[ht]
\caption{Hyperparameters for our RMA3C algorithm and the baselines.}
\label{tbl:hyperparameter}
\centering
\begin{tabular}{l|c|c|c|c}
\toprule
\textbf{Parameter}                         & \textbf{RMA3C}&\textbf{M3DDPG}&\textbf{MADDPG}&\textbf{MAPPO}\\ 
\midrule
optimizer for the critic network                 & Adam & Adam & Adam & Adam               \\ 
learning rate for agent policy $\pi$        & 0.01 & 0.01 & 0.01 & 0.0007 \\ 
learning rate for adversary policy $\chi$       & 0.001  & /    &/ & /          \\ 
discount factor                         & 0.95  & 0.95 & 0.95 & 0.99   \\ 
replay buffer size                & $10^6$      & $10^6$& $10^6$&/            \\ 
activation function & Relu &Relu&Relu&Relu\\
number of hidden layers           & 2          &2&2&1           \\ 
number of hidden units per layer & 64         &64&64&64           \\ 
number of samples per minibatch   & 1024       &1024&1024&1           \\ 
target network update coefficient $\tau$      & 0.01&0.01&0.01&/                  \\ 
GDA optimizer steps                    & 20    &/&/&/                 \\
radius $d$ & 1.0&/&/&/ \\
uncertainty level $\lambda$ & 0.5&0.5&0.5&0.5\\
upper boundary $u$ & 1.0 &1.0&1.0&1.0\\
lower boundary $l$ & -1.0&-1.0&-1.0&-1.0\\
episodes in training & 10k&10k&10k&10k\\
time steps in one episode & 25&25&25&25 \\
\bottomrule
\end{tabular}
\end{table*}

\begin{figure*}[htb]
  \centering
  \includegraphics[width=3.3in]{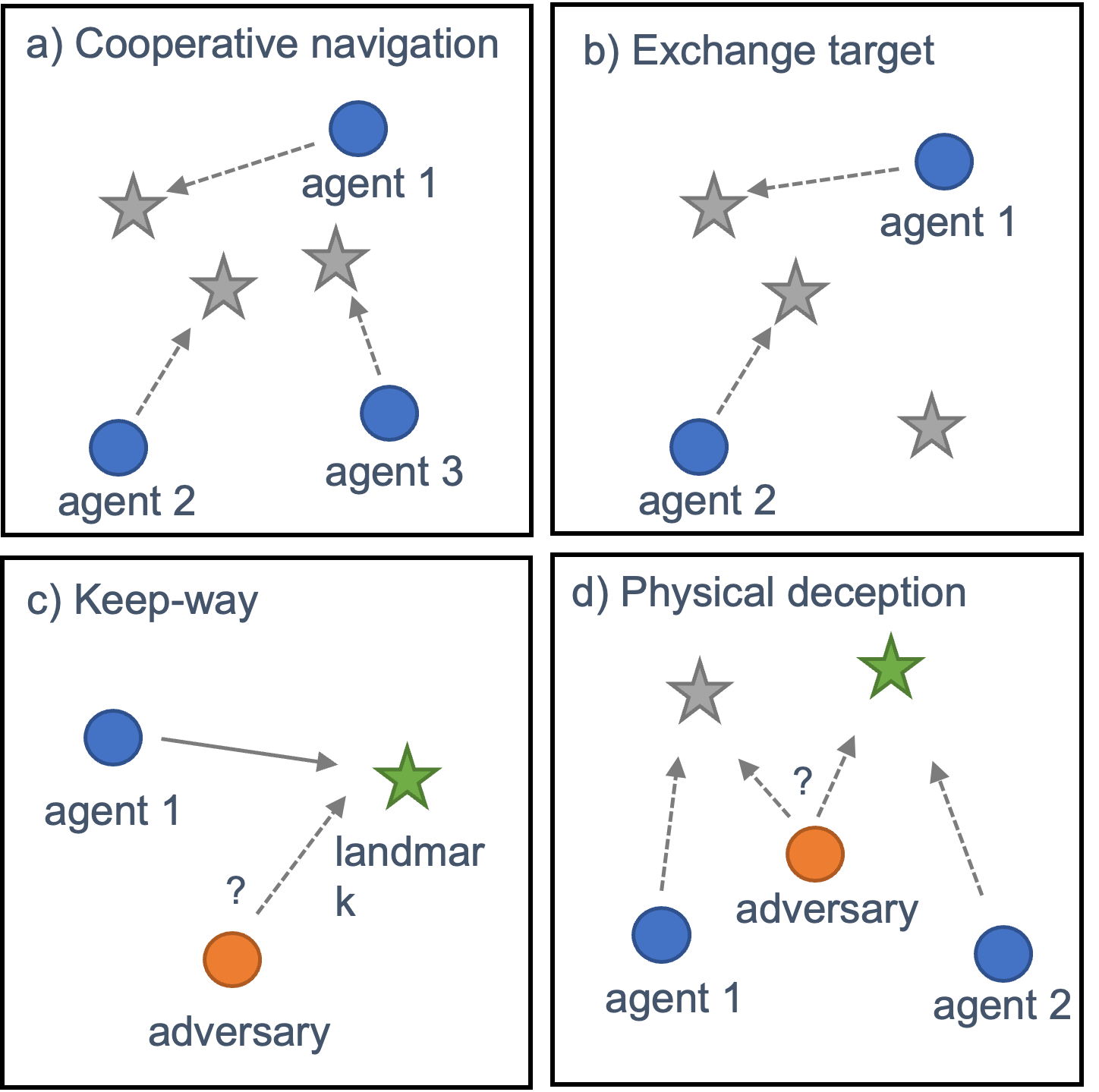}
  \caption{Some environments to test our algorithm, including a) Cooperative navigation (CN) b) Exchange target (ET) c) Keep-away (KA) d) Physical deception (PD).}\label{fig:environment}
\end{figure*}

\section{Implementation Detail}
\label{sec:implementation_detail}

All hyperparameters used in experiments are listed in table~\ref{tbl:hyperparameter}.

\subsection{Environments}
\label{sec:environment_appendix}
We have tested our algorithm in environments provided by~\cite{lowe2017multi} as shown in Fig.~\ref{fig:environment}.

\subsubsection{Cooperative navigation (CN)}
This is a cooperative task. 
There are 3 agents and 3 landmarks. 
Agents want to occupy/cover all the landmarks. They need to cooperate through physical actions about their preferred landmark to cover. Also, they will be penalized when collisions happen.




\subsubsection{Exchange target (ET)} This is a cooperative task. 
There are 2 agents and 3 landmarks. 
Each agent needs to get to its target landmark, which is known only by another agent. They have to learn communication and get to landmarks. Besides, both of them are generous agents that pay more attention to helping others, i.e. rewarded more if the other agent gets closer to the target landmark. 

\subsubsection{Keep-away (KA)}
This is a competitive task. 
There is 1  agent, 1 adversary, and 1 landmark. 
The agent knows the position of the target landmark and wants to reach it. The adversary does not know the target landmark and wants to prevent the agent from reaching the target by pushing them away or occupying the target temporarily.

\subsubsection{Physical deception (PD)}
This is a mixed cooperative and competitive task. There are 2 collaborative agents, 2 landmarks including a target, and 1 adversary. Both the collaborative agents and the adversary want to reach the target, but only collaborative agents know the correct target. The collaborative agents should learn a policy to cover all landmarks so that the adversary does not know which one is the true target.


\subsection{Baselines}
\label{sec:baselines_appendix}
We compare the performance of our algorithm with MADDPG~\citep{lowe2017multi}, M3DDPG~\citep{li2019robust}, and MAPPO~\citep{yu2021surprising} and follow their open-source implementation. We have a brief introduction of these methods in the following sections. There is no robustness considered in MADDPG and MAPPO. The M3DDPG considers the robustness of training partner's policies, but it does not consider state uncertainty. The MAPPO is the multi-agent version of the Proximal Policy Optimization (PPO), a popular policy gradient algorithm. Because MAPPO only works in fully cooperative tasks, we only report its results in cooperative navigation and exchange target. Note that MAPPO is also used in~\cite{guo2020joint} but they do not provide an open-source implementation. Therefore, we select the latest implementation in~\cite{yu2021surprising} with the open-source code. 

\subsection{Multi-Agent Deep Deterministic Policy Gradient (MADDPG)}
\label{sec:MADDPG}
It is difficult to apply single-agent RL algorithms directly to the multi-agent case because the environment's state transition is also influenced by the policy of other agents and it is non-stationary from a single agent's view. To alleviate this problem and stabilize training, the MADDPG algorithm is proposed using a centralized $Q$ function that has global state and global action information~\citep{lowe2017multi}. It assumes all agents are self-interested and every agent's objective is to maximize its own total expected return. The objective for agent $i$ is $J(\theta^i) = \Expect [R^i]$ and its gradient is  
\begin{align}
\label{equ:actor_maddpg}
    & \nabla_{\theta^i}J(\theta^i) = \\
    &\Expect_{\mathbf{x}, a \sim \mathcal{D}} \left[ \nabla_{\theta^i} \mu^i(o^i)\nabla_{a^i}Q^i(\textbf{x},a^1,...,a^n)|_{a^i = \mu^i(o^i)} \right], \nonumber
\end{align}
where $Q^i(\textbf{x},a^1,...,a^n)$ is a centralized action-value function, $\mathbf{x} = (o^1, \ldots, o^n)$, and $o^i$ represents agent $i$'s observation. The experience replay buffer $\mathcal{D}$ contains transition experience $\mathbf{x}, a^1, ..., a^n, \mathbf{x}', r^1, ..., r^n$ to decorrelate data. 
The centralized $Q^i$ can be trained using the Bellman loss:
\begin{align}
\label{equ:critic_maddpg}
    & \mathcal{L}(\theta^i) = \Expect_{\mathbf{x}, a, \mathbf{r}, \mathbf{x}' \sim \mathcal{D}} [y - Q^i(\textbf{x}, a^1,...,a^n)]^2, \nonumber \\
    & y = r^i + \gamma Q^{i\prime}(\textbf{x}^{\prime}, a^{1\prime},..., a^{n\prime})|_{a^{j \prime} = \mu^{j \prime}(o^j)},
\end{align}
where $Q^{i\prime}$ is the target network whose parameters are copied from $Q$ with a delay to stabilize the moving target. 
Note that this algorithm adopts a centralized training and decentralized execution paradigm. When testing, each agent can only access its local observation to select actions.

In M3DDPG~\citep{li2019robust}, the uncertainty from the training partner's policies is considered: all other partners are considered as adversaries that select actions to minimize the total expected return of the training agent. In other words, when updating both actor and critic, they select training partner's actions by $a^{j\neq i} = \arg \min_{a^{j \neq i}} Q^i(\textbf{x},a^1,...,a^n)$.

\subsection{Gradient Descent Ascent (GDA)}
\label{sec:GDA_appendix}
Gradient Descent Ascent (GDA)~\citep{lin2020gradient} is currently one widely-used algorithm for solving the following minimax optimization problem:
\begin{align}
\label{def_minimax_opt}
	\min_{x}\max_{y}~ f(x,y). 
\end{align} 
GDA simultaneously performs gradient descent update on the variable $x$ and gradient ascent update on the variable $y$ according to (\ref{def_gda_update}) with step sizes $\eta_x$ and $\eta_y$.
\begin{align}
\label{def_gda_update}
    x_{t+1}=x_t - \eta_{x}\nabla_x f(x_t,y_t), \nonumber \\
    y_{t+1}=y_t + \eta_{y}\nabla_y f({x_t}, y_t).
\end{align}
It has a variety of variants to accommodate different types of geometries of the minimax problem, such as convex-concave geometry, nonconvex-concave geometry, nonconvex-nonconcave geometry, etc.

\begin{figure*}[ht]
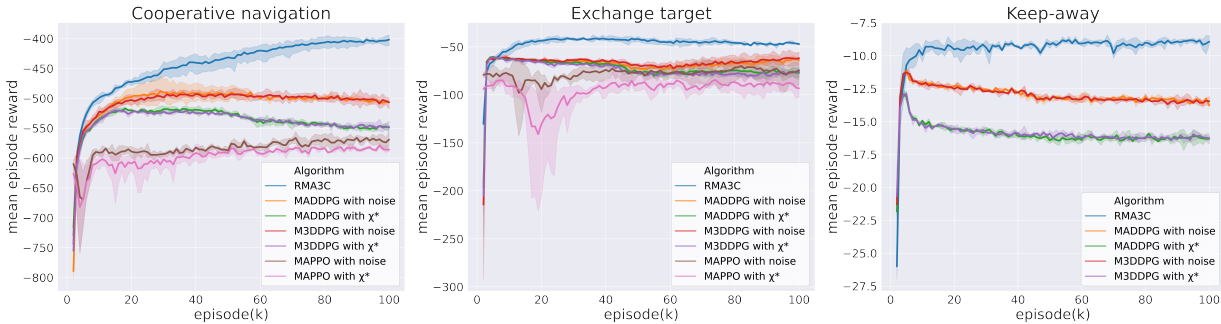

\centering
  \begin{tabular}{c@{}c@{}c}
    \includegraphics[width=.33\textwidth]{figures/s3_2.pdf} &
    \includegraphics[width=.33\textwidth]{figures/s1_2.pdf} &
    \includegraphics[width=.33\textwidth]{figures/s6_2.pdf}\\
  \end{tabular}

  \caption{Our RMA3C algorithm compared with several baseline algorithms during the training process. 
  The results showed that our RMA3C algorithm outperforms the baselines, achieving higher mean episode rewards and displaying greater robustness to state perturbations. The baselines were trained under either random state perturbations or a well-trained adversary policy $\chi^*$ (adversaries that are trained for the maximum training episodes in RMA3C). Overall, our RMA3C algorithm achieved up to 58.46\% higher mean episode rewards than the baselines.}\label{fig_training_reward_appendix}
\end{figure*}

\subsection{Training Comparison Under different Perturbations}
\label{sec:training_comparison_appendix}
We first compare our algorithm with baselines during the training process to show that our RMA3C algorithm can outperform baselines to get higher mean episode rewards under different state perturbations. Note that our RMA3C algorithm has a built-in adversary to perturb states, so we do not train it under random state perturbations. 
Comparing RMA3C to other baselines with different state perturbations, the RMA3C gets higher mean episode rewards. It shows our RMA3C algorithm is more robust under different state perturbations. 
Comparing each baseline with random state perturbations to the same baseline with the well-trained adversary policy $\chi^*$, we can see the adversary trained by the RMA3C is more powerful than the random state perturbations. Because the adversary policy $\chi^*$ intentionally selects state perturbations to minimize agents' total expected return. The mean episode reward of the last 1000 episodes during training is shown in Table~\ref{tab_training_mean_reward}. Our RMA3C algorithm achieves up to 58.46\% higher mean episode rewards than the baselines under different state perturbations. 

\begin{table}[ht]
\caption{Mean episode reward of the last 1000 episodes during the training. Our RMA3C algorithm achieves up to 58.46\% higher mean episode rewards than the baselines. The corresponding figure is~\ref{fig_training_reward_appendix}, and it is also included in the main content.}
\centering
\begin{tabular}{cccc}
\toprule
             & CN  & ET & KA  \\ 
\midrule
RMA3C  (ours)    & \textbf{-401.7} & \textbf{-47.02} & \textbf{-8.93}  \\ 
MADDPG w/ $\mathcal{N}$  & -506.48 & -63.76 & -13.76 \\ 
M3DDPG w/ $\mathcal{N}$  & -506.54 & -61.71 & -13.45 \\ 
MAPPO w/ $\mathcal{N}$  & -569.07 & -94.28 & - \\ 
MADDPG w/ $\chi^*$  & -548.80 & -77.01 & -16.30 \\ 
M3DDPG w/ $\chi^*$  & -547.99 & -75.87 & -16.26 \\ 
MAPPO w/ $\chi^*$  & -585.83 & -113.19 & - \\ 
\bottomrule
\end{tabular}

\label{tab_training_mean_reward}
\end{table}

\subsection{Cooperative Navigation With 6 Agents}
\label{sec:cn_6agents}

\begin{figure}[ht]
  \centering
  \includegraphics[width=2.7in]{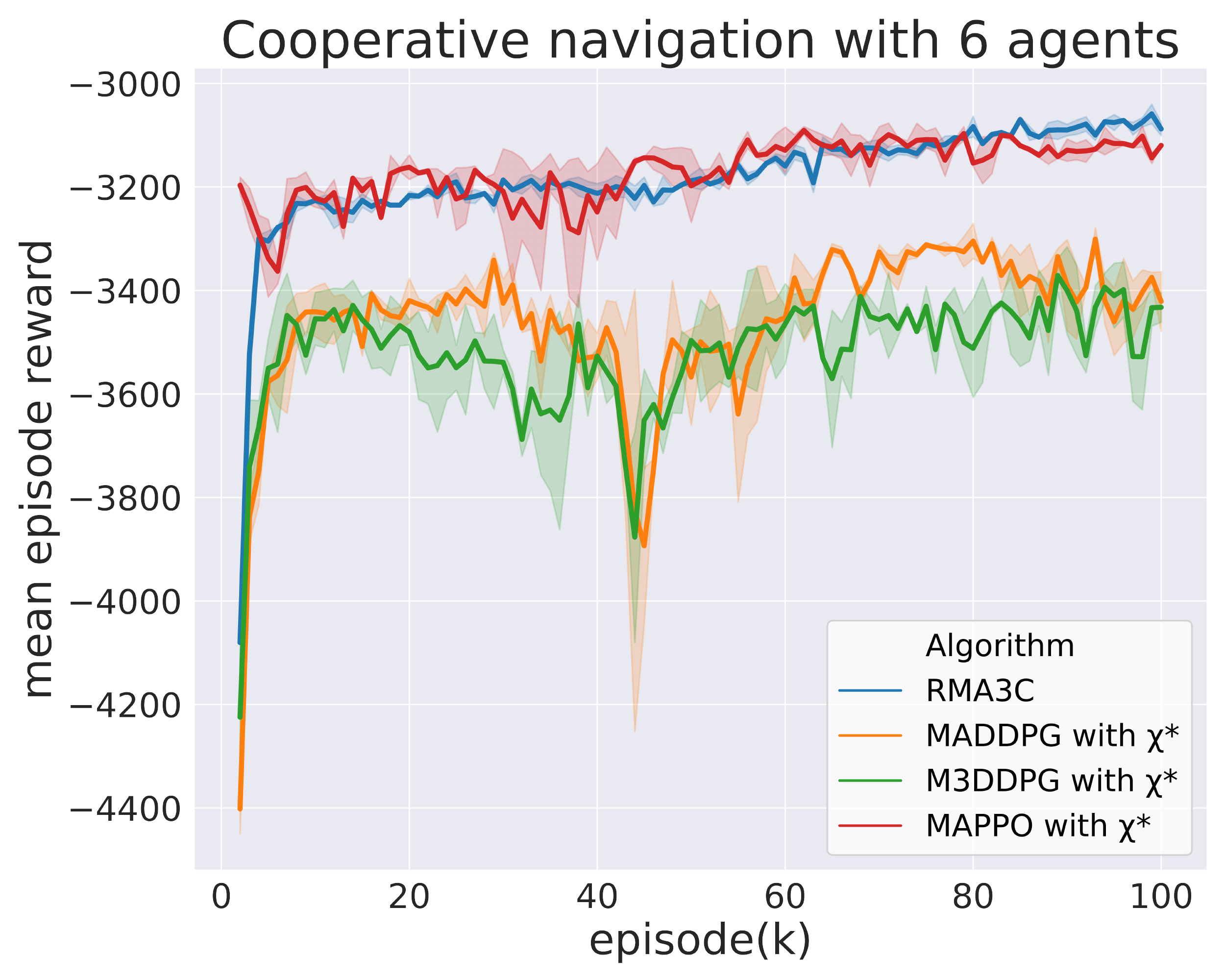}
  \caption{Our RMA3C algorithm compared with baselines during the training process in the cooperative navigation scenario with 6 agents added. Our algorithm gets higher mean episode rewards in the environment with an increased agent number.}\label{fig:sixagents}
\end{figure}


\begin{table}[ht]
\caption{Mean episode rewards of 2000 episodes during testing under well-trained adversarial state perturbations in the cooperative navigation environment with 6 agents. Our RMA3C policy achieves up to 9.57\% higher mean episode reward than the baselines with well-trained $\chi^*$.}
\label{tab_testing_mean_best_6_agents}
\centering
\begin{tabular}{cc}
\toprule
Environment                                         & CN with 6 agents  \\ 
\midrule
MADDPG w/$\chi^*$   & -3405.274 $\pm$ 66.18\\ 
M3DDPG w/$\chi^*$   & -3452.22 $\pm$ 80.16\\ 
MAPPO w/$\chi^*$    & -3121.90 $\pm$ 18.49\\ 
RMA3C w/$\chi^*$   &  \textbf{-3079.37} $\pm$ \textbf{16.16}\\ 
\bottomrule
\end{tabular}

\end{table}

We compare our RMA3C algorithm with baselines in the cooperative navigation scenario with more agents added. The original cooperative navigation environment has 3 agents and the training results are shown in Fig.~\ref{fig_training_reward}. We show the training results with 6 agents in Fig.~\ref{fig:sixagents}. After increasing the total number of agents in the environment, our RMA3C algorithm still gets higher mean episode rewards than baselines under adversarial state perturbations.

We also test the learned policies in the 6-agent Cooperative Navigation (CN) environment to show our RMA3C policy is more robust under adversarial state perturbations. During testing, the mean episode rewards are averaged across 2000 episodes and 10 test runs for each algorithm. We put all the well-trained agents using different algorithms into the 6-agent CN environment with well-trained adversary policies $\chi^*$ to perturb states. 
The result is shown in Table~\ref{tab_testing_mean_best_6_agents}. Our RMA3C policy achieves up to 9.57\% higher mean episode reward than the baselines with well-trained adversarial state perturbations. The result shows that our RMA3C algorithm achieves higher robustness for a multi-agent system under adversarial state perturbations.

\section{Discussions and Future Work}
\label{sec:discussion}

In this section, we add several discussions of our work as a first attempt to study different solution concepts of the SAMG problem. We also point out several future directions for the SAMG problem.


\subsection{GDA Convergence}
In our RMA3C algorithm, we use Gradient Descent Ascent (GDA) optimizer~\citep{lin2020gradient} to update parameters for each agent's actor network and the adversary network. Each agent updates the actor network to maximize the worst-case expected state value in Definition~\ref{def:mean_episode_reward}, while the corresponding adversary updates the adversary network to minimize the worst-case expected state value. How to solve a non-convex non-concave minimax problem is a very challenging and not yet well-solved problem. To the best of our knowledge, the GDA optimizer is currently one of the most widely used and accepted optimizers for this type of problem, though it is not guaranteed to always converge~\citep{jin2020local, razaviyayn2020nonconvex,lin2020gradient}. Our RMA3C algorithm with GDA optimizer shows performance improvement in terms of policy robustness in our experiments. Note that we only use the GDA optimizer as a tool in our algorithm by leveraging the existing literature on solving non-convex non-concave minimax problems. Future advances of numerical algorithms and solvers for this kind of minimax problem will also benefit our algorithm by replacing the GDA optimizer with new advances.

\subsection{Non-Markovian Policy}
In this work, we give the first attempt to focus on the Markovian policy under adversarial state perturbations. Dealing with the non-Markovian policy will significantly complicate the problem. We are aware of the suboptimality of Markovian policies, however, considering the computational cost of the non-Markovian policy of MARL, we decide to focus on Markovian policies in this work for computational tractability. Moreover, as shown in Proposition~\ref{prop:pomdp}, our SAMG problem is different from a Dec-POMDP. Considering a non-Markovian policy based on the observation-action history may not give an advantage to the agents. For example, for the two-agent two-state game in Fig.~\ref{fig:toyexample}, if the adversary randomly perturbs the state with $\chi^i(s_1|s_2) = 0.5$ for $i = 1,2$, then the agents still only have a $50\%$ chance to guess the true state even with observation-action history. Considering another example for the two-agent two-state game in Fig.~\ref{fig:toyexample}, if the adversary perturbs all states to state $s_1$ with $\chi^i(s_1|s_2) = 1$ and $\chi^i(s_1|s_1) = 1$ for $i = 1,2$, then the agents cannot get extra information for the true state even with observation-action history. We leave the formal analysis of non-Markovian, non-stationary policy as future work.

\subsection{Non-collaborative Game}
In the problem formulation, we consider a collaborative game, where all agents share one stage-wise reward function. The new objective for the SAMG, the worst-case expected state value under state perturbations, is well-defined as proved in Theorem~\ref{theorem:existence_mean_episode}. For non-collaborative games, if each agent has its own reward function, and adversary $i$ wants to minimize the total expected return of agent $i$, then for a fixed agent policy $\pi$, the $n$ adversaries are playing a Markov game. In this case, only the Nash equilibrium exists among $n$ adversaries, but optimal adversary policy may not exist. Therefore, for non-collaborative games, the worst-case expected state value is not well-defined. Even though the worst-case expected state value is not well-defined for non-collaborative games, the experiment results of the competitive games and mixed-cooperative-competitive game environments in Table~\ref{tab_testing_mean_norm} also show that our RMA3C algorithm can get larger mean episode rewards in non-collaborative games under adversarial state perturbations. Hence, our RMA3C algorithm can increase the robustness of policies of non-collaborative games in empirical experiments. We leave the formal analysis of the non-collaborative games as future work.

\end{document}